\DeclareMathOperator*{\argmin}{\arg\!\min}
\DeclareMathOperator*{\argmax}{\arg\!\max}
\newcommand{\widesim}[2][1.5]{
  \mathrel{\overset{#2}{\scalebox{#1}[1]{$\sim$}}}
}
\newcommand\blfootnote[1]{%
  \begingroup
  \renewcommand\thefootnote{}\footnote{#1}%
  \addtocounter{footnote}{-1}%
  \endgroup
}
\begin{document}

\fontsize{11}{13pt}\selectfont

%
%
%
%
%
%
%

\title{On Statistical Learning of Simplices:\\Unmixing Problem Revisited}
\author{
Amir~Najafi
\footnotemark[1]
\and
Saeed~Ilchi
\footnotemark[1]
\and
Amir~H.~Saberi
\footnotemark[2]
\and
Seyed~Abolfazl~Motahari
\footnotemark[1]
\and
Babak~H.~Khalaj
\footnotemark[2]
\and
Hamid~R.~Rabiee
\footnotemark[3]
}

\date{}
\maketitle

\vspace*{-2mm}

\begin{small}
\begin{center}
$^*$Data Analytics Lab (DAL), Computer Engineering Dept.,\\
$^\ddagger$Data science and Machine learning Lab (DML), Computer Engineering Dept.,\\[0mm]
$^\dagger$Electrical Engineering Dept.,\\[1mm]
Sharif University of Technology, Tehran, Iran
\end{center}
\end{small}

\vspace*{6mm}

\newtheorem{thm}{Theorem}[section]
\newtheorem{thm2}{Theorem}
\newtheorem{corl}{Corollary}
\newtheorem{note}[thm2]{Note}
\newtheorem{lemma}{Lemma}
\newtheorem{definition}{Definition}
\newtheorem{conj}{Conjecture}

\begin{abstract}
We study the sample complexity of learning a high-dimensional simplex from a set of points uniformly sampled from its interior. Learning of simplices is a long studied problem in computer science and has applications in computational biology and remote sensing, mostly under the name of `spectral unmixing'. We theoretically show that a sufficient sample complexity for reliable learning of a $K$-dimensional simplex up to a total-variation error of $\epsilon$ is $O\left(\frac{K^2}{\epsilon}\log\frac{K}{\epsilon}\right)$, which yields a substantial improvement over existing bounds. Based on our new theoretical framework, we also propose a heuristic approach for the inference of simplices. Experimental results on synthetic and real-world datasets demonstrate a comparable performance for our method on noiseless samples, while we outperform the state-of-the-art in noisy cases.\blfootnote{An implementation of our method is available at: \texttt{github.com/seyedsaberi/simplex}}\blfootnote{E-mails:~$^*$\{najafy,silchi\}@ce.sharif.edu,~~$^\dagger$sah.saberi@ee.sharif.edu,~~\{$^*$motahari,$^\dagger$khalaj,$^\ddagger$rabiee\}@sharif.edu}
\end{abstract}

\section{Introduction}
\label{sec:intro}

High-dimensional measurements from a physical system can sometimes be thought as convex combinations of a number of unknown sources, where profiles associated with each source might not be easy to assess, separately. Even though both the source profiles and combination weights are assumed to be unknown, however, one can still attempt to infer them through collection and careful analysis of a large number of i.i.d.  measurements. This computational problem, generally known as the `unmixing problem', arises in many areas including hyper-spectral remote sensing \cite{li2015minimum,ambikapathi2011chance}, mixture modeling, and analysis of tumor heterogeneity in computational biology and bioinformatics \cite{tolliver2010robust,satas2017tumor,zuckerman2013self}. In all the above cases, unmixing the data refers to inferring all the unknown source profiles as well as the weights for each observed sample.

From a geometric point of view, this problem can be formulated as learning of a high-dimensional simplex. A $K$-dimensional simplex is defined as the set of all convex combinations of $K+1$ points in $\mathbb{R}^K$, also called vertices. For example, simplices with $K=2$ and $K=3$ correspond to triangles and tetrahedrons, respectively. Learning of a simplex refers to inferring its vertices through observing randomly chosen points from its interior. This problem is also closely related to solving the following set of equations:
\begin{equation}
\label{eq:firstEq}
\boldsymbol{\Theta}\boldsymbol{p}_i=\boldsymbol{X}_i\quad,\quad i=1,\ldots,n,
\end{equation}
where  $\boldsymbol{X}_i\in\mathbb{R}^K$ represents the $i$th observed point, each column of the $K\times\left(K+1\right)$ matrix $\boldsymbol{\Theta}$ denotes a source profile (or alternatively a vertex of the simplex), and $\boldsymbol{p}_i\in\left[0,1\right]^{K+1}$ is the $i$th weight vector with $\boldsymbol{1}^T\boldsymbol{p}_i=1$. Here, the matrix $\boldsymbol{\Theta}$ and all the weight vectors $\boldsymbol{p}_i$ are assumed to be unknown. In order to transform the problem into a statistical setting, we assume $\boldsymbol{p}_i$s are generated independently from a uniform Dirichlet distribution which means data points are distributed uniformly inside the simplex. It is easy to show that \eqref{eq:firstEq}, for any $n$, has an infinite number of solutions. However, as we show it in proceeding sections, the solution which corresponds to the minimum volume of the simplex specified by $\boldsymbol{\Theta}$ is the maximum-likelihood estimator of the above-mentioned statistical inference problem. 

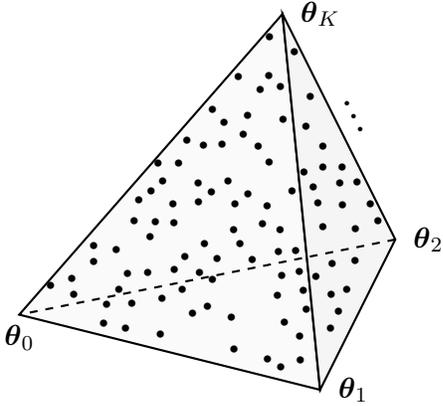
\begin{SCfigure}[3.5][t]
\begin{tikzpicture}
\draw[thick,fill=gray!4!white] (0,0) node[anchor=north]{$\boldsymbol{\theta}_0$} -- (4,-1) node[above=0mm,right=1mm]{$\boldsymbol{\theta}_1$} -- (3.5,4) node[right=1mm]{$\boldsymbol{\theta}_K$} -- cycle;
\draw[thick,fill=gray!8!white] (4,-1) -- (3.5,4) -- (5,1) node[pos=0.5,sloped,above] {$\boldsymbol{\cdots}$} node[right=1mm]{$\boldsymbol{\theta}_2$} -- cycle;
\draw[style=dashed,thick] (5,1) -- (0,0);
\foreach \Point in{
(1.90741465961273,1.73745885357984),
(2.69903859492206,2.21889977275286),
(4.03775517823762,1.12137106833491),
(4.03864635306647,2.19406934928661),
(4.24614288704699,-0.00250798953014034),
(3.20268197267953,2.94311813574237),
(3.47130335218359,1.07714703283408),
(2.86913091373367,1.43366494487506),
(4.49333893651086,1.7174227143248),
(1.71018072093036,0.512234979069354),
(4.77054205147327,1.2042611073166),
(3.17767893191886,2.19199294186685),
(4.46890423469414,1.09594322065329),
(3.32749867877415,3.64960949336287),
(3.09774285306258,0.686759558014612),
(4.14706111451999,0.220441688123797),
(2.3829937419268,1.45302231214443),
(4.26164112298117,1.70823764677055),
(2.85820097075952,-0.502100319848783),
(2.30416454584493,-0.0292698047402241),
(4.14090503693051,-0.231384473579646),
(3.29524393805099,1.40252053931201),
(3.9739245558471,1.44478265865783),
(4.2940639341424,1.91748214502483),
(1.81275111208662,1.18489118051962),
(3.8122160518018,2.46357977077624),
(3.4719807578784,2.98205372620071),
(3.49182580051464,0.477981673833099),
(2.22917323540034,2.27365829210906),
(2.6314754848722,0.69495164342519),
(3.04130940987146,1.55689143561426),
(3.65865366062066,3.44988274261963),
(1.49710541819306,0.107391325466616),
(4.66953822270851,1.43193944839711),
(2.77572347719362,1.73629797877952),
(3.03510339468836,2.60322619409367),
(3.81597215255144,0.0191078270627377),
(4.03774703682958,1.91131718605131),
(2.04710767329152,1.42801900499616),
(2.72285645243733,2.51482243394361),
(2.05660683807499,1.18195274380298),
(1.90084731347956,0.167851526774824),
(0.726422928726824,0.220751084341077),
(1.16364422348234,0.0957399228680816),
(1.29297574565518,0.81439646916941),
(2.06433352679491,0.548661057908273),
(0.990171290921141,0.877892252039494),
(2.55825203118761,1.59122931920439),
(3.32276768646299,1.97749797510596),
(2.93694671924722,0.893449692415673),
(3.23639765928263,1.16265092439203),
(4.35581129783308,0.283676194877927),
(0.701494374341553,0.441133705818892),
(3.87116155454799,2.86046968899116),
(4.16061395052986,0.637619247307414),
(2.85060083061857,1.08651148398954),
(3.47742634049089,-0.747194153914992),
(2.56855377105976,2.68097011392162),
(4.46556089105924,0.674736182097982),
(2.39606826330704,0.517471462104011),
(3.67529894399574,0.802046029587339),
(3.7281220184346,-0.335923994236394),
(2.16758136574162,0.288959558821939),
(2.56808095287322,0.415136305391028),
(1.87851481288836,-0.311823757082478),
(1.84573743723228,1.97065814457406),
(3.50872620396872,2.55537517125035),
(3.52601634790225,-0.145522882496391),
(3.30206421341264,-0.633242613245006),
(0.417119793960509,0.346542475770157),
(3.94407480200074,0.456468574416006),
(1.74126990762198,0.984247587976822),
(3.7883522893738,1.85589325290348),
(1.59012499480928,1.48235023681536),
(3.44615623084556,0.794493215281222),
(0.99124329396788,0.658526341204642),
(2.82283324980944,-0.0786503557972336),
(1.12134046589139,-0.155186161990587),
(1.75375008442578,1.60259541400249),
(4.29734682337578,1.41967227753544),
(3.90847560792292,1.65490331384737),
(2.91490511060986,3.12387590363429),
(3.43388427059138,0.209834063997793),
(2.46294050856996,0.900048206525934),
(2.97987560916042,2.34696142688778),
(3.73648017414358,-0.646228587799417),
(2.50501818999204,0.0612180307217374),
(1.53037956718171,1.20249779158606),
(1.44230135984124,0.504839352165002),
(2.20317740235785,0.730566218721534),
(3.32086475205873,3.13198952331398),
(3.62943794717161,1.60024896415198),
(2.12209919868567,1.9639127526732),
(3.7778471458275,0.271258435438607),
(2.45228178774013,2.21167259993817),
(1.41392552976034,-0.222511547810455),
(3.72295961638335,0.529771954514881),
(2.38201522699052,1.69854113238922),
(1.32424365610343,0.28472516198887)}{
    \node at \Point {\tiny\raisebox{1ex}{\textbullet}};
}
\end{tikzpicture}
\caption{\small{Learning of simplices from a geometric viewpoint: Assume a $K$-dimensional simplex specified by $K+1$ latent vertices $\boldsymbol{\theta}_0,\boldsymbol{\theta}_1,\ldots,\boldsymbol{\theta}_K\in\mathbb{R}^K$ which correspond to the columns of the vertex matrix $\boldsymbol{\Theta}$. Given $n$ random points uniformly sampled from the interior of the simplex, our aim is to find an estimator for $\boldsymbol{\Theta}$, say $\hat{\boldsymbol{\Theta}}$, such that the uniform probability measures over the simplices specified by $\boldsymbol{\Theta}$ and $\hat{\boldsymbol{\Theta}}$ have a total variation distance of at most $\epsilon$ with probability at least $1-\zeta$, for any given $\epsilon,\zeta>0$.}\label{fig:mainProblem}}
\end{SCfigure}

Assume a uniform probability distribution in $\mathbb{R}^K$ which is supported over the simplex specified by $\boldsymbol{\Theta}$. Then, our aim is to design an algorithm that with probability at least $1-\zeta$ estimates this probability distribution with a total variation error of at most $\epsilon$, for any given $\epsilon,\zeta>0$, given that the number of observations satisfies $n\ge\mathrm{poly}\left(\epsilon^{-1},\log\left(1/\zeta\right)\right)$. In this setting, $\epsilon$ also coincides with the normalized Nikodym distance, or the relative volume of the symmetric difference between the estimated simplex and the true one \cite{brunel2013adaptive}. The geometric interpretation of this problem is illustrated in Figure \ref{fig:mainProblem}. Some interesting questions in this setting would be: How many samples are required for the above-mentioned estimation problem (sample complexity)? or, How well does any particular algorithm perform if observations become noisy? In this study, we show that a sufficient sample complexity to ensure a total variation error of at most $\epsilon$ with probability at least $1-\zeta$ is 
\begin{equation}
n\ge O\left(\frac{1}{\epsilon}\left[
K^2\log\frac{K}{\epsilon}+\log\frac{1}{\zeta}
\right]\right),
\end{equation}
where achievability is guaranteed by the Maximum Likelihood Estimator (MLE). Our approach is based on tools from statistical learning theory and high-dimensional geometry. To the best of our knowledge, this result yields a significant improvement over the only existing bound of \cite{anderson2013efficient}, which is $n\ge O\left(K^{22}/\epsilon^2\right)$. However, finding the MLE requires solving a combinatoric optimization problem and is proved to have an exponential-time computational complexity \cite{packer2002np}, while the method of \cite{anderson2013efficient} is polynomial-time. Motivated by this fact, we also propose a novel continuously-relaxed version of the original ML problem, followed by a heuristic Gradient Descent (GD)-based algorithm for its numerical optimization. Theoretical sample complexity for the proposed relaxed-MLE is $O\left(\epsilon^{-2}\log\frac{1}{\epsilon}\right)$, while maintaining the same polynomial dependence on dimension $K$ as that of MLE. We have tested our method on a number of synthetic and real-world datasets. Experimental results demonstrate an acceptable performance for our method in the noiseless case, while they show a considerable superiority over rival techniques when data samples are noisy.

The paper is organized as follows: Section \ref{sec:intro:LR} reviews the related works. Section \ref{sec:notation} explains the mathematical notation and also formally defines our problem. Our proposed method and theoretical results are presented in Section \ref{sec:main}. Section \ref{sec:exp} is devoted to experimental results on both synthetic and real-world datasets. Finally, conclusions are made in Section \ref{sec:conc}.


\subsection{Related Works}
\label{sec:intro:LR}

Previous works in this domain can be divided into two separate categories: papers that follow a theoretical approach, and those trying to solve a real-world problem via heuristics.

From a theoretical viewpoint, in a seminal work on learning linear transformations \cite{frieze1996learning}, authors have proved the efficient learnability\footnote{In this paper, the term {\it {learnable}} refers to a learning task with a polynomial sample complexity w.r.t. $\epsilon$ and $\zeta$. This notation is consistent with that of \cite{mohri2012foundations}. A learnable case which also has a polynomial computational complexity is referred to as {\it {efficiently learnable}}.} of {\it {hyper-cubes}} via $O\left(K^8\right)$ samples. They suggested a possibly similar result for simplices. However, authors in \cite{anderson2013efficient} have recently proved that a sufficient sample size for efficient learnability of simplices is $n\ge O\left(K^{22}\right)$, which is the only existing bound with respect to dimension $K$ prior to this paper. 
Regarding the estimation error $\epsilon$, it is known that learning of high-dimensional polytopes requires $\tilde{O}\left(\epsilon^{-1}\right)$ \footnote{We use $\tilde{O}\left(\cdot\right)$ to hide possible logarithmic dependencies} samples \cite{brunel2016adaptive,brunel2018methods}. In particular, Theorem 1 and Corollary 1 of \cite{brunel2016adaptive} shows that the following bound holds for the expected total-variation error (or equivalently, the expected relative Nikodym error) of estimating a $K$-dimensional polytope with $r$ vertices ($r\ge K
+1$): 
\begin{equation}
\epsilon~{\leq}_{K}~\frac{r\log n}{n}.
\label{eq:revIneq}
\end{equation}
Here, ${\leq}_{K}$ indicates that the inequality holds up to a factor that only depends on dimension $K$. The bound is achievable via the ML estimator which in this case would be the minimum-volume polytope with at most $r$ vertices that includes all the data points. To the best of our knowledge, the (seemingly) optimal dependence of sample complexity on $K$ has been left as an open problem. In this paper, we introduce a new approach to solve this problem for the special case of simplices, i.e. when $r=K+1$. Our method is based on statistical analysis of the $K+1$ facets of a $K$-dimensional simplex; However, this technique may not work for the broader case of $K$-dimensional $r$-polytopes with $r>K+1$, due to the rapidly increasing number of facets as a function of vertex number $r$ in high dimensions. To the best of our knowledge, the best sample complexity results for learning of $r$-polytopes when $r$ is strictly larger than $K+1$ is still \eqref{eq:revIneq}.

A nice line of research focuses on estimating convex bodies in general (and simplices, in particular) through the {\it {convex~hull}} of $n$ uniformly sampled data points, denoted by $\hat{K}_n$. For example, it has been shown that the random polytope $\hat{K}_n$ is rate optimal over the class of all convex bodies in $\mathbb{R}^K$ in a minimax sense, when Nikodym metric is used \cite{brunel2016adaptive}. There are also other studies on the estimation power and/or the expected number of vertices of Convex Hull Estimators (CHE) for different classes of geometric shapes such as ellipsoids and polytopes; See  \cite{groemer1974mean,barany1993random,barany1988convex}. As long as one is concerned about estimating the data generating distribution as a whole, without any constraint on the shape of its support, CHE is shown to attain the following total-variation error bound in an efficient (polynomial) time (Theorem 2 of \cite{barany1993random}):
\begin{equation}
\label{eq:convEst}
\epsilon
~\lesssim~
\frac{K^2}{n}\left(\frac{\log n}{K+1}\right)^{K-1}
+
\frac{\left(\log n\right)^{K-1}}{n}.
\end{equation}
We show that, for the case of simplices, MLE achieves a better sample complexity bound by removing the exponential terms w.r.t. $K$ in the r.h.s. of \eqref{eq:convEst}. However, unlike CHE, MLE is NP-hard. Once again, we should reiterate that CHE does not necessarily output a simplex and thus is not directly linked to the approach that we follow in this paper. In this work, we are interested in estimators in the class of all $K$-dimensional {\it {simplices}} and not the far richer class of all polytopes in $\mathbb{R}^K$.

Learning of simplices also has close ties with Nonnegative Matrix Factorization (NMF) research \cite{ang2019algorithms,wu2019stochastic}. In fact, with proper normalization, one can transform many NMF problems into learning of a simplex (or more generally, a polytope) as long as there exists a minimum-volume constraint or related regularizers in the formulation of their corresponding objective functions. A nice survey on NMF methods used for data clustering can be found in \cite{li2018nonnegative}. Most of the existing works on NMF focus on its real-world applications, while fewer interesting works have considered its theoretical identifiability issues \cite{fu2018identifiability,fu2019nonnegative}. In addition, a nice study on the generalization theory of dictionary learning and other matrix factorizations can be found in \cite{gribonval2015sample}. In proceeding sections, we propose a relaxed scheme for learning of simplices that has some mathematical similarities to the regularization techniques that are known to be useful in this line of research; See, for example, the volume regularizer proposed in \cite{ang2019algorithms}. It is worth mentioning that the output of NMF, in general, may span a larger vector space than that of the data samples, which is another subtle difference between learning of simplices and NMF research. In any case, not much work has been done so far on the sample complexity of NMF problems, which means our work could be helpful for researchers working in this area.

From an algorithmic point of view, the ML estimator in the noiseless setting is equivalent to finding the minimum volume simplex that contains all the data points. This task is shown to be NP-hard, irrespective of the input representation which could be either via  facets or vertices \cite{packer2002np}. In this regard, \cite{chan2009convex} introduced an LP-relaxation that computes the determinant of a modified vertex matrix instead of the volume. Determinant can then be written as the sum of matrix co-factors and consequently optimized in a row-wise manner. However, authors do not provide any theoretical guarantees, while experiments are used for justification of their method. Authors of \cite{nguyen2009learning} studied a similar problem in lattice-based cryptography, where inputs are generated from a parallelepiped instead of a simplex. Our problem also shares some similarities to the Blind Source Separation (BSS) research \cite{gribonval2015sample}. In fact, \cite{frieze1996learning} has already shown that at least one particular problem from Independent Component Analysis (ICA) area can be transformed into learning of a high-dimensional hyper-cube from uniform samples. However, assumptions on the statistical generation of weight vectors $\boldsymbol{p}_i$ are still crucially different between a hyper-cube and a simplex. As a result, in BSS research one usually employs high-order statistical moments for capturing the source signals \cite{hyvarinen2004independent}, while we follow a different approach which is described in Section \ref{sec:main}.

From a more applied perspective, learning of simplices is of practical interest in a variety of applications, including hyper-spectral imaging \cite{zhang2017robust}. Hyper-spectral cameras capture electromagnetic energy scatters from a scene in multiple wavelengths, where the spectrum of each pixel is a combination of reflected (or radiated) patterns of its basic elements \cite{ambikapathi2011chance,agathos2014robust}. Recently, this field has attracted an intensive research interest, which are mostly centered on analysis of the minimum volume simplex, c.f. \cite{li2015minimum,ambikapathi2011chance,bioucas2012hyperspectral,lin2013endmember}. Another tightly related application to our paper is the analysis of common patterns in cancer tumors \cite{satas2017tumor}. Tumors can have a high cell-type heterogeneity, and a sufficient knowledge of the genetic characteristics that correspond to each cell-type is vital for recommending an effective treatment. However, biological datasets are mostly in {\it {bulk format}} which means each sample is an aggregation of cell populations from all the different cell-types with unknown combination weights. Again, the idea of finding the smallest inclusive simplex for capturing these hidden characteristics is exploited in several recent articles \cite{tolliver2010robust,satas2017tumor,zuckerman2013self,schwartz2010applying}.

\section{Notation and Definitions}
\label{sec:notation}

For $K\in\mathbb{N}$, let us denote the number of vertices in our model by $K+1$. Without loss of generality\footnote{Points within a $K$-simplex lie on a $K$-dimensional linear subspace, which is almost surely identifiable and efficiently learnable as long as $n\ge K+1$.}, the dimensionality of data points can be assumed to be $K$. Let us denote $\boldsymbol{\Phi}$ as the set of all discrete $\left(K+1\right)$-dimensional probability mass functions, i.e. $\boldsymbol{\Phi}\triangleq \left\{\boldsymbol{p}\in\mathbb{R}^{K+1}\big\vert~ \sum_{k}p_k=1,~p_k\ge0 \right\}$. 
$\boldsymbol{\Phi}$ is generally referred to as the {\it {standard simplex}}. 
Assume $\boldsymbol{\theta}_0,\ldots,\boldsymbol{\theta}_K\in\mathbb{R}^K$, and let
$\boldsymbol{\Theta}\triangleq\left[\boldsymbol{\theta}_0\vert\cdots\vert\boldsymbol{\theta}_K\right]$ to denote the vertex matrix, where the $k$th column of $\boldsymbol{\Theta}$ represents the $k$th vertex.
We define $\mathcal{S}$ as a $K$-simplex with the vertex matrix $\boldsymbol{\Theta}$ as
\begin{equation}
\mathcal{S}=\mathcal{S}\left(\boldsymbol{\Theta}\right)\triangleq
\left\{\boldsymbol{x}\in\mathbb{R}^K\bigg\vert~
\boldsymbol{x}=\sum_{k}p_k\boldsymbol{\theta}_{k},~\boldsymbol{p}\in\boldsymbol{\Phi}
\right\}.
\end{equation}
Also, $\mathbb{S}_K$ represents the set of all $K$-simplices in $\mathbb{R}^K$. For a $K$-simplex $\mathcal{S}\in\mathbb{S}_K$, $\mathrm{Vol}\left(\mathcal{S}\right)$ denotes the Lebesgue measure (or volume) of $\mathcal{S}$. For $\mathcal{S}\in\mathbb{S}_K$, let $\mathcal{S}_{-k}$ be the $k$th polygonal facet of $\mathcal{S}$, or equivalently, the $\left(K-1\right)$-simplex obtained by removing the $k$th vertex of $\mathcal{S}$. 

In Section \ref{sec:main}, we argue that in continuously-relaxed regimes, learning of a simplex may heavily depend on its level of geometric regularity. In other words, simplices with more or less equally-sized facets are much easier to learn, compared to those which include very acute corners. Inspired by the {\it {isoperimetric}} inequality in geometry \cite{osserman1978isoperimetric}, let us define the $\left(\underline{\lambda},\bar{\lambda}\right)$-isoperimetricity property for a $K$-simplex as follows:
\begin{definition}
\label{def:shape}
A $K$-simplex $\mathcal{S}\left(\boldsymbol{\Theta}\right)\in\mathbb{S}_K$ for $\boldsymbol{\Theta}\triangleq\left[\boldsymbol{\theta}_0\vert\cdots\vert\boldsymbol{\theta}_K\right]$ is said to be $\left(\underline{\lambda},\bar{\lambda}\right)$-isoperimetric for some $\underline{\lambda},\bar{\lambda}>0$, if
\begin{align*}
\max_{k, k'}~\left\Vert
\boldsymbol{\theta}_k-\boldsymbol{\theta}_{k'}
\right\Vert_2
~\leq~
\underline{\lambda} K \mathrm{Vol}\left(\mathcal{S}\right)^{\frac{1}{K}}
\quad,\quad
\max_{k}~\mathrm{Vol}\left(\mathcal{S}_{-k}\right)
~\leq~
\bar{\lambda}\mathrm{Vol}\left(\mathcal{S}\right)^{\frac{K-1}{K}}.
\end{align*}
\end{definition}
The essence of $\left(\underline{\lambda},\bar{\lambda}\right)$-isoperimetricity property is to ensure that a given $K$-simplex is comparably stretched along all the $K$ dimensions of $\mathbb{R}^K$. According to Lemma \ref{lemma:isoperimBound} (see Appendix \ref{app:aux}), for a perfectly regular simplex with equal side lines and all $K\in\mathbb{N}$, $\underline{\lambda}$ and $\bar{\lambda}$ can be chosen to be as small as $1$ and $e$, respectively. 

For any simplex $\mathcal{S}\in\mathbb{S}_K$, let $\mathcal{H}_k\triangleq\left\{\boldsymbol{x}\in\mathbb{R}^K\vert~\boldsymbol{w}_k^T\boldsymbol{x}+b_k=0\right\}$ for $k=0,\ldots,K$ represent the hyper-plane which encompasses the $k$th facet of $\mathcal{S}$. This way, $\boldsymbol{w}_k\in\mathbb{R}^K$ denotes a normal vector (with $\left\Vert\boldsymbol{w}_k\right\Vert_2=1$) and $b_k\in\mathbb{R}$ is a corresponding bias value. Vectors $\boldsymbol{w}_k$ are assumed to be outward w.r.t. $\mathcal{S}$. Then, the following defines the {\it {planar distance}} of a point in $\mathbb{R}^K$ from the simplex $\mathcal{S}$:
\begin{definition}
For any point $\boldsymbol{x}\in\mathbb{R}^K$, let us define the planar distance of $\boldsymbol{x}$ from $\mathcal{S}$, denoted by $d_{\mathcal{S}}\left(\boldsymbol{x}\right)$, as
\begin{equation}
d_{\mathcal{S}}\left(\boldsymbol{x}\right)
\triangleq
\max\left\{
0,~
\max_{k}~\boldsymbol{w}^T_k\boldsymbol{x}+b_k
\right\}.
\end{equation}
\label{def:planarDist}
\end{definition}
\begin{SCfigure}[1.3][t]
\begin{tikzpicture}
\draw[thick,fill=gray!20!white] (0,0) node[above=3mm,right=6mm]{$\mathcal{S}\left(\boldsymbol{\Theta}\right)$} -- (3,-1) -- (1,2) -- cycle;
\draw[style=dashed,thick] (-0.7050344,-0.2920348) -- (4.2243721,-1.9351703) -- (0.9381011,2.9942362) -- cycle;
\draw[->,thick] (1.083975,0.9726499) -- (1.50,1.25) node[above=1.5mm,right=-0.15mm]{$\delta$};
\draw[->,thick] (2.332050,1.80470) --  (1.916025,1.52735);
\end{tikzpicture}
\hspace*{7mm}
\caption{\small{Graphical illustration of the {\it {planar distance}} $d_{\mathcal{S}\left(\boldsymbol{\Theta}\right)}\left(\cdot\right)$ in Definition \ref{def:planarDist}. The gray triangle shows an arbitrary two-dimensional simplex $\mathcal{S}\left(\boldsymbol{\Theta}\right)\in\mathbb{S}_2$. For some $\delta>0$, the dashed contour corresponds to the set  $\left\{\boldsymbol{x}\in\mathbb{R}^2\vert~d_{\mathcal{S}\left(\boldsymbol{\Theta}\right)}\left(\boldsymbol{x}\right)=\delta\right\}$. As can be seen, the set of points in $\mathbb{R}^K$ with the same positive planar distance from a simplex $\mathcal{S}\in\mathbb{S}_K$ is in fact the boundary of another simplex whose facets are parallel to those of $\mathcal{S}$.}\label{fig:planarDist}}
\end{SCfigure}
We always have $d_{\mathcal{S}}\left(\boldsymbol{x}\right)\ge0$, and since each $\boldsymbol{w}_k$ is an outward normal vector w.r.t. $\mathcal{S}$, $d_{\mathcal{S}}\left(\boldsymbol{x}\right)=0$ holds if and only if $\boldsymbol{x}\in\mathcal{S}$. Also, it is easy to check that for any fixed $d_0\ge0$, the set $\left\{\boldsymbol{x}\in\mathbb{R}^K\vert~d_{\mathcal{S}}\left(\boldsymbol{x}\right)\leq d_0\right\}$ is also a simplex which encompasses $\mathcal{S}$ (see Figure \ref{fig:planarDist}).
For a simplex $\mathcal{S}\in\mathbb{S}_K$, $\mathbb{P}_{\mathcal{S}}$ denotes the uniform probability distribution which is supported over $\mathcal{S}$. Let $\rho_{\mathcal{S}}$ be the probability density function associated to $\mathbb{P}_{\mathcal{S}}$, then
\begin{equation}
\rho_{\mathcal{S}}\left(\boldsymbol{x}\right)
\triangleq
\frac{\boldsymbol{1}_{\mathcal{S}}\left(\boldsymbol{x}\right)}
{\mathrm{Vol}\left(\mathcal{S}\right)}\quad\mathrm{for}\quad
\forall\boldsymbol{x}\in\mathbb{R}^K,
\end{equation}
where $\boldsymbol{1}_{\mathcal{S}}\left(\boldsymbol{x}\right)$ is the indicator function of the $K$-simplex $\mathcal{S}$ which returns $1$ if $\boldsymbol{x}\in\mathcal{S}$, and zero otherwise.
Also, $\mathcal{D}_{\mathrm{TV}}\left(\cdot,\cdot\right)$ represents the total-variation distance between two probability distributions. 

Now, we can formally define the problem that has been tackled in this paper.
Assume $\boldsymbol{X}_1,\ldots,\boldsymbol{X}_n\in\mathbb{R}^K$ to be $n$ i.i.d. samples which are generated uniformly from $\mathcal{S}_T\in\mathbb{S}_K$, i.e.
$\boldsymbol{X}_1,\ldots,\boldsymbol{X}_n\widesim[1.5]{}
\mathbb{P}_{\mathcal{S}_T}$. The problem is to find an approximation of $\mathcal{S}_T$, denoted by ${\mathcal{S}}^*$, from the dataset $\boldsymbol{D}\triangleq\left\{\boldsymbol{X}_1,\ldots,\boldsymbol{X}_n\right\}$ such that with probability at least $1-\zeta$ the total variation between 
$\mathbb{P}_{\mathcal{S}^*}$ and $\mathbb{P}_{\mathcal{S}_T}$
is less than $\epsilon$. 

\section{Statistical Learning of Simplices: Method and Results}
\label{sec:main}


In this section, we first formulate the MLE for estimation of $\mathcal{S}_T$ from the data data points in $\boldsymbol{D}$. We then give a new sample complexity bound for MLE which seems to be optimal with respect to both $K$ and $\epsilon$. Then, we propose a novel relaxation for the ML estimator that has the same sample complexity bound as that of MLE, at least with respect to dimension $K$.

Given a dataset $\boldsymbol{D}$, the ML estimator for $\mathcal{S}_T$, which we denote by $\mathcal{S}^*_{\mathrm{ML}}=\mathcal{S}^*_{\mathrm{ML}}\left(\boldsymbol{D}\right)$ 
would be the maximizer of the log-likelihood function $\log\mathbb{P}_{\mathcal{S}}\left(\boldsymbol{D}\right)$ according to the following formulation:
\begin{gather}
\label{eq:mainMLmin}
\mathcal{S}^{*}_{\mathrm{ML}}
\triangleq
\argmax_{\mathcal{S}\in\mathbb{S}_K}~\left\{
\log\rho_{\mathcal{S}}\left(\boldsymbol{D}\right)=
\log\prod_{i=1}^{n}
\rho_{\mathcal{S}}\left(\boldsymbol{X}_i\right)
=\sum_{i=1}^{n}\log\boldsymbol{1}_{\mathcal{S}}\left(\boldsymbol{X}_i\right)
-n\log\mathrm{Vol}\left(\mathcal{S}\right)\right\}.
\end{gather}
It is easy to see that the optimization in \eqref{eq:mainMLmin} aims to find the smallest simplex (in terms of volume), which contains all the data points in $\boldsymbol{D}$: For a simplex $\mathcal{S}\in\mathbb{S}_K$, even if a single data point happens to be outside of $\mathcal{S}$, then the log-likelihood of $\mathcal{S}$ being the true simplex becomes $-\infty$. Moreover, among those simplices that contain all the data points, the one(s) with the smallest volume would be the maximizer(s) of \eqref{eq:mainMLmin}. Regardless of the computational burden that is required to find the MLE, one can still be interested in a sufficient number of samples in $\boldsymbol{D}$ which guarantees the suitability of MLE in a PAC sense. In this regard, Theorem \ref{corl:ML} states one of our main contributions in this paper, where we derive a new sample complexity bound for MLE as a function of both dimension $K$ and estimation error $\epsilon$.
\begin{thm2}[Sample Complexity of MLE]
\label{corl:ML}
Assume a $K$-simplex $\mathcal{S}_T\in\mathbb{S}_K$ and let $\boldsymbol{X}_1,\ldots,\boldsymbol{X}_n$ be $n$ i.i.d. samples drawn from $\mathbb{P}_{\mathcal{S}_T}$. Assume there exist $\epsilon,\zeta>0$, such that
\begin{equation*}
n\ge O\left( \frac{1}{\epsilon}\left[{K^2\log\left(\frac{K}{\epsilon}\right)+\log\frac{1}{\zeta}}\right]\right).
\end{equation*} 
Then, with probability at least $1-\zeta$, the maximum likelihood estimator of $\mathcal{S}_T$, denoted by $\mathcal{S}^*_{\mathrm{ML}}$, satisfies
$\mathcal{D}_{\mathrm{TV}}\left(\mathbb{P}_{\mathcal{S}_T},\mathbb{P}_{\mathcal{S}^*_{\mathrm{ML}}}\right)\leq \epsilon$.   
\end{thm2}

The proof of Theorem \ref{corl:ML} is given in Appendix \ref{appendix:Theorems}. Sample complexity is shown to be $O\left(K^2\log K\right)$ w.r.t. the dimension $K$, while the dependence on estimation error $\epsilon$ has remained optimal \cite{brunel2018methods}. Interestingly, the given guarantees on the accuracy of MLE hold regardless of the shape of the simplex and does not impose any geometric constraints on $\mathcal{S}_T$. 

However, solving for \eqref{eq:mainMLmin} is proved to be NP-hard, and thus impractical in real-world situations \cite{packer2002np}. In the remainder of this section, we try to propose a heuristic framework for the estimation of simplices, which is practically amenable and at the same time enjoys from a number of weaker theoretical guarantees. A solution for the computational hardness of ML estimator is to replace the objective function in \eqref{eq:mainMLmin} with a continuously-relaxed surrogate. In order to do so, first it should be noted that the maximization in \eqref{eq:mainMLmin} is equivalent to the following constrained minimization problem:
\begin{align}
\mathcal{S}^*_{\mathrm{ML}}~=~
\argmin_{\mathcal{S}\in\mathbb{S}_K}&\quad\mathrm{Vol}\left(\mathcal{S}\right)
\nonumber \\
\mathrm{subject~to}&\quad
d_{\mathcal{S}}\left(\boldsymbol{X}_i\right)=0~,~\forall i.
\label{eq:MLnoiseless}
\end{align}
The above formulation directly follows from the explanations that we gave earlier on how ML algorithm attempts to find the optimal simplex. It should be noted that any other distance measure rather than $d_{\mathcal{S}}\left(\cdot\right)$ can also be used in \eqref{eq:MLnoiseless}; However, this particular choice significantly helps us to derive our sample complexity bounds in the proceeding parts of the paper.
Motivated by the formulation  in \eqref{eq:MLnoiseless} and also the idea of Lagrangian relaxation, let us propose the Continuously-Relaxed Risk (CRR) in order to approximate $\mathcal{S}_T$ as follows:

\begin{definition}[Continuously-Relaxed Risk]
\label{def:proposed}
Assume a dataset $\boldsymbol{D}=\left\{\boldsymbol{X}_1,\ldots,\boldsymbol{X}_n\right\}$ in $\mathbb{R}^K$, parameter $\gamma\ge0$, and an increasing and integrable function $\ell:\mathbb{R}\rightarrow\mathbb{R}$. Then, the empirical Continuously-Relaxed Risk $\hat{R}_{\mathrm{CRR}}:\mathbb{S}_K\rightarrow\mathbb{R}$ is defined as:
\begin{equation}
\hat{R}_{\mathrm{CRR}}\left(\mathcal{S};\boldsymbol{D},\gamma,\ell\right)\triangleq
\frac{1}{\sqrt{n}}\sum_{i=1}^{n}\ell\left(
d_{\mathcal{S}}\left(\boldsymbol{X}_i\right)
\right) +
\gamma\mathrm{Vol}\left(\mathcal{S}\right).
\label{eq:RhatCRR}
\end{equation}
Also, let us define
\begin{equation}
\mathcal{S}^*=\mathcal{S}^*\left(\boldsymbol{D},\gamma,\ell\right)~\triangleq~
\argmin_{\mathcal{S}\in\mathbb{S}_K}~
\hat{R}_{\mathrm{CRR}}\left(\mathcal{S};\boldsymbol{D},\gamma,\ell\right),
\label{eq:proposed}
\end{equation}
as the Continuously-Relaxed Estimator (CRE) of $\mathcal{S}_T$.
\end{definition}
Parameter $\gamma>0$ and function $\ell\left(\cdot\right)$ can be freely chosen, except that $\ell\left(\cdot\right)$ must be increasing and integrable; However, $\ell$ does not need to be strictly increasing or even continuous in general. In any case, choosing a differentiable $\ell$ makes $\hat{R}_{\mathrm{CRR}}\left(\mathcal{S}\left(\boldsymbol{\Theta}\right);\boldsymbol{D},\gamma,\ell\right)$ to be differentiable w.r.t. the vertex matrix $\boldsymbol{\Theta}$ almost everywhere\footnote{As shown in Section \ref{sec:numOpt}, the subset of simplices in $\mathbb{S}_K$ for which the gradient of $\hat{R}_{\mathrm{CRR}}$ does not exist has a zero Lebesgue measure.}. In Section \ref{sec:numOpt}, we derive an efficient technique to numerically compute the derivatives of $\hat{R}_{\mathrm{CRR}}$ w.r.t. the vertex matrix $\boldsymbol{\Theta}$ which works for all differentiable functions $\ell\left(\cdot\right)$. We then propose a heuristic GD-based scheme to find a local minimizer of \eqref{eq:proposed}.
The following theorem gives us a sample complexity bound for learning of $S_T$ via \eqref{eq:proposed}. Our results hold for any increasing and integrable function $\ell:\mathbb{R}\rightarrow\mathbb{R}$.

\begin{thm2}[Sample Complexity for General $\ell$]
Assume a $K$-simplex $\mathcal{S}_T$ with Lebesgue measure $V_T\triangleq\mathrm{Vol}\left(\mathcal{S}_T\right)$, which is $\left(\underline{\lambda},\bar{\lambda}\right)$-isoperimetric for some $\underline{\lambda},\bar{\lambda}\ge0$. Also, assume $\boldsymbol{X}_1,\boldsymbol{X}_2,\ldots,\boldsymbol{X}_n$ to be $n$ i.i.d. samples drawn from $\mathbb{P}_{\mathcal{S}_T}$. Assume for $\zeta,\epsilon>0$, the following condition holds for $n$:
\begin{equation*}
n\ge
\left(
\frac{
6\ell\left(3\underline{\lambda}KV^{\frac{1}{K}}_T\right)
\left(
\sqrt{K^2\log\frac{ne}{K}}
+
\sqrt{\log\frac{1}{\zeta}}
\right) + \gamma V_T\epsilon}
{\epsilon L\left(\frac{\epsilon V^{1/K}_T}{\left(K+1\right)\bar{\lambda}}\right)}
\right)^2,
\end{equation*}
where $L\left(x\right)\triangleq\frac{1}{x}\int_{0}^{x}\ell\left(u\right)\mathrm{d}u - \ell\left(0\right)$. Then, with probability at least $1-\zeta$ we have $\mathcal{D}_{\mathrm{TV}}\left(\mathbb{P}_{\mathcal{S}_T},\mathbb{P}_{\mathcal{S}^*}\right)\leq\epsilon$, where $\mathcal{S}^*$ is an optimizer of \eqref{eq:proposed}.
\label{thm:PACnoiseless}
\end{thm2}

Proof of Theorem \ref{thm:PACnoiseless} is given in Appendix \ref{appendix:Theorems}. Similar to Theorem \ref{corl:ML}, the proof includes mathematical techniques from high-dimensional geometry and calculus, as well as tools from Vapnik-Chervonenkis (VC) theory of statistical learning \cite{mohri2012foundations}. The general statement of Theorem \ref{thm:PACnoiseless}, for particular choices of the smoothing function $\ell$ and/or under certain asymptotic regimes can be extended to some interesting side results. For example, with appropriate choices of $\gamma$, one can always bound the total-variation error below $O\left(\sqrt{\log n/n}\right)$. Another interesting issue is the mathematical dependence of sample complexity on the geometric regularity of $\mathcal{S}_T$. In fact, searching for functions $\ell$ that decrease (or even obviate) the need to put geometric restrictions on the true simplex could be a nice research direction which goes beyond the scope of this paper. The following corollary is perhaps the most notable consequence of Theorem \ref{thm:PACnoiseless}.
\begin{corl}[Sample Complexity of Soft-ML]
\label{corl:softML}
Assume a $K$-simplex $\mathcal{S}_T\in\mathbb{S}_K$ and let $\boldsymbol{X}_1,\ldots,\boldsymbol{X}_n$ to be $n$ i.i.d. samples drawn from $\mathbb{P}_{\mathcal{S}_T}$. Also, assume $\mathcal{S}_T$ is $\left(\underline{\lambda},\bar{\lambda}\right)$-isoperimetric for some bounded $\underline{\lambda},\bar{\lambda}>0$. For $\epsilon,\zeta>0$ and parameter $\gamma>0$, let function $\ell:\mathbb{R}\rightarrow\mathbb{R}$ be
\begin{equation*}
\ell\left(u\right)\triangleq 1-e^{-bu},\quad \forall u\in\mathbb{R},
\end{equation*}
with $b\triangleq \frac{K}{\epsilon}$, and also assume
\begin{equation*}
n\ge_{\gamma,\Bar{\lambda},\underline{\lambda}}
O\left(
\frac{1}{\epsilon^2}
\left[
K^2\log\left(\frac{K}{\epsilon}\right)+\log\frac{1}{\zeta}
\right]\right),
\end{equation*}
where $\ge_{\gamma,\Bar{\lambda},\underline{\lambda}}$ means the inequality holds up to a factor that only depends on the mentioned parameters. Then, with probability at least $1-\zeta$ the minimizer of \eqref{eq:proposed}, denoted by $\mathcal{S}^*$, satisfies the inequality 
$\mathcal{D}_{\mathrm{TV}}\left(\mathbb{P}_{\mathcal{S}_T},\mathbb{P}_{\mathcal{S}^*}\right)\leq \epsilon$.
\end{corl}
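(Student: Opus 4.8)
The plan is to instantiate the general bound of Theorem~\ref{thm:PACnoiseless} with the specific smoothing function $\ell(u)=1-e^{-bu}$ and then simplify. First I would verify the hypotheses: $\ell$ is increasing since $\ell'(u)=be^{-bu}>0$ for $b>0$, it is integrable on every finite interval, and most importantly $\ell(0)=0$, which cleans up the formula for $L$. Since the total-variation distance between uniform measures on two simplices is invariant under a common invertible affine map applied to both, I would normalize $V_T=\mathrm{Vol}(\mathcal{S}_T)=1$ without loss of generality (rescaling $\gamma$ and $b$ accordingly); this removes the factors $V_T^{1/K}$ that otherwise clutter the bound and lets me treat the bounded quantities $\underline{\lambda},\bar{\lambda}$ as absolute constants.

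Next I would evaluate the two $\ell$-dependent quantities appearing in the theorem. In the numerator, $\ell(3\underline{\lambda}KV_T^{1/K})=1-e^{-3b\underline{\lambda}K}\le 1$, so this factor is bounded by a constant regardless of $b$. In the denominator, a direct integration gives $L(x)=\frac{1}{x}\int_0^x(1-e^{-bu})\,\mathrm{d}u-\ell(0)=1-\frac{1-e^{-bx}}{bx}$. Evaluating at $x_0=\frac{\epsilon}{(K+1)\bar{\lambda}}$ and choosing $b$ so that $bx_0=\beta$ for a fixed constant $\beta>0$ --- that is, $b=\frac{\beta(K+1)\bar{\lambda}}{\epsilon}\propto\frac{K}{\epsilon}$, as required, since $\bar{\lambda}$ is bounded and $K+1=\Theta(K)$ --- makes $L(x_0)=1-\frac{1-e^{-\beta}}{\beta}$ a strictly positive constant independent of $\epsilon,\zeta,K$. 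Substituting these estimates into the theorem reduces the requirement to $n\ge\left(C'\epsilon^{-1}\left(\sqrt{K^2\log(ne/K)}+\sqrt{\log(1/\zeta)}+\gamma\epsilon\right)\right)^2$ for a constant $C'$ depending only on $\beta,\underline{\lambda},\bar{\lambda}$; expanding the square via $(a+b+c)^2\le 3(a^2+b^2+c^2)$ then isolates a leading term of order $K^2\log(ne/K)/\epsilon^2$, together with an additive $\log(1/\zeta)/\epsilon^2$ term and a constant $\gamma^2$ term.

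The remaining and main technical step is to resolve the implicit dependence: $n$ still appears inside $\log(ne/K)$ on the right-hand side. I would handle this by the standard fixed-point substitution, plugging the candidate $n_0=C(\gamma)K^2\log(K/\epsilon)\epsilon^{-2}\log(1/\zeta)$ into the right-hand side and checking self-consistency. For this $n_0$ one has $\log(n_0e/K)=\log K+2\log(1/\epsilon)+\log\log(K/\epsilon)+\log\log(1/\zeta)+O(1)=O(\log(K/\epsilon))$, the $\log\log$ terms being dominated, so $K^2\log(n_0e/K)/\epsilon^2=O(K^2\log(K/\epsilon)/\epsilon^2)$. Folding the additive $\log(1/\zeta)$ and $\gamma^2$ contributions into the leading term --- legitimate because $K^2\log(K/\epsilon)\epsilon^{-2}\log(1/\zeta)\ge 1$, so the multiplicative $\log(1/\zeta)$ dominates the additive one --- and choosing the constant $C(\gamma)$ large enough to absorb $\beta,\underline{\lambda},\bar{\lambda}$ and the $\gamma$-dependent term then closes the inequality and yields the claimed $n\ge\frac{C(\gamma)K^2\log(K/\epsilon)}{\epsilon^2}\log\frac{1}{\zeta}$.

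The only delicate point is this self-referential logarithm, which is a routine VC-type fixed-point argument; everything else is bounded substitution. I would expect the part most worth stating carefully to be the verification that $C(\gamma)$ can be taken independent of $\epsilon,\zeta,K$, which rests precisely on the $V_T=1$ normalization and on the hypothesis that $\underline{\lambda},\bar{\lambda}$ are bounded, so that the proportionality $b\propto K/\epsilon$ introduces no hidden dimension dependence.
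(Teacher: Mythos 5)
Your proposal is correct and follows essentially the same route as the paper: instantiate Theorem \ref{thm:PACnoiseless}, compute $L(x)=1-\frac{1-e^{-bx}}{bx}$, choose $b$ so that $bx$ is a fixed constant at $x=\frac{\epsilon V_T^{1/K}}{(K+1)\bar{\lambda}}$ to keep $L$ bounded away from zero, bound $\ell\leq 1$ in the numerator, and resolve the $\log(ne/K)$ self-reference by a fixed-point substitution. You actually spell out several steps the paper leaves implicit (the $V_T$ normalization, the requirement that $b$ be bounded \emph{below} by a multiple of $K/\epsilon$ so that $L$ does not vanish, and the self-consistency check), but the argument is the same one.
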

Proof of Corollary \ref{corl:softML} is given in Appendix \ref{app:aux}. In Corollary \ref{corl:softML}, we show that for at least one class of differentiable functions $\ell$, the sample complexity for acquiring a reliable solution to \eqref{eq:proposed} has the same order with that of MLE in terms of parameter $\zeta$ and dimension $K$. However, the optimal dependence of sample complexity w.r.t. $\epsilon$, which is $\Tilde{O}\left(\epsilon^{-1}\right)$, has been deteriorated to $\Tilde{O}\left(\epsilon^{-2}\right)$. We conjecture that this sub-optimal dependence cannot be improved, since prior works on learning of other geometric shapes, such as hyper-cubes in \cite{frieze1996learning}, have also concluded that by relaxing the combinatoric and exponential-time algorithms the dependence on $\epsilon$ would be sacrificed.

By using the Soft-ML setting of Corollary \ref{corl:softML}, the optimization problem of Definition \ref{def:proposed} becomes differentiable w.r.t. $\boldsymbol{\Theta}$ almost everywhere and thus one can apply any heuristic continuous optimization tool, such as Gradient Descent (GD), to solve it. It should be noted that this result is not in contrast with the computational hardness of ML for two reasons: first, Corollary \ref{corl:softML} does not say anything about the convexity of \eqref{eq:RhatCRR}. Second, even if \eqref{eq:RhatCRR} happens to be convex with the above choice of $\ell\left(\cdot\right)$, we are still tackling a different problem compared to finding the {\it {smallest inclusive simplex}} which is what ML estimation attempts to do.

A thorough convexity analysis of our method goes beyond the scope of this paper. As a result, we make no claims regarding the computational complexity of our continuous-relaxation in Definition \ref{def:proposed}. Instead, we derive an efficient scheme to compute the gradient of \eqref{eq:proposed} w.r.t. almost any vertex matrix $\boldsymbol{\Theta}$, and then show that a simple heuristic approach, such as GD, can achieve a very good performance in practice.
%

\subsection{Numerical Optimization}
\label{sec:numOpt}

We show how to numerically compute the gradient of \eqref{eq:RhatCRR} in order to employ it in the GD-based optimization of Algorithm \ref{alg:rcchgd}. Later in Section \ref{sec:exp}, we have empirically investigated the convergence and effectiveness of Algorithm \ref{alg:rcchgd} through several computer simulations.

According to \eqref{eq:RhatCRR}, $\hat{R}_{\mathrm{CRR}}\left(\mathcal{S}\right)$ is naturally broken into two separate terms: one controls the distance of data points from $\mathcal{S}$, and the second one is a regularization term on the volume of $\mathcal{S}$. Gradient of $\gamma\mathrm{Vol}\left(\mathcal{S}\right)$ is easy to compute, since volume can be rewritten as
\begin{equation}
\mathrm{Vol}\left(\mathcal{S\left(\boldsymbol{\Theta}\right)}\right) = \frac{1}{ K !}
\left\vert
\mathrm{det}
\bigg[\boldsymbol{\theta}_1-\boldsymbol{\theta}_0\bigg\vert\cdots\bigg\vert\boldsymbol{\theta}_K-\boldsymbol{\theta}_0\bigg]
\right\vert.
\label{eq:deriPart1}
\end{equation}
On the other hand, for any matrix $\boldsymbol{A} \in \mathrm{R}^{K \times K}$ we have $\nabla_{\boldsymbol{A}} \mathrm{det}\left(\boldsymbol{A}\right) = \mathrm{adj}^T\left(\boldsymbol{A}\right)$, where $\mathrm{adj}\left(\cdot\right)$ denotes the {\it {adjugate}} function. To ensure that the gradient is well-defined, assume $\boldsymbol{\Theta}$ to be non-degenerate, i.e. $\mathcal{S}\left(\boldsymbol{\Theta}\right)$ has a positive volume. This assumption only excludes a zero-measure (according to the Lebesgue measure) subset of solutions from $\mathbb{S}_K$. For the remaining part of \eqref{eq:RhatCRR}, one needs to compute the gradient vectors $\boldsymbol{g}_{ik}= \nabla_{\boldsymbol{\theta}_k}\ell\left(
d_{\mathcal{S}\left(\boldsymbol{\Theta}\right)}\left(\boldsymbol{X}_i\right)
\right),~i\in\left[n\right],~k\in\left[K\right]$, or alternatively, the gradient matrices $\boldsymbol{G}_i\triangleq\left[\boldsymbol{g}_{i0}\vert\cdots\vert\boldsymbol{g}_{iK}\right]~,~i\in\left[n\right]$. The following theorem gives a straightforward procedure for numerical computation of $\boldsymbol{G}_i$s.
\begin{algorithm}[t]
\caption{Learning of simplices via Gradient Descent}
\label{alg:rcchgd}
\begin{algorithmic}[1]
\Procedure {Simplex Inference}{$\boldsymbol{D}=\left\{\boldsymbol{X}_1,\ldots,\boldsymbol{X}_n\right\}, K, \ell\left(\cdot\right),\gamma, T,\alpha$}
	\State Select $\left\{i_0,i_1,\ldots,i_{K}\right\}\subset\left[n\right]$ uniformly at random.
	\State Initialize $\boldsymbol{\Theta}^{\left(0\right)} = \left[\boldsymbol{X}_{i_0}|\ldots|\boldsymbol{X}_{i_{K}}\right]$
	\For {$t=0:\cdots:T-1$}
		\State $\boldsymbol{\Theta}^{\left(t+1\right)} \leftarrow \boldsymbol{\Theta}^{\left(t\right)} - \alpha \nabla_{\boldsymbol{\Theta}}\left[
		\hat{R}_{\mathrm{CRR}}\left(\mathcal{S}\left(\boldsymbol{\Theta}\right);
		\boldsymbol{D},\ell,\gamma\right)\right]$ \Comment{According to \eqref{eq:totalGrad} and Theorem \ref{lemma:Grad}}
	\EndFor
\EndProcedure
\end{algorithmic}
\end{algorithm}

\begin{thm2}[Gradient of the Planar Distance]
\label{lemma:Grad}
Assume $\boldsymbol{X}\in\mathbb{R}^{K}$, $\mathcal{S}\left(\boldsymbol{\Theta}\right)\in\mathbb{S}_K$ with $\mathrm{Vol}\left(\mathcal{S}\left(\boldsymbol{\Theta}\right)\right)>0$, and an increasing function $\ell:\mathbb{R}\rightarrow\mathbb{R}$ which is differentiable in $\mathbb{R}^+$.
\\[2mm]
i) If $d_{\mathcal{S}\left(\boldsymbol{\Theta}\right)}\left(\boldsymbol{X}\right)=0$, assume $\boldsymbol{w}^T_k\boldsymbol{X}+b_k<0$ for all $k=0,\ldots,K$, where $\left(\boldsymbol{w}_k,b_k\right)$ represent the parameters of hyper-plane $\mathcal{H}_k$ which encompasses the $k$th facet of $\mathcal{S}\left(\boldsymbol{\Theta}\right)$. Then, the gradient w.r.t. $\boldsymbol{\Theta}$ is zero.
\\[2mm]
ii) If $d_{\mathcal{S}\left(\boldsymbol{\Theta}\right)}\left(\boldsymbol{X}\right)>0$, assume $k^*\triangleq\argmax_{k}~\boldsymbol{w}^T_k\boldsymbol{X}+b_k$ is unique. Next, one needs to compute the normal outward vector $\boldsymbol{w}_{k^*}$. Let the $K\times K$ matrix  $\boldsymbol{\Theta}_{-k^*}$ represent $\boldsymbol{\Theta}$ with its $k^*$th column removed. Then, $\boldsymbol{w}_{k^*}$ is the eigenvector corresponding to the unique zero eigenvalue of  $\left(\boldsymbol{I}-\frac{1}{K}\boldsymbol{1}\boldsymbol{1}^T\right)\boldsymbol{\Theta}^T_{-k^*}$. Without loss of generality, assume $\left\Vert\boldsymbol{w}_{k^*}\right\Vert_2=1$ and $\boldsymbol{w}_{k^*}^T\left(\boldsymbol{\theta}_{k^*} - \frac{1}{K}\boldsymbol{\Theta}_{-k^*}\boldsymbol{1}\right)\leq0$ which means $\boldsymbol{w}_{k^*}$ is outward w.r.t. $\mathcal{S}\left(\boldsymbol{\Theta}\right)$. Let
\begin{equation}
\boldsymbol{p}^*\triangleq\boldsymbol{\Theta}^{\dagger}_{-k^*}\left(\boldsymbol{X}-\boldsymbol{w}_{k^*}\boldsymbol{w}_{k^*}^T\left(\boldsymbol{X}-\frac{1}{K}\boldsymbol{\Theta}_{-k^*}\boldsymbol{1}\right)\right)\in\mathbb{R}^K,
\end{equation}
where $\dagger$ denotes the pseudo-inverse operator. Then, we have:
\begin{gather}
\nabla_{\boldsymbol{\Theta}}
\ell\left( 
d_{\mathcal{S}\left(\boldsymbol{\Theta}\right)}\left(\boldsymbol{X}\right)
\right)
= 
-\ell'\left(
d_{\mathcal{S}\left(\boldsymbol{\Theta}\right)}\left(\boldsymbol{X}\right)
\right)
\boldsymbol{w}_{k^*}\boldsymbol{p}^{*T}\boldsymbol{J}_{k^*}
~\in~\mathbb{R}^{K\times\left(K+1\right)},
\label{eq:gradLoss}
\end{gather}
where $\ell'\left(\cdot\right)$ denotes the derivative of $\ell\left(\cdot\right)$, and $\boldsymbol{J}_{k^*}$ is a $K\times\left(K+1\right)$ zero-padded identity matrix as follows: 
\begin{equation}
\boldsymbol{J}_{k^*}\triangleq\left[
\boldsymbol{e}_1\vert\cdots\vert
\boldsymbol{e}_{k^*-1}\vert\boldsymbol{0}\vert\boldsymbol{e}_{k^*}\vert
\cdots\vert\boldsymbol{e}_K
\right].
\end{equation}
Here, $\boldsymbol{e}_k\in\mathbb{R}^K$ refers to a binary vector where the $k$th component is $1$ and the rest are zero.
\end{thm2}
Proof of Theorem \ref{lemma:Grad} is given in Appendix \ref{appendix:Theorems}. It should be noted that conditions i) and ii) of Theorem \ref{lemma:Grad} hold for all $\mathcal{S}\in\mathbb{S}_K$, except for a subset with a zero Lebesgue measure. Total gradient of the continuously-relaxed risk $\hat{R}_{\mathrm{CRR}}\left(\mathcal{S}\left(\boldsymbol{\Theta}\right);\boldsymbol{D},\ell,\gamma\right)$ w.r.t. vertex matrix $\boldsymbol{\Theta}$ can then be computed as
\begin{equation}
\nabla_{\boldsymbol{\Theta}}\hat{R}_{\mathrm{CRR}}\left(\mathcal{S}\left(\boldsymbol{\Theta}\right)\right)
= 
\frac{1}{\sqrt{n}}\sum_{i=1}^{n}\boldsymbol{G}_i+
\frac{\gamma s}{K!}
\left[
\boldsymbol{0}\bigg\vert
\mathrm{adj}^T\left(\boldsymbol{\Theta}_{1:K}-\boldsymbol{\theta}_0\boldsymbol{1}^T\right)
\right]
\left(\boldsymbol{I}-\frac{\boldsymbol{1}\boldsymbol{1}^T}{K+1}\right),
\label{eq:totalGrad}
\end{equation}
where $\boldsymbol{0}$ is an all-zero vector, $\boldsymbol{1}$ is an all-one vector and $\boldsymbol{I}$ denotes the identity matrix, all with corresponding appropriate sizes. Variable $s\in\left\{\pm 1\right\}$ denotes the sign of $\mathrm{det}\left[\boldsymbol{\theta}_1-\boldsymbol{\theta}_0\vert\cdots\vert\boldsymbol{\theta}_K-\boldsymbol{\theta}_0\right]$, and $\boldsymbol{\Theta}_{1:K}$ represents the $K\times K$ matrix $\left[\boldsymbol{\theta}_1\vert\cdots\vert\boldsymbol{\theta}_K\right]$. Also, we have $\boldsymbol{G}_i\triangleq \nabla_{\boldsymbol{\Theta}}\ell\left(d_{\mathcal{S}\left(\boldsymbol{\Theta}\right)}\left(\boldsymbol{X}_i\right)\right)$. In order to avoid the zero-measure set of ill-conditioned simplices in $\mathbb{S}_K$, as described by conditions i and ii of Theorem \ref{lemma:Grad}, one can add an infinitesimally small noise to $\boldsymbol{\Theta}$ before computing the gradient matrix at each iteration. For example, assuming the numerical computations are performed by a digital processor with $B$ precision bits, the addition of a Gaussian noise drawn from $\mathcal{N}\left(\boldsymbol{0},2^{-B}\boldsymbol{I}\right)$ implies that the probability of facing an ill-conditioned simplex in Algorithm \ref{alg:rcchgd} is zero.
\section{Experimental Results}
\label{sec:exp}

This section is devoted to  testing the performance of Algorithm \ref{alg:rcchgd} on a number of  synthetic and real-world datasets. In particular, we report the results of applying our method on three different tasks: 1) analysis of convergence, noise robustness and error performance on synthetic data, 2) cell-type extraction in computational biology, and 3) spectral unmixing of remote sensing data. We also compare our method with a number of popular rival frameworks. Throughout this section, {\it {estimation error}} is measured in terms of the average Euclidean distance between the vertices of the true simplex $\mathcal{S}_T$ and those of an estimated simplex  $\hat{\boldsymbol{S}}$ \footnote{We use the notation $\hat{\mathcal{S}}$ for the output of Algorithm \ref{alg:rcchgd} instead of $\mathcal{S}^*$, since $\mathcal{S}^*$ is already defined as the global optimizer of \eqref{eq:proposed}. However, Algorithm \ref{alg:rcchgd} may converge to a local minimizer which could be different from $\mathcal{S}^*$.}. For simplicity, assume $\boldsymbol{\Theta}=\left[\boldsymbol{\theta}_0\vert\cdots\vert\boldsymbol{\theta}_K\right]$ and $\hat{\boldsymbol{\Theta}}=\left[\hat{\boldsymbol{\theta}}_0\vert\cdots\vert\hat{\boldsymbol{\theta}}_K\right]$ to be the vertex matrices of $\mathcal{S}_T$ and $\hat{\mathcal{S}}$, respectively. Then, $error$ can be mathematically formulated as follows:
\begin{equation}
error^2\triangleq 
\min_{\left(i_0,\ldots,i_K\right)}
\frac{1}{K\left(K+1\right)}\sum_{k=0}^{K}\left\Vert
\boldsymbol{\theta}_k - \hat{\boldsymbol{\theta}}_{i_k}
\right\Vert_2^2,
\label{eq:errorMeasure}
\end{equation}
where minimization is taken over all permutations $\left(i_0,\ldots,i_K\right)$ of the numbers $\left\{0,1,\ldots,K\right\}$. The main reason for using $error$ instead of the total-variation distance $\mathcal{D}_{\mathrm{TV}}\left(\cdot,\cdot\right)$ is due to the numerical hardness of computing TV distance, specially in higher dimensions. Motivated by Corollary \ref{corl:softML}, we have chosen $\ell\left(u\right)\triangleq 1-e^{-bu}$ as our smoothing loss function in Definition \ref{def:proposed}, where $b\triangleq\mathrm{diam}^{-1}\left(\boldsymbol{D}\right)$ is chosen as the inverse diameter of the input database $\boldsymbol{D}=\left\{\boldsymbol{X}_1,\ldots,\boldsymbol{X}_n\right\}$. Here, by $\mathrm{diam}\left(\boldsymbol{D}\right)$ we simply mean $\max_{i,j}\left\Vert\boldsymbol{X}_i-\boldsymbol{X}_j\right\Vert_2$. Parameters $\gamma$ and the learning rate $\alpha$ in Algorithm \ref{alg:rcchgd} are manually adjusted to optimize the performance and convergence rate. Noticeably,  we did not observe a considerable sensitivity to any of these parameters.


\subsection{Initialization and Acceleration}

We have empirically observed that by setting an appropriate initial simplex $\mathcal{S}_{\mathrm{init}}\in\mathbb{S}_K$ for Algorithm \ref{alg:rcchgd}, one can significantly improve its convergence rate.
Moreover, numerical evaluation of the gradient in \eqref{eq:totalGrad} is $O\left(nK^3\right)$ which could be computationally intensive in some applications. In this part, we propose a simple heuristic approach to circumvent the above-mentioned problems. Let $\mathrm{conv}\left(\boldsymbol{D}\right)\subseteq\boldsymbol{D}$ represent the vertices of the convex hull of data points $\boldsymbol{X}_1,\ldots,\boldsymbol{X}_n\in\boldsymbol{D}$. In other words, $\mathrm{conv}\left(\boldsymbol{D}\right)$ is the smallest subset of $\boldsymbol{D}$ with the following property:
\begin{equation}
\boldsymbol{X}_1,\ldots,\boldsymbol{X}_n\in
\left\{\sum_{i}q_i\boldsymbol{Z}_i\bigg\vert~
\forall\boldsymbol{Z}_i\in
\mathrm{conv}\left(\boldsymbol{D}\right),~
\sum_i q_i=1,~\forall q_i\ge0
\right\}.
\end{equation}
This way, one can choose $\mathcal{S}_{\mathrm{init}}\in\mathbb{S}_K$ such that its vertices are randomly selected from $\mathrm{conv}\left(\boldsymbol{D}\right)$. We have empirically observed that such initialization leads to a substantially faster convergence. Moreover, one can accelerate the execution of Algorithm \ref{alg:rcchgd} by approximating the gradient in \eqref{eq:totalGrad} via computing it only for data points in $\mathrm{conv}\left(\boldsymbol{D}\right)$ instead of the whole dataset $\boldsymbol{D}$. 

The effect of the proposed approximation for the gradient remains negligible on the performance of Algorithm \ref{alg:rcchgd} in low noise regimes. In other words, when $\gamma$ is chosen to be relatively low, which means the objective function of Algorithm \ref{alg:rcchgd} becomes more similar to that of ML estimator, the proposed algorithm tries to find a simplex that encompasses almost all of the data points. This is, however, equivalent to encompassing the convex hull of data points which suggests that the proposed approximation of gradient becomes significantly close to \eqref{eq:totalGrad} after a number of initial iterations. We already know that the convex hull of a set of $n$ points can be determined in $g_{n,K}\leq\tilde{O}\left(n^{\left\lfloor K/2\right\rfloor}\right)$ arithmetic operations, which grows exponentially w.r.t. $K$ \cite{chazelle1993optimal}. However, the mentioned bound corresponds to a worst-case analysis which might not be the case in many real-world or randomly formed databases. As a result, we have faced a far smaller computational burden in our experiments, i.e. $g_{n,K}\ll \tilde{O}\left(n^{\left\lfloor K/2\right\rfloor}\right)$ even for moderate $K$. A detailed experimental or analytic investigation of this issue goes beyond the scope of this paper.

Therefore, the proposed strategy reduces the expected computational complexity of approximating the gradient to $O\left(g_{n,K} + f_{n,K}K^3\right)$, where $f_{n,K}$ denotes the expected number of vertices of the convex hull of $n$ randomly chosen points inside a $K$-simplex. It can be at least empirically confirmed that for large $n$, we have $f_{n,K}\ll n$. As a result of the mentioned initialization and acceleration strategies for moderately small $K$, our method runs fairly faster than its rival frameworks with a negligible harm to its performance. Detailed experiments on the performance of Algorithm \ref{alg:rcchgd} are given in the next sub-section. In all the following simulations we have always used the discussed initialization technique unless we say otherwise.

One more issue that should be discussed before proceeding to the next sub-section, is that finding the convex hull of a dataset $\boldsymbol{D}$ might be very sensitive to the presence of outliers in the dataset. However, analysis of the effect of outliers goes beyond the scope of this paper and we leave this interesting line of work for future researches in this area.
\subsection{Synthetic Data}
\label{sec:exp:synth}

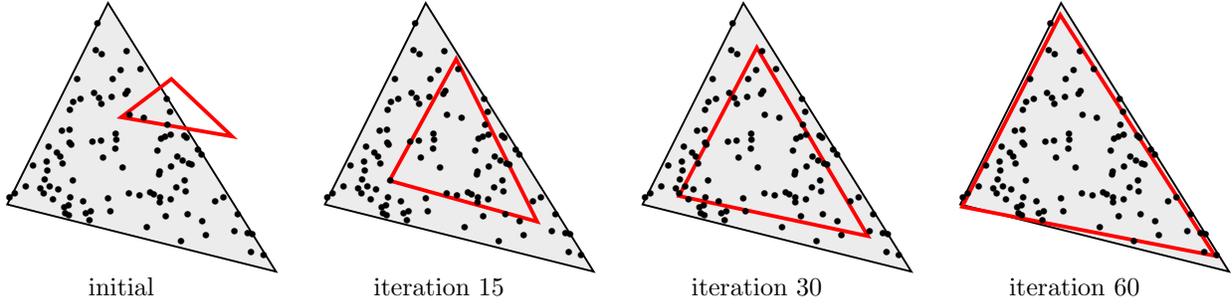
\begin{figure}[t]
\centering
\resizebox{\textwidth}{!}{
\begin{tikzpicture}
\draw[thick,fill=gray!15!white] (0,0) node[right=1.7cm,above=-1.5cm]{initial} -- (4,-1) -- (1.5,3) -- cycle;
\draw[ultra thick,color=red] (2.442932,1.868366) -- (1.694873,1.298958) -- (3.345374,1.014067) -- cycle;
\foreach \Point in{
(0.848794660917392,-0.0960087506414221),
(1.39974584193181,0.908704009802692),
(1.27105972616693,0.889150922992085),
(2.89678087742158,0.702200129909013),
(0.977373377387783,0.312873195862053),
(1.09489114517617,1.52526816672795),
(2.68665737540709,-0.195703680089585),
(0.927634131445215,1.07190187142706),
(0.74074871041492,0.722630127319107),
(1.794101041323,1.63593021820439),
(0.867288916099507,-0.179506202573398),
(1.40404438047537,1.45075089827069),
(0.938969125056707,0.0656107226888714),
(2.03591987357131,1.25033366694503),
(2.67394962633684,0.948271658428248),
(1.9882570485065,1.97325845819954),
(1.17226837682479,-0.18139782991532),
(2.23981112608354,0.757208528668067),
(1.76520162580043,1.60416660990923),
(0.635597283838206,0.217686061864303),
(0.587159432797263,0.595847947770355),
(1.71819753365937,0.498990385457372),
(0.54608000220669,0.116453175381386),
(1.50221642028552,0.0364928814325114),
(0.597736392751514,0.464647417832684),
(0.683326443149103,0.0355095888552021),
(0.921363358550676,-0.201269439209057),
(0.616706651288675,0.816219622953613),
(1.96894186565923,0.0936193651036874),
(0.913087171519253,-0.0613008727318241),
(2.58920961276213,-0.585990611709679),
(1.78086438246518,2.24136529573158),
(0.792868759401781,0.664809871307267),
(2.27395131104571,0.439104910794799),
(0.115919701462463,0.111571297140566),
(3.38113731686577,-0.445607467101213),
(1.53463205547825,-0.154268686021476),
(1.81447914634006,0.116741059556908),
(3.09624126187146,0.00299100644111672),
(2.90258225238036,-0.302655233840027),
(2.63673789637804,0.974607127841609),
(2.6153985486242,0.610813661683435),
(3.81605812889484,-0.79512001753746),
(1.36730420360817,1.53289182749692),
(1.04408923435303,1.81722545118905),
(2.28971097962216,0.926480386467958),
(2.27462620956845,-0.019604818253276),
(2.74769847323982,0.552412065059242),
(2.12989706861351,0.137428921315117),
(2.35372444910749,0.967676368541329),
(1.82153850203962,1.29033349788934),
(2.43412355099858,0.0257352561936723),
(2.83918935193604,0.764973386298183),
(3.43329771052554,-0.18840731144752),
(1.61224884064367,0.789939750599681),
(1.42906488195227,0.656850569385188),
(2.95983973712982,-0.506756708599542),
(0.983263842440874,1.48073386213297),
(2.42526742172866,1.32542905995435),
(1.61794540564703,1.00332590694761),
(1.24676164384208,-0.149277715266322),
(0.384433356811182,0.537289902777332),
(2.70543887344977,0.557324408242272),
(1.62493424322163,0.918468994657403),
(2.38751590635269,1.14154084849257),
(0.495611127892469,0.187548606899477),
(2.44128214750198,0.523242151607833),
(1.22443592772317,-0.28764476917127),
(1.57221284269901,1.82618524852756),
(0.556932784940584,0.326759276930003),
(0.827175763493381,0.567217438761972),
(3.57968777314085,-0.480144431779756),
(2.07896579704218,-0.392669953897953),
(2.6579001840711,0.146870282809987),
(0.931035250988445,0.868775939301168),
(2.37386272940182,1.52556776253711),
(0.038432371811898,0.0552108653125638),
(0.81570053219528,1.0508629221792),
(0.278508909619764,0.216969646383172),
(1.29319555758609,1.61283327793817),
(2.3589925914198,-0.228763208795102),
(2.18515439840576,0.104077249452846),
(2.52091662809705,0.212164172852418),
(2.4361362787148,0.994185388613279),
(0.952044296733873,0.845804589661893),
(0.761180940185605,0.170652110729708),
(0.934599210160287,1.36055498733788),
(1.6909373997113,1.96073644389913),
(1.34372776046531,2.65491643601027),
(1.13441993083987,-0.0039237619199744),
(3.65175464597701,-0.496229555694761),
(3.63226873376463,-0.759076042369825),
(2.64494889381075,0.315375860368441),
(1.32129511250454,2.25531532016346),
(2.26360530011743,0.406466294455832),
(2.53273569632627,0.669349516528993),
(1.55883171935907,1.5481096312368),
(3.2187250991002,0.0652935943890895),
(1.40910442640754,2.19641559450416),
(2.27635323028504,0.0552503216639951)
}{
    \node at \Point {\tiny\raisebox{1ex}{\textbullet}};
}
\end{tikzpicture}
\hspace*{3mm}
\begin{tikzpicture}
\draw[thick,fill=gray!15!white] (0,0) node[right=1.7cm,above=-1.5cm]{iteration $15$} -- (4,-1) -- (1.5,3) -- cycle;
\draw[ultra thick,color=red] (1.953519,2.166335) -- (0.9618828,0.3548715) -- (3.1591255,-0.2559511) -- cycle;
\foreach \Point in{
(0.848794660917392,-0.0960087506414221),
(1.39974584193181,0.908704009802692),
(1.27105972616693,0.889150922992085),
(2.89678087742158,0.702200129909013),
(0.977373377387783,0.312873195862053),
(1.09489114517617,1.52526816672795),
(2.68665737540709,-0.195703680089585),
(0.927634131445215,1.07190187142706),
(0.74074871041492,0.722630127319107),
(1.794101041323,1.63593021820439),
(0.867288916099507,-0.179506202573398),
(1.40404438047537,1.45075089827069),
(0.938969125056707,0.0656107226888714),
(2.03591987357131,1.25033366694503),
(2.67394962633684,0.948271658428248),
(1.9882570485065,1.97325845819954),
(1.17226837682479,-0.18139782991532),
(2.23981112608354,0.757208528668067),
(1.76520162580043,1.60416660990923),
(0.635597283838206,0.217686061864303),
(0.587159432797263,0.595847947770355),
(1.71819753365937,0.498990385457372),
(0.54608000220669,0.116453175381386),
(1.50221642028552,0.0364928814325114),
(0.597736392751514,0.464647417832684),
(0.683326443149103,0.0355095888552021),
(0.921363358550676,-0.201269439209057),
(0.616706651288675,0.816219622953613),
(1.96894186565923,0.0936193651036874),
(0.913087171519253,-0.0613008727318241),
(2.58920961276213,-0.585990611709679),
(1.78086438246518,2.24136529573158),
(0.792868759401781,0.664809871307267),
(2.27395131104571,0.439104910794799),
(0.115919701462463,0.111571297140566),
(3.38113731686577,-0.445607467101213),
(1.53463205547825,-0.154268686021476),
(1.81447914634006,0.116741059556908),
(3.09624126187146,0.00299100644111672),
(2.90258225238036,-0.302655233840027),
(2.63673789637804,0.974607127841609),
(2.6153985486242,0.610813661683435),
(3.81605812889484,-0.79512001753746),
(1.36730420360817,1.53289182749692),
(1.04408923435303,1.81722545118905),
(2.28971097962216,0.926480386467958),
(2.27462620956845,-0.019604818253276),
(2.74769847323982,0.552412065059242),
(2.12989706861351,0.137428921315117),
(2.35372444910749,0.967676368541329),
(1.82153850203962,1.29033349788934),
(2.43412355099858,0.0257352561936723),
(2.83918935193604,0.764973386298183),
(3.43329771052554,-0.18840731144752),
(1.61224884064367,0.789939750599681),
(1.42906488195227,0.656850569385188),
(2.95983973712982,-0.506756708599542),
(0.983263842440874,1.48073386213297),
(2.42526742172866,1.32542905995435),
(1.61794540564703,1.00332590694761),
(1.24676164384208,-0.149277715266322),
(0.384433356811182,0.537289902777332),
(2.70543887344977,0.557324408242272),
(1.62493424322163,0.918468994657403),
(2.38751590635269,1.14154084849257),
(0.495611127892469,0.187548606899477),
(2.44128214750198,0.523242151607833),
(1.22443592772317,-0.28764476917127),
(1.57221284269901,1.82618524852756),
(0.556932784940584,0.326759276930003),
(0.827175763493381,0.567217438761972),
(3.57968777314085,-0.480144431779756),
(2.07896579704218,-0.392669953897953),
(2.6579001840711,0.146870282809987),
(0.931035250988445,0.868775939301168),
(2.37386272940182,1.52556776253711),
(0.038432371811898,0.0552108653125638),
(0.81570053219528,1.0508629221792),
(0.278508909619764,0.216969646383172),
(1.29319555758609,1.61283327793817),
(2.3589925914198,-0.228763208795102),
(2.18515439840576,0.104077249452846),
(2.52091662809705,0.212164172852418),
(2.4361362787148,0.994185388613279),
(0.952044296733873,0.845804589661893),
(0.761180940185605,0.170652110729708),
(0.934599210160287,1.36055498733788),
(1.6909373997113,1.96073644389913),
(1.34372776046531,2.65491643601027),
(1.13441993083987,-0.0039237619199744),
(3.65175464597701,-0.496229555694761),
(3.63226873376463,-0.759076042369825),
(2.64494889381075,0.315375860368441),
(1.32129511250454,2.25531532016346),
(2.26360530011743,0.406466294455832),
(2.53273569632627,0.669349516528993),
(1.55883171935907,1.5481096312368),
(3.2187250991002,0.0652935943890895),
(1.40910442640754,2.19641559450416),
(2.27635323028504,0.0552503216639951)
}{
    \node at \Point {\tiny\raisebox{1ex}{\textbullet}};
}
\end{tikzpicture}
\hspace*{3mm}
\begin{tikzpicture}
\draw[thick,fill=gray!15!white] (0,0) node[right=1.7cm,above=-1.5cm]{iteration $30$} -- (4,-1) -- (1.5,3) -- cycle;
\draw[ultra thick,color=red] (1.700555,2.335072) -- (0.541803,0.126598) -- (3.3522756,-0.4704115) -- cycle;
\foreach \Point in{
(0.848794660917392,-0.0960087506414221),
(1.39974584193181,0.908704009802692),
(1.27105972616693,0.889150922992085),
(2.89678087742158,0.702200129909013),
(0.977373377387783,0.312873195862053),
(1.09489114517617,1.52526816672795),
(2.68665737540709,-0.195703680089585),
(0.927634131445215,1.07190187142706),
(0.74074871041492,0.722630127319107),
(1.794101041323,1.63593021820439),
(0.867288916099507,-0.179506202573398),
(1.40404438047537,1.45075089827069),
(0.938969125056707,0.0656107226888714),
(2.03591987357131,1.25033366694503),
(2.67394962633684,0.948271658428248),
(1.9882570485065,1.97325845819954),
(1.17226837682479,-0.18139782991532),
(2.23981112608354,0.757208528668067),
(1.76520162580043,1.60416660990923),
(0.635597283838206,0.217686061864303),
(0.587159432797263,0.595847947770355),
(1.71819753365937,0.498990385457372),
(0.54608000220669,0.116453175381386),
(1.50221642028552,0.0364928814325114),
(0.597736392751514,0.464647417832684),
(0.683326443149103,0.0355095888552021),
(0.921363358550676,-0.201269439209057),
(0.616706651288675,0.816219622953613),
(1.96894186565923,0.0936193651036874),
(0.913087171519253,-0.0613008727318241),
(2.58920961276213,-0.585990611709679),
(1.78086438246518,2.24136529573158),
(0.792868759401781,0.664809871307267),
(2.27395131104571,0.439104910794799),
(0.115919701462463,0.111571297140566),
(3.38113731686577,-0.445607467101213),
(1.53463205547825,-0.154268686021476),
(1.81447914634006,0.116741059556908),
(3.09624126187146,0.00299100644111672),
(2.90258225238036,-0.302655233840027),
(2.63673789637804,0.974607127841609),
(2.6153985486242,0.610813661683435),
(3.81605812889484,-0.79512001753746),
(1.36730420360817,1.53289182749692),
(1.04408923435303,1.81722545118905),
(2.28971097962216,0.926480386467958),
(2.27462620956845,-0.019604818253276),
(2.74769847323982,0.552412065059242),
(2.12989706861351,0.137428921315117),
(2.35372444910749,0.967676368541329),
(1.82153850203962,1.29033349788934),
(2.43412355099858,0.0257352561936723),
(2.83918935193604,0.764973386298183),
(3.43329771052554,-0.18840731144752),
(1.61224884064367,0.789939750599681),
(1.42906488195227,0.656850569385188),
(2.95983973712982,-0.506756708599542),
(0.983263842440874,1.48073386213297),
(2.42526742172866,1.32542905995435),
(1.61794540564703,1.00332590694761),
(1.24676164384208,-0.149277715266322),
(0.384433356811182,0.537289902777332),
(2.70543887344977,0.557324408242272),
(1.62493424322163,0.918468994657403),
(2.38751590635269,1.14154084849257),
(0.495611127892469,0.187548606899477),
(2.44128214750198,0.523242151607833),
(1.22443592772317,-0.28764476917127),
(1.57221284269901,1.82618524852756),
(0.556932784940584,0.326759276930003),
(0.827175763493381,0.567217438761972),
(3.57968777314085,-0.480144431779756),
(2.07896579704218,-0.392669953897953),
(2.6579001840711,0.146870282809987),
(0.931035250988445,0.868775939301168),
(2.37386272940182,1.52556776253711),
(0.038432371811898,0.0552108653125638),
(0.81570053219528,1.0508629221792),
(0.278508909619764,0.216969646383172),
(1.29319555758609,1.61283327793817),
(2.3589925914198,-0.228763208795102),
(2.18515439840576,0.104077249452846),
(2.52091662809705,0.212164172852418),
(2.4361362787148,0.994185388613279),
(0.952044296733873,0.845804589661893),
(0.761180940185605,0.170652110729708),
(0.934599210160287,1.36055498733788),
(1.6909373997113,1.96073644389913),
(1.34372776046531,2.65491643601027),
(1.13441993083987,-0.0039237619199744),
(3.65175464597701,-0.496229555694761),
(3.63226873376463,-0.759076042369825),
(2.64494889381075,0.315375860368441),
(1.32129511250454,2.25531532016346),
(2.26360530011743,0.406466294455832),
(2.53273569632627,0.669349516528993),
(1.55883171935907,1.5481096312368),
(3.2187250991002,0.0652935943890895),
(1.40910442640754,2.19641559450416),
(2.27635323028504,0.0552503216639951)
}{
    \node at \Point {\tiny\raisebox{1ex}{\textbullet}};
}
\end{tikzpicture}
\hspace*{3mm}
\begin{tikzpicture}
\draw[thick,fill=gray!15!white] (0,0) node[right=1.7cm,above=-1.5cm]{iteration $60$} -- (4,-1) -- (1.5,3) -- cycle;
\draw[ultra thick,color=red] (1.488589,2.823049) -- (0.02855040,-0.03387861) -- (3.7912914,-0.7654253) -- cycle;
\foreach \Point in{
(0.848794660917392,-0.0960087506414221),
(1.39974584193181,0.908704009802692),
(1.27105972616693,0.889150922992085),
(2.89678087742158,0.702200129909013),
(0.977373377387783,0.312873195862053),
(1.09489114517617,1.52526816672795),
(2.68665737540709,-0.195703680089585),
(0.927634131445215,1.07190187142706),
(0.74074871041492,0.722630127319107),
(1.794101041323,1.63593021820439),
(0.867288916099507,-0.179506202573398),
(1.40404438047537,1.45075089827069),
(0.938969125056707,0.0656107226888714),
(2.03591987357131,1.25033366694503),
(2.67394962633684,0.948271658428248),
(1.9882570485065,1.97325845819954),
(1.17226837682479,-0.18139782991532),
(2.23981112608354,0.757208528668067),
(1.76520162580043,1.60416660990923),
(0.635597283838206,0.217686061864303),
(0.587159432797263,0.595847947770355),
(1.71819753365937,0.498990385457372),
(0.54608000220669,0.116453175381386),
(1.50221642028552,0.0364928814325114),
(0.597736392751514,0.464647417832684),
(0.683326443149103,0.0355095888552021),
(0.921363358550676,-0.201269439209057),
(0.616706651288675,0.816219622953613),
(1.96894186565923,0.0936193651036874),
(0.913087171519253,-0.0613008727318241),
(2.58920961276213,-0.585990611709679),
(1.78086438246518,2.24136529573158),
(0.792868759401781,0.664809871307267),
(2.27395131104571,0.439104910794799),
(0.115919701462463,0.111571297140566),
(3.38113731686577,-0.445607467101213),
(1.53463205547825,-0.154268686021476),
(1.81447914634006,0.116741059556908),
(3.09624126187146,0.00299100644111672),
(2.90258225238036,-0.302655233840027),
(2.63673789637804,0.974607127841609),
(2.6153985486242,0.610813661683435),
(3.81605812889484,-0.79512001753746),
(1.36730420360817,1.53289182749692),
(1.04408923435303,1.81722545118905),
(2.28971097962216,0.926480386467958),
(2.27462620956845,-0.019604818253276),
(2.74769847323982,0.552412065059242),
(2.12989706861351,0.137428921315117),
(2.35372444910749,0.967676368541329),
(1.82153850203962,1.29033349788934),
(2.43412355099858,0.0257352561936723),
(2.83918935193604,0.764973386298183),
(3.43329771052554,-0.18840731144752),
(1.61224884064367,0.789939750599681),
(1.42906488195227,0.656850569385188),
(2.95983973712982,-0.506756708599542),
(0.983263842440874,1.48073386213297),
(2.42526742172866,1.32542905995435),
(1.61794540564703,1.00332590694761),
(1.24676164384208,-0.149277715266322),
(0.384433356811182,0.537289902777332),
(2.70543887344977,0.557324408242272),
(1.62493424322163,0.918468994657403),
(2.38751590635269,1.14154084849257),
(0.495611127892469,0.187548606899477),
(2.44128214750198,0.523242151607833),
(1.22443592772317,-0.28764476917127),
(1.57221284269901,1.82618524852756),
(0.556932784940584,0.326759276930003),
(0.827175763493381,0.567217438761972),
(3.57968777314085,-0.480144431779756),
(2.07896579704218,-0.392669953897953),
(2.6579001840711,0.146870282809987),
(0.931035250988445,0.868775939301168),
(2.37386272940182,1.52556776253711),
(0.038432371811898,0.0552108653125638),
(0.81570053219528,1.0508629221792),
(0.278508909619764,0.216969646383172),
(1.29319555758609,1.61283327793817),
(2.3589925914198,-0.228763208795102),
(2.18515439840576,0.104077249452846),
(2.52091662809705,0.212164172852418),
(2.4361362787148,0.994185388613279),
(0.952044296733873,0.845804589661893),
(0.761180940185605,0.170652110729708),
(0.934599210160287,1.36055498733788),
(1.6909373997113,1.96073644389913),
(1.34372776046531,2.65491643601027),
(1.13441993083987,-0.0039237619199744),
(3.65175464597701,-0.496229555694761),
(3.63226873376463,-0.759076042369825),
(2.64494889381075,0.315375860368441),
(1.32129511250454,2.25531532016346),
(2.26360530011743,0.406466294455832),
(2.53273569632627,0.669349516528993),
(1.55883171935907,1.5481096312368),
(3.2187250991002,0.0652935943890895),
(1.40910442640754,2.19641559450416),
(2.27635323028504,0.0552503216639951)
}{
    \node at \Point {\tiny\raisebox{1ex}{\textbullet}};
}
\end{tikzpicture}}
\caption{\small{Snapshots from running Algorithm \ref{alg:rcchgd} on a set of $n=100$ noiseless samples drawn uniformly from a two-dimensional simplex. The original triangle is drawn in black and the outputs of the proposed method for four different iteration steps are shown in red.}}
\label{fig:Snapshot}
\end{figure}

\begin{SCfigure}[1][b]
\centering
\includegraphics[width=0.5\linewidth,clip]{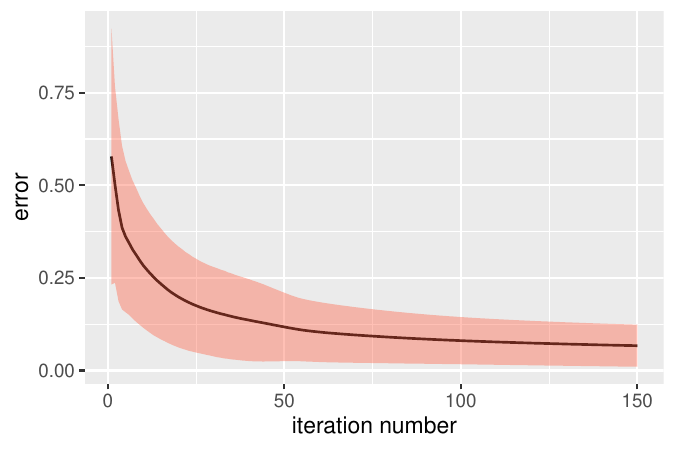}
\caption{\small{Depiction of $error$ in \eqref{eq:errorMeasure} as a function of iteration number for Algorithm \ref{alg:rcchgd}. The experiment has been performed on $n=100$ data points uniformly sampled from a two-dimensional simplex. Parameters $\gamma$ and optimization step $\alpha$ have been adjusted to optimize the performance. According to the curve, sample mean and the standard deviation of $error$ decay as the number of iterations is increased.}}
\label{fig:iteration_error}
\end{SCfigure}


Figure \ref{fig:Snapshot} shows four snapshots from the outputs of Algorithm \ref{alg:rcchgd} running on a set of $n=100$ noiseless data points in $\mathbb{R}^2$, which have been uniformly sampled from a typical two-dimensional simplex. We have chosen $K=2$ in order to facilitate the visualization of our results. For this particular simulation, we have chosen a completely random initial simplex instead of the suggested technique in the previous sub-section. The reason behind this choice is to show the convergence performance of Algorithm \ref{alg:rcchgd} and the fact that it does not necessarily need a good initialization. Here, the shaded simplex denotes the true $\mathcal{S}_T$,  while the ``red" simplex is the output of Algorithm \ref{alg:rcchgd} for four different iteration numbers. As it can be seen, despite the fact that algorithm has started from an inappropriate initial point, it converges to an acceptable solution in a reasonable number of iterations. Figure \ref{fig:iteration_error} depicts {\it {error}} as a function of iteration number for the experiment of Figure \ref{fig:Snapshot}. Results have been averaged over $100$ trials. According to the curve, sample mean and the standard deviation of $error$ decay as the number of iterations increases.

\begin{figure}[t]
\centering
        \begin{subfigure}[b]{0.5\textwidth}
                \includegraphics[trim=0mm 2mm 11mm 3mm,clip,width=\textwidth]{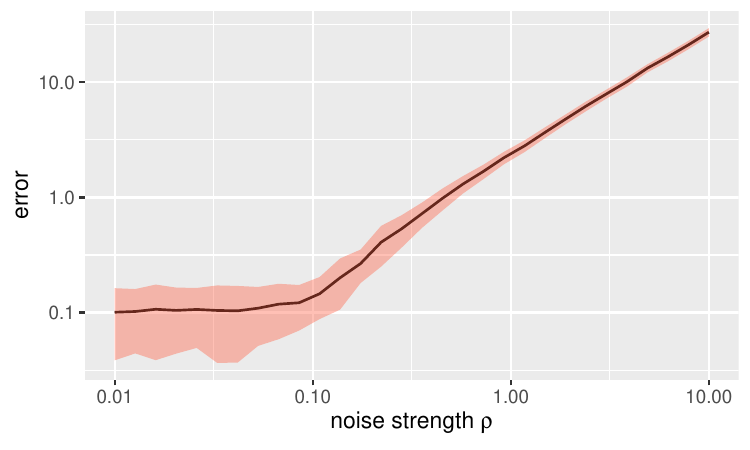}
                \caption{}
                \label{fig:noise}
        \end{subfigure}
        \hspace*{2mm}
        \begin{subfigure}[b]{0.45\textwidth}
                \includegraphics[trim=0mm 2mm 11mm 3mm,clip,width=\textwidth]{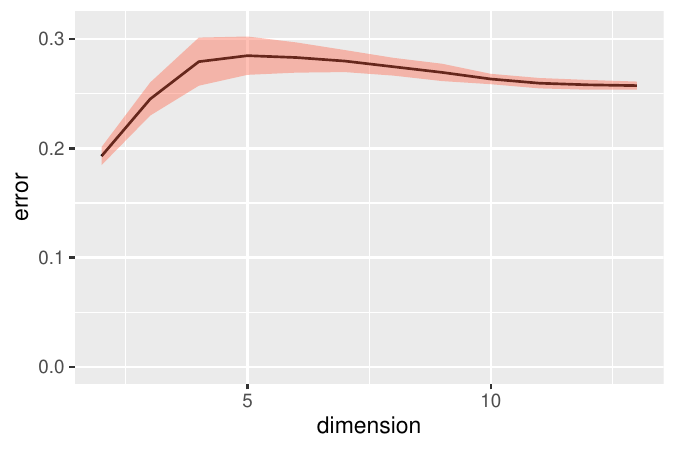}
                \caption{}
                \label{fig:moshkel}
        \end{subfigure}
        \caption{\small{Estimation $error$ as a function of noise strength $\rho$ and dimension. In \ref{fig:noise}, $n=100$ data samples are drawn from a two-dimensional simplex and then contaminated with additive white Gaussian noise. However, for \ref{fig:moshkel} data samples are noiseless and $n$ has been increased proportional to $K^2\log K$, where $K$ indicates the dimension.}}
\label{fig:NoiseDim}
\end{figure}

Figure \ref{fig:NoiseDim} aims to investigate the robustness of our method to noise and also the dimension of the space. We have used synthetic datasets for this purpose. In particular, Figure \ref{fig:noise} illustrates the performance of Algorithm \ref{alg:rcchgd} when samples are contaminated with additive white Gaussian noise. In this regard, $n=100$ uniformly-sampled data points have been drawn from a two-dimensional simplex. Each sample has been contaminated with a noise vector drawn independently from a zero-mean Gaussian distribution  $\mathcal{N}\left(\boldsymbol{0},\sigma^2\boldsymbol{I}\right)$, where $\sigma$ denotes the standard deviation of the noise components. Let us define the {\it {noise strength}} $\rho$ as the normalized standard deviation of noise as follows:
\begin{equation}
\rho\triangleq\sigma\left[\frac{1}{K\left(K+1\right)}\sum_{k,k'}\left\Vert\boldsymbol{\theta}_k-\boldsymbol{\theta}_{k'}\right\Vert_2\right]^{-1}.
\end{equation}
According to the above definition, value of $\rho$ is invariant to the size of the original simplex $\mathcal{S}_T$. Figure \ref{fig:noise} depicts the $error$ of Algorithm \ref{alg:rcchgd} as a function of noise strength $\rho$. The proposed method is considerably robust to noise roughly for $\rho\leq 0.2$. However, the performance is degraded when noise strength is being increased. On the other hand, Figure \ref{fig:moshkel} illustrates the $error$ as a function of dimension $K$. For each point of the curve, dataset size has been increased proportional to $K^2\log K$ in order to match the result of Theorem \ref{thm:PACnoiseless} and Corollaries \ref{corl:ML} and \ref{corl:softML}. Surprisingly, we have empirically validated that as long as $n$ is increased proportional to $K^2\log K$, the $error$ remains constant. This implies that Algorithm \ref{alg:rcchgd} has the same sample complexity (order-wise) as that of ML, while unlike the ML estimator it converges to a solution in a practically reasonable time. 

\begin{SCtable}[1.5][b]
  \centering
  \begin{tabular}{|c|c|c|c|}
    \cline{2-4}
    \multicolumn{1}{c|}{} & Plain & Noisy & HD \\ \hline
    Proposed & $0.20$   & $\mathbf{0.51}$   & $0.74$  \\ \hline
    MVSA     & $0.14$   & $1.84$   & $0.76$  \\ \hline
    SISAL    & $0.16$   & $1.65$   & $0.77$  \\ \hline
    VCA      & $1.09$   & $1.006$  & $5.93$  \\ \hline
    UNMIX    & $0.14$   & $1.83$   & -       \\ \hline
  \end{tabular}
  \hspace*{2mm}
\caption{\small{Comparison of the proposed method with MVSA \cite{li2015minimum}, SISAL \cite{bioucas2009variable}, VCA \cite{nascimento2005vertex}, and UNMIX \cite{tolliver2010robust}. Methods have been tested on three different datasets. The values of $error$ have been averaged over several runs, such that all relative standard deviations become less than $10\%$. UNMIX did not execute on ``HD" dataset in a reasonable time.}}
\label{tab:comp}
\end{SCtable}

In Table \ref{tab:comp}, we have compared the $error$ performance of our method with a number of well-known computational techniques for learning of simplices, namely MVSA \cite{li2015minimum}, SISAL \cite{bioucas2009variable}, VCA \cite{nascimento2005vertex}, and UNMIX \cite{tolliver2010robust}. We have used three datasets for the experiments which are i) Plain dataset: $n=100$ data points generated from a simple two-dimensional simplex, ii) Noisy dataset: a noisy version of the ``Plain" dataset with $\rho=0.5$ (highly noisy), and iii) High-Dimensional (HD) dataset: $n=1000$ data points generated uniformly from a noiseless simplex with $K=9$. According to Table \ref{tab:comp}, our method has a comparable performance on ``Plain" and ``HD" datasets, while it outperforms all the rival strategies on Noisy dataset. The reason behind this result might be due to the continuously-relaxed nature of our method which is in contrast to the majority of existing frameworks in this area. In fact, the inherent flexibility of the objective function in \eqref{eq:proposed} achieves more robustness to noisy data points that fall out of the true simplex.

\subsection{Computational Biology}
\label{sec:exp:compbio}

\begin{figure}[t]
\centering
        \begin{subfigure}[b]{0.45\textwidth}
                \begin{tikzpicture}[xscale=7,yscale=7]
\draw[ultra thick,color=red,fill=gray!5!white] (0.2246169,-0.3129046) -- (-0.3127093,0.1688291) node[right=-8mm,above=-2.45cm,color=black,align=left,text width=2cm,draw,thin,fill=gray!5!white]{\small{Profiles of\\tissues}} -- (0.3192377,0.2942368) -- cycle;
\draw[->,thick,style=dashed,-{Latex[length=3mm]},color=gray] (-0.4,-0.045) -- (-0.247837128813333,0.208282727471155);
\draw[->,thick,style=dashed,-{Latex[length=3mm]},color=gray] (-0.3,-0.045) -- (0.328131848336619,0.304260805690042);
\draw[->,thick,style=dashed,-{Latex[length=3mm]},color=gray] (-0.265,-0.185) -- (0.231062345890597,-0.30090565736794);
\foreach \Point in{
(0.241062345890597,-0.31090565736794),
(0.241492346391997,-0.306491373938237),
(0.242204302198292,-0.31028130189314),
(-0.237837128813333,0.218282727471155),
(-0.241242878449203,0.219828395421965),
(-0.244717972722455,0.223637960956324),
(0.341113674380515,0.314379323098166),
(0.338131848336619,0.314260805690042),
(0.341499626184849,0.318989333706044),
(0.0746726878614655,0.223236144919957),
(0.0744571911720449,0.217547706129814),
(0.0765208069229126,0.216759386301708),
(0.168384617160646,-0.0838239549216155),
(0.168858993295121,-0.0818920266535703),
(0.171822390670672,-0.0856662096053728),
(-0.172513271449556,0.0424310712207473),
(-0.172661728985868,0.0469317045430077),
(-0.175226818671407,0.0444431424359086),
(-0.0459411346271626,-0.104526106062333),
(-0.0452738266824777,-0.105492684945562),
(-0.0432477305975817,-0.105555367407904),
(-0.0971851167224968,-0.0121856114833435),
(-0.097444967637554,-0.0134324918561238),
(-0.0966080025213357,-0.0150597381552873),
(0.0396649081847912,-0.0207614388963999),
(0.0379238810481281,-0.0226229597868649),
(0.0378914942118878,-0.0211007261738968),
(-0.0208648187449568,-0.0115433764363155),
(-0.0204721680946645,-0.0100229954956309),
(-0.0201057214871735,-0.0110018485618566),
(-0.0323186281393812,-0.0374724485927024),
(-0.0324332913586788,-0.0373606458658648),
(-0.0315256681002676,-0.0344549086952489),
(-0.0798576274924564,-0.0321748155916029),
(-0.0805287350272375,-0.0326989349979263),
(-0.0791673020133657,-0.0321021986272637),
(-0.0778616377179512,-0.081107566387786),
(-0.0763809047751631,-0.085015599304082),
(-0.0829762841087377,-0.071995320000117),
(-0.0966256368294811,-0.108252010058871),
(-0.0965538740248481,-0.106798445277303),
(-0.0981282381157371,-0.108928938854668)}{
    \node at \Point {\tiny\raisebox{1ex}{\textbullet}};
}
\end{tikzpicture}
                \caption{}
                \label{fig:tibSnap}
        \end{subfigure}
        \hspace*{13mm}
        \begin{subfigure}[b]{0.4\textwidth}
                \includegraphics[clip,width=\textwidth]{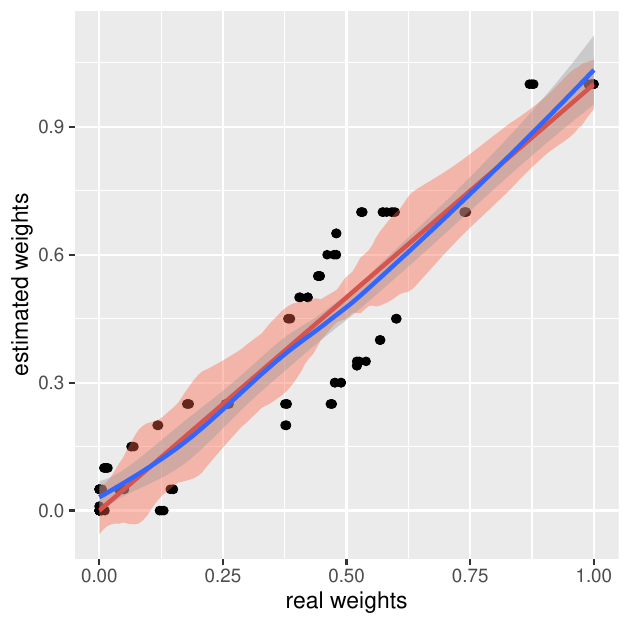}
                \caption{}
                \label{fig:tinCurve}
        \end{subfigure}%
        \caption{\small{Cell-type identification from micro-array data given in \cite{shen2010cell}. \ref{fig:tibSnap}: Visualization of data points, as well as the estimated simplex. Vertices of the estimated simplex highly resemble the expression levels of the ground truth tissues. \ref{fig:tinCurve}: Estimated weights for the samples as a function of real weights reported in the dataset. Data points are scattered around the $X=Y$ curve (red). Also, the result of a LOESS regression of the samples (blue) falls very close to the $X=Y$ curve.}}
\label{fig:compbi}
\end{figure}

Unmixing of cell-types from biological samples has become a commonplace task in bioinformatics, and in particular, in cancer therapy. In a recent work \cite{shen2010cell}, researchers have synthetically merged cells from three different tissues (brain, liver and kidney) with several different percentages, where combination weights were known prior to each merging. The aim of this experiment is to investigate whether one can deconvolve the gene-expression profiles of different tissues from a synthetic bulk dataset. However, the dataset in \cite{shen2010cell} is also a potentially appropriate target for Algorithm \ref{alg:rcchgd}. In fact, we can neglect the knowledge of the combination weights, and try to blindly infer both the unknown gene-expression profiles of the three tissues, as well as the combination weights, directly from the gene-expression levels of samples.

In this regard, we have used all the $n=42$ samples of the dataset in \cite{shen2010cell}, where the initial dimensionality of data is $K_{\mathrm{init}}=31100$. Due to the presence of redundant information in the dataset\footnote{Any $n$ points in $\mathbb{R}^{K_{\mathrm{init}}}$, for $n\leq K_{\mathrm{init}}$, lie on a $\left(n-1\right)$-dimensional linear subspace which is almost surely identifiable and can be attained by Principal Component Analysis. This way, the reduction of dimensionality in this case does not result into any information loss.}, samples have been linearly projected onto a corresponding $41$-dimensional subspace via Principal Component Analysis (PCA), and then the proposed method has been applied. Figure \ref{fig:compbi} illustrates the final results, where Figure \ref{fig:tibSnap} shows a qualitative two-dimensional visualization of the final simplex (via the first two principal components) that matches the ground truth gene-expression profiles of the tissues with a relatively high accuracy. From a more quantitative perspective, Figure \ref{fig:tinCurve} shows the inferred weights for all the $n=42$ samples as a function of real weights which are already given in the dataset. As can be seen, points are scattered around the $X=Y$ curve which is depicted in red. We have also shown the result of a LOESS (Locally Estimated Scatterplot Smoothing) regression of the samples via a blue curve, which is very close to the $X=Y$ curve. This suggests that the real weights and the ones that are estimated via our proposed method are similar to each other.


\begin{figure}[t]
\centering
        \begin{subfigure}[b]{0.50\textwidth}
                \includegraphics[trim=40mm 100mm 30mm 110mm,clip,width=\textwidth]{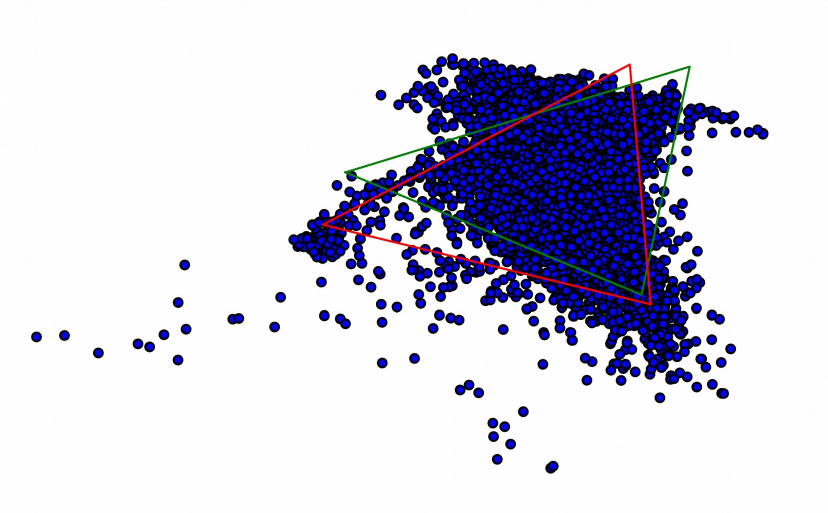}
                \caption{}
                \label{fig:cupritSnap}
        \end{subfigure}
        \begin{subfigure}[b]{0.48\textwidth}
                \includegraphics[trim=0mm 0mm 0mm 0mm,clip,width=\textwidth]{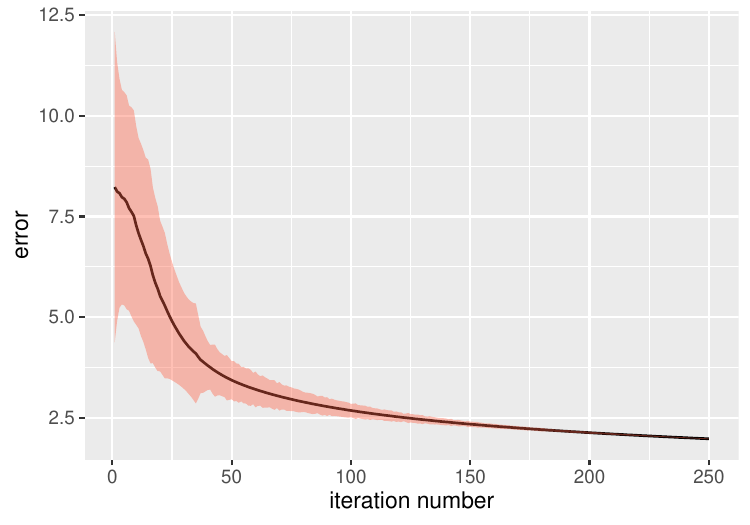}
                \caption{}
                \label{fig:cupritCurve}
        \end{subfigure}%
        \caption{\small{(a): Estimation of dominant elements in a hyper-spectral imaging dataset (red simplex) compared to the ground truth (green simplex). Three major elements, called {\it {Andradite}}, {\it {Dumortierite}} and {\it {Sphene}} have been identified which considerably match with our experimental findings. Data points have been visualized via the first two PCs. (b): Estimation error as a function of iterations for the {\it {Cuprit}} dataset.}}
\label{fig:cuprit}
\end{figure}

\subsection{Hyper-Spectral Remote Sensing}
\label{sec:exp:nasa}
Another major application of the ``unmixing problem" is in hyper-spectral remote sensing, where one aims to reveal the chemical composition of materials in a remote area by using hyper-spectral imaging devices. Each pixel in a hyper-spectral image is a vector including radiation intensities (in different wave-lengths) from a relatively small region of a remote location. This way, pixels can be considered as weighted averages of a limited number of radiation profiles which correspond to the chemical elements that are likely to be found in that location. The {\it {Cuprit}} dataset \cite{fyzhu_2014_TIP_DgS_NMF} includes $n=47500$ pixels, where each pixel has been measured in effectively $m=188$ spectral wavelengths. In \cite{fyzhu_2014_TIP_DgS_NMF}, it has been suggested that $K=12$ basic elements might be present in the remote location that corresponds to the image pixels, however, a smaller subset of them might be dominant.

We have randomly chosen $n=10000$ points from this dataset and reduced the dimensionality of data to $\mathrm{dim}=10$ via PCA. The latter is performed in order to eliminate the majority of linear dependencies and also to suppress the effect of noise. According to a simple eigen-analysis of the data, we chose the number of vertices for the simplex as $K+1=3$. Based on the above setting, Figure \ref{fig:cuprit} illustrates the performance of the proposed method on the data compared to the ground truth profiles presented in the literature. Similar to the previous sub-section, Figure \ref{fig:cupritSnap} depicts a two-dimensional visualization, obtained from the first two principal components of the data samples and the estimated simplex via Algorithm \ref{alg:rcchgd} (shown in red). For the sake of comparison, the ground truth profiles have been shown by a green simplex. Evidently, our method finds the dominant elements with an acceptable accuracy. Figure \ref{fig:cupritCurve} shows the approximation $error$ versus the number of iterations for Algorithm \ref{alg:rcchgd}.

\section{Conclusions}
\label{sec:conc}

This paper aims to develop a computational framework for learning of simplices in arbitrary dimensions. In order to deal with the combinatoric structure of this problem, a continuously-relaxed optimization scheme has been proposed which comes with new theoretical sample complexity bounds. In particular, we have shown that a sufficient sample complexity for both ML and the proposed Soft-ML surrogate are $n\ge O\left(K^2\log K\right)$, which is a significant improvement of the existing bounds for this problem, i.e. $O\left(K^{22}\right)$. In addition, a heuristic, yet computationally tractable algorithm is proposed for numerical optimization of our scheme. Numerous experiments magnify the applicability of our method on synthetic and real-world datasets. Our method has shown a comparable performance to a number of well-known rival strategies on noiseless data, while shows a considerable superiority in noisy regimes. For future works, one may attempt to provide similar sample complexity bounds for the noisy case, which has not been tackled yet. Assuming non-uniform distributions over the simplices to generate the data could be another interesting problem to consider. Also, deriving  the computational complexity of this problem, as well as finding lower bounds on the sample complexity, are examples of other existing research directions in this area.

\bibliographystyle{IEEEtran}
\bibliography{IEEEabrv,ref}

\appendix
\section{Proof of Theorems}
\label{appendix:Theorems}

\begin{proof}[Proof of Theorem \ref{corl:ML}]
We show that the problem of estimating the true simplex $\mathcal{S}_T$ up to a total-variation error of $\epsilon$ can be transformed into a binary classification task, where one aims to learn a classifier up to a same classification error rate. Mathematically speaking, for $\mathcal{S}\in\mathbb{S}_K$ and $\boldsymbol{X}\in\mathbb{R}^K$, let us define the parametric binary classifier $c\left(\cdot;\mathcal{S}\right)$ as
\begin{equation}
c\left(\boldsymbol{X};\mathcal{S}\right)\triangleq\left\{\begin{array}{lc}
0 & \boldsymbol{X}\in\mathcal{S}  \\
1 & \mathrm{O.W.}
\end{array}\right.,
\end{equation}
and the corresponding classifier family $\mathscr{C}$ as
\begin{equation}
\mathscr{C}\triangleq\left\{
c\left(\cdot;\mathcal{S}\right):\mathbb{R}^K\rightarrow\left\{1,0\right\}\bigg\vert~\mathcal{S}\in\mathbb{S}_K
\right\}.
\label{eq:defmathscrC}
\end{equation}
Having the input dataset $\boldsymbol{D}=\left\{\boldsymbol{X}_1,\ldots,\boldsymbol{X}_n\right\}$ drawn independently from distribution $\mathbb{P}_{\mathcal{S}_T}$, let us consider an augmented supervised dataset $\tilde{\boldsymbol{D}}$ which includes the $n$ data points $\boldsymbol{X}_1,\ldots,\boldsymbol{X}_n$, all labeled with $c\left(\cdot;\mathcal{S}_T\right)$; Thus, all data points in $\tilde{\boldsymbol{D}}$ are labeled with $0$. Assume there exists algorithm $\mathscr{A}$, which given the dataset $\tilde{\boldsymbol{D}}$, is able to output a {\it {consistent}} classifier $c^*\in\mathscr{C}$ and a corresponding simplex $\mathcal{S}^*\in\mathbb{S}_K$, i.e., $c^*\left(\cdot\right)=c\left(\cdot;\mathcal{S}^*\right)$, where the volume of $\mathcal{S}^*$ is strictly bounded. Here, by consistent, we mean $c^*$ classifies all the data points in $\tilde{\boldsymbol{D}}$ correctly. Mathematically speaking, let us assume $c^*$ and $\mathcal{S}^*$ have the following two properties:
\begin{align*}
&\mathrm{Consistency}:&\quad
c^*\left(\boldsymbol{X}_i\right)=0,~\mathrm{for~all}~i=1,\ldots,n,
\\
&\mathrm{Bounded~Volume~Property}:&\quad
\mathrm{Vol}\left(\mathcal{S}^*\right)\leq\mathrm{Vol}\left(\mathcal{S}_T\right),
\end{align*}
where $\mathcal{S}_T$ denotes the true simplex underlying the input data samples. We first claim that the simplex $\mathcal{S}^*$ which is found by algorithm $\mathscr{A}$ and has the above properties approximates the true simplex $\mathcal{S}_T$ up to a tolerable total-variation error. Then, we show that the ML algorithm has both of the above-mentioned properties and thus MLE is a desired estimator in the total-variation (Nikodym distance) sense. In order to do so, first we show the following:
\begin{lemma}
\label{lemma:TVlemma}
For any $\epsilon\in\left[0,1\right]$, if we have
$
\mathbb{E}_{\boldsymbol{X}\sim\mathbb{P}_{\mathcal{S}_T}}\left\{c^*\left(\boldsymbol{X}\right)\right\}\leq\epsilon,
$
then
$
\mathcal{D}_{\mathrm{TV}}\left(\mathbb{P}_{\mathcal{S}_T},\mathbb{P}_{\mathcal{S}^*}\right)\leq \epsilon
$.
\end{lemma}
\begin{proof}
It should be noted that
\begin{equation}
\mathbb{E}_{\boldsymbol{X}\sim\mathbb{P}_{\mathcal{S}_T}}\left\{c\left(\boldsymbol{X};\mathcal{S}^*\right)\right\}=
\mathbb{P}\left\{\boldsymbol{X}\sim\mathbb{P}_{\mathcal{S}_T}~\mathrm{and}~\boldsymbol{X}\in\mathcal{S}^*\right\}
\leq\epsilon, 
\end{equation}
which means $\mathrm{Vol}\left(\mathcal{S}^*\cap\mathcal{S}_T\right)\ge V_T\left(1-\epsilon\right)$, where $V_T=\mathrm{Vol}\left(\mathcal{S}_T\right)$. Also, we already know that $\mathrm{Vol}\left(\mathcal{S}^*\right)\leq V_T$. These facts lead us to the following three inequalities:
\begin{align}
&\mathrm{Vol}\left(\mathcal{S}^*\right)~\ge~ V_T\left(1-\epsilon\right),\\ &\mathrm{Vol}\left(\mathcal{S}^*-\mathcal{S}_T\right)~\leq~ \mathrm{Vol}\left(\mathcal{S}^*\right)-V_T\left(1-\epsilon\right)
\nonumber
\\
\mathrm{and}~~&
\mathrm{Vol}\left(\mathcal{S}_T-\mathcal{S}^*\right)~\leq~ V_T\epsilon.
\nonumber
\end{align}
On the other hand, based on the definition of the total variation distance between $\mathbb{P}_{\mathcal{S}^*}$ and $\mathbb{P}_{\mathcal{S}_T}$, we have
\begin{align}
\mathcal{D}_{\mathrm{TV}}\left(
\mathbb{P}_{\mathcal{S}^*},\mathbb{P}_{\mathcal{S}_T}
\right)
&\triangleq
\frac{1}{2}\left[
\int_{\mathcal{S}^*-\mathcal{S}_T}\rho_{\mathcal{S}^*}
+
\int_{\mathcal{S}_T-\mathcal{S}^*}\rho_{\mathcal{S}_T}
+
\int_{\mathcal{S}^*\cap\mathcal{S}_T}\left\vert
\rho_{\mathcal{S}^*}-\rho_{\mathcal{S}_T}
\right\vert
\right]
\nonumber\\
&\leq
\frac{1}{2}\left[
\frac{
\mathrm{Vol}\left(\mathcal{S}^*\right)-V_T\left(1-\epsilon\right)
}{\mathrm{Vol}\left(\mathcal{S}^*\right)}
+
\epsilon\cdot\frac{V_T}{V_T}
+
\left(1-\epsilon\right)\left(\frac{V_T}{\mathrm{Vol}\left(\mathcal{S}^*\right)}-1\right)
\right]
\nonumber\\
&=\epsilon,
\end{align}
where for any $\mathcal{S}\in\mathbb{S}_K$, $\rho_{\mathcal{S}}$ denotes the probability density function associated to $\mathbb{P}_{\mathcal{S}}$. This completes the proof.
\end{proof}
Next, we take advantage of a well-known generalization bound for consistent binary classifiers that has been originally implied from a series of theorems in  \cite{vapnik1982estimation}. We use a modified version of this bound which has been formulated in Theorem A2.1 of \cite{blumer1989learnability} and, for example, can also be found in Lemma 4 of \cite{hanneke2016optimal}. Formal statement of the above-mentioned generalization bound is rewritten in Lemma \ref{lemma:optimGeneral} for the sake of the readability of the paper. Before proceeding to the lemma, let us define the ``growth function" of a set of binary classifiers:
\begin{definition}
\label{def:growthfunction}
Assume $\mathcal{X}$ is an arbitrary set of feature vectors and $\mathscr{C}\subseteq\left\{0,1\right\}^{\mathcal{X}}$ denotes a corresponding set of binary classifiers. The growth function of $\mathscr{C}$, denoted by $\Pi_{\mathscr{C}}\left(\cdot\right):\mathbb{N}\rightarrow\mathbb{N}$, is defined such that for all $m\in\mathbb{N}$, $\Pi_{\mathscr{C}}\left(m\right)$ represents the maximum number of distinct ways that any set of $m$ points in $\mathcal{X}$ can be labeled by classifiers in $\mathscr{C}$. Mathematically speaking, we have
$$
\Pi_{\mathscr{C}}\left(m\right)\triangleq
\sup_{\boldsymbol{X}_1,\ldots,\boldsymbol{X}_m\in\mathcal{X}}~\left\vert\left\{
\left(c\left(\boldsymbol{X}_1\right),\ldots,c\left(\boldsymbol{X}_m\right)\right)\bigg\vert~c\in\mathscr{C}
\right\}\right\vert,
$$
where $\left\vert\left\{\cdot\right\}\right\vert$ denotes the cardinality of a set.
\end{definition}
\begin{lemma}
\label{lemma:optimGeneral}
Assume an arbitrary and unknown distribution $\mathbb{P}$ supported over a set $\mathcal{X}$, and a learnable class of binary classifiers $\mathscr{C}\subseteq{\left\{0,1\right\}}^{\mathcal{X}}$. Let $\Pi_{\mathscr{C}}:\mathbb{N}\rightarrow\mathbb{N}$ be the ``growth function" of $\mathscr{C}$. Assume a dataset $\tilde{\boldsymbol{D}}$ consisting of $n$ i.i.d. samples drawn from $\mathbb{P}$, where each sample has been labeled by a classifier $c_T\in\mathscr{C}$. 

Assume there exists algorithm $\mathscr{A}$ which given $\tilde{\boldsymbol{D}}$ outputs a consistent classifier $c^*\in\mathscr{C}$. Then, for any $\zeta>0$, the following generalization bound holds with probability at least $1-\zeta$:
$$
\mathbb{P}\left\{c^*\left(\boldsymbol{X}\right)\neq c_T\left(\boldsymbol{X}\right)\right\}\leq
O\left(\frac{\log\Pi_{\mathscr{C}}\left(2n\right) + \log\frac{1}{\zeta}}{n}\right).
$$
\end{lemma}
Proof can be found in any of the above-mentioned references. Moreover, Lemma \ref{lemma:GrowthFunc} proves that for the particular class of $K$-dimensional simplices, the growth function associated to the classifier set $\mathscr{C}$ can be bounded as
$\log\Pi_{\mathscr{C}}\left(n\right)\leq O\left(K^2\log\left(n/K\right)\right)$.
\begin{lemma}
For $K\in\mathbb{N}$, assume the classifier set $\mathscr{C}$ which consists of all the simplex-based binary classifiers in $\mathbb{R}^K$ as described in \eqref{eq:defmathscrC}. Then, the following upper-bound holds for the growth function $\Pi_{\mathscr{C}}$, for $n\in\mathbb{N}$:
\begin{equation*}
\Pi_{\mathscr{C}}\left(n\right)\leq
\left(\frac{ne}{K+1}\right)^{\left(K+1\right)^2}.
\end{equation*}
\label{lemma:GrowthFunc}
\end{lemma}
See Appendix \ref{app:aux} for the proof. Substituting into the generalization bound of Lemma \ref{lemma:optimGeneral}, we have the following bound for the total-variation error of the output of algorithm $\mathscr{A}$, denoted by $\mathcal{S}^*$:
\begin{equation}
\mathcal{D}_{\mathrm{TV}}\left(\mathbb{P}_{\mathcal{S}_T},\mathbb{P}_{\mathcal{S}^*}\right)\leq
O\left(\frac{K^2\log\frac{n}{K} + \log\frac{1}{\zeta}}{n}\right).
\end{equation}
Following some simple algebra, one can show that the minimum number of samples $n$ which guarantees a maximum total-variation error of $\epsilon>0$ with probability at least $1-\zeta$ for algorithm $\mathscr{A}$ must satisfy the following sample complexity bound:
\begin{equation}
n\geq O\left(\frac{K^2\log\frac{K}{\epsilon}+\log\frac{1}{\zeta}}{\epsilon}\right).
\end{equation}
What remains to prove is that the ML algorithm has the same {\it {consistency}} and {\it {bounded volume property}} as the imaginary algorithm $\mathscr{A}$. First, remember that all the data samples in the dataset $\tilde{\boldsymbol{D}}$ are labeled with $0$. On the other hand, MLE always finds a simplex that encapsulates all the data points $\boldsymbol{X}_1,\ldots,\boldsymbol{X}_n$. Therefore, the classifier in $\mathscr{C}$ which corresponds to the MLE of $\mathcal{S}_T$, say $c\left(\cdot;\mathcal{S}^*_{\mathrm{ML}}\right)$, is always a consistent classifier. This proves the consistency property. In addition, ML always finds the smallest (in terms of Lebesgue measure or volume) simplex that encapsulates all the samples. It should be noted that the true simplex $\mathcal{S}_T$ also corresponds to a consistent classifier in $\mathscr{C}$, which means the volume of $\mathcal{S}^*_{\mathrm{ML}}$ must be smaller or equal to the volume of $\mathcal{S}_T$ which is denoted by $V_T$, i.e.
\begin{equation}
\mathrm{Vol}\left(\mathcal{S}^*_{\mathrm{ML}}\right)\leq V_T.
\end{equation}
Therefore, ML algorithm satisfies the two properties mentioned above and the proof is complete.
\end{proof}


\begin{proof}[Proof of Theorem \ref{thm:PACnoiseless}]
Recall that $\mathcal{S}^*$ is obtained as
\begin{equation}
\mathcal{S}^*\triangleq~
\argmin_{\mathcal{S}\in\mathbb{S}_K}~\left\{
\hat{R}_{\mathrm{CRR}}\left(\mathcal{S}\right)
\triangleq
\frac{1}{n}\sum_{i=1}^{n}
\phi\left(\boldsymbol{X}_i,\mathcal{S}\right) + \gamma\mathrm{Vol}\left(\mathcal{S}\right)
\right\},
\end{equation}
where for $\boldsymbol{X}\in\mathbb{R}^K$,
$\phi\left(\boldsymbol{X},\mathcal{S}\right)\triangleq
\sqrt{n}\ell\left(d_{\mathcal{S}}\left(\boldsymbol{X}\right)\right)$. Instead of working directly with $\mathcal{S}^*$ and obtaining an upper bound on $\mathcal{D}_{\mathrm{TV}}\left(\mathbb{P}_{\mathcal{S}^*},\mathbb{P}_{\mathcal{S}_T}\right)$, we define the following set
\begin{align}
\mathcal{Q}\left(\epsilon \right)\triangleq
&\left\{
\mathcal{S}\in\mathbb{S}_K \bigg\vert~
\mathrm{Vol}\left(\mathcal{S}\right)\leq V_T~,
\right.
\\
&\hspace{19mm}
\mathrm{Vol}\left(\mathcal{S}\cap\mathcal{S}_T\right)\geq (1-\epsilon) V_T  ~,
\nonumber\\
&\hspace{19mm}
\left.
\max_{\boldsymbol{u}\in\mathcal{S}_T}d_{\mathcal{S}}\left(\boldsymbol{u}\right)\leq3\mathrm{Diam}\left(\mathcal{S}_T\right)
\right\},
\nonumber
\end{align}
where the diameter of a simplex $\mathcal{S}\in\mathbb{S}_K$, denoted by  $\mathrm{Diam}\left(\mathcal{S}\right)$, is defined as
\begin{equation}
\mathrm{Diam}\left(\mathcal{S}\right)\triangleq
\max_{\boldsymbol{u},\boldsymbol{u}'\in\mathcal{S}}\left\Vert
\boldsymbol{u}-\boldsymbol{u}'
\right\Vert_2.
\end{equation}
Then, we prove the following two claims:
\\
\textbf{Claim 1}: For all $\mathcal{S}\in \mathcal{Q}\left(\epsilon\right)$, we have $\mathcal{D}_{\mathrm{TV}}\left(\mathbb{P}_{\mathcal{S}},\mathbb{P}_{\mathcal{S}_T}\right)\leq \epsilon$. This is essentially a direct consequence of Lemma \ref{lemma:TVlemma} which has been stated and subsequently proved in Appendix \ref{appendix:Theorems}.
\\
\textbf{Claim 2}: The probability that $\mathcal{S}^* \in \mathcal{Q}\left(\epsilon\right)$ is at least $1-\zeta$, if the number of samples satisfies the following bound:
\begin{equation}
n\ge \left(
\frac{
6\ell\left(3\underline{\lambda}KV^{\frac{1}{K}}_T\right)
\left(
\sqrt{K^2\log\frac{ne}{K}}
+
\sqrt{\log\frac{1}{\zeta}}
\right) + \gamma V_T\epsilon}
{L\left(\frac{\epsilon V^{1/K}_T}{\left(K+1\right)\bar{\lambda}}\right)\epsilon}
\right)^2.
\end{equation}
In fact, combining the two claims will prove the theorem. Next, we prove each claim separately.

\begin{proof}[Proof of Claim 2]
Claim 2 can be shown by proving that the following  three conditions are held with probability at least $1-\zeta$:
\\
\textbf{C1}: $V^*\triangleq \mathrm{Vol}\left(\mathcal{S}^*\right)\leq V_T$, and consequently, the volume of the solution to \eqref{eq:proposed} is bounded.
\\
\textbf{C2}: $\max_{\boldsymbol{u}\in\mathcal{S}_T}d_{\mathcal{S}^*}\left(\boldsymbol{u}\right)\leq3\mathrm{Diam}\left(\mathcal{S}_T\right)$, which means $\mathcal{S}^*$ cannot be very far from $\mathcal{S}_T$.
\\
\textbf{C3}: $\mathrm{Vol}\left(\mathcal{S}^*\cap\mathcal{S}_T\right)\geq \left(1-\epsilon\right) V_T$.
\\
The first two conditions \textbf{C1} and \textbf{C2} do hold with certainty as the following lemma presents.

\begin{lemma}
With certainty, we have $V^* \leq V_T$ and $max_{\boldsymbol{u}\in\mathcal{S}_T}d_{\mathcal{S}^*}\left(\boldsymbol{u}\right)\leq3\mathrm{Diam}\left(\mathcal{S}_T\right)$. 
\label{lemma:solSpace}
\end{lemma}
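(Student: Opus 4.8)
Both conclusions will follow from the single optimality inequality $\hat{R}_{\mathrm{CRR}}\left(\mathcal{S}^*\right)\leq\hat{R}_{\mathrm{CRR}}\left(\mathcal{S}_T\right)$, combined with two elementary facts: every sample lies in $\mathcal{S}_T$, so $d_{\mathcal{S}_T}\left(\boldsymbol{X}_i\right)=0$ for all $i$; and the planar distance $\boldsymbol{x}\mapsto\max\{0,\max_k \boldsymbol{w}_k^T\boldsymbol{x}+b_k\}$ is a pointwise maximum of affine functions, hence nonnegative and convex. The plan is to first evaluate the risk at the feasible competitor $\mathcal{S}_T$, giving $\hat{R}_{\mathrm{CRR}}\left(\mathcal{S}_T\right)=\sqrt{n}\,\ell\left(0\right)+\gamma V_T$, and then to read off each conclusion by lower-bounding $\hat{R}_{\mathrm{CRR}}\left(\mathcal{S}^*\right)$ appropriately.

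For $V^*\leq V_T$, I would note that since $d_{\mathcal{S}^*}\left(\boldsymbol{X}_i\right)\geq0$ and $\ell$ is increasing, each term obeys $\ell\left(d_{\mathcal{S}^*}\left(\boldsymbol{X}_i\right)\right)\geq\ell\left(0\right)$, so $\hat{R}_{\mathrm{CRR}}\left(\mathcal{S}^*\right)\geq\sqrt{n}\,\ell\left(0\right)+\gamma V^*$. Chaining this with optimality and the value at $\mathcal{S}_T$ gives $\gamma V^*\leq\gamma V_T$, and dividing by $\gamma>0$ yields $V^*\leq V_T$. This part is immediate.

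The distance bound is the substantive part, which I would prove by contradiction. Suppose $\max_{\boldsymbol{u}\in\mathcal{S}_T}d_{\mathcal{S}^*}\left(\boldsymbol{u}\right)>3\mathrm{Diam}\left(\mathcal{S}_T\right)$. By convexity, the maximum of $d_{\mathcal{S}^*}$ over the simplex $\mathcal{S}_T$ is attained at a vertex, say $\boldsymbol{\theta}_j$, and is realized through a single facet $k^*$ of $\mathcal{S}^*$, where $\left(\boldsymbol{w}_{k^*},b_{k^*}\right)$ now denote the parameters of that facet: $\boldsymbol{w}_{k^*}^T\boldsymbol{\theta}_j+b_{k^*}>3\mathrm{Diam}\left(\mathcal{S}_T\right)$. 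The key geometric step is to propagate this protrusion to the entire true simplex: for any $\boldsymbol{v}\in\mathcal{S}_T$, using $\left\Vert\boldsymbol{w}_{k^*}\right\Vert_2=1$ and $\left\Vert\boldsymbol{v}-\boldsymbol{\theta}_j\right\Vert_2\leq\mathrm{Diam}\left(\mathcal{S}_T\right)$, Cauchy--Schwarz gives $\boldsymbol{w}_{k^*}^T\boldsymbol{v}+b_{k^*}>2\mathrm{Diam}\left(\mathcal{S}_T\right)$. Hence every point of $\mathcal{S}_T$, and in particular every sample $\boldsymbol{X}_i$, lies on the outer side of facet $k^*$ at planar distance $d_{\mathcal{S}^*}\left(\boldsymbol{X}_i\right)>2\mathrm{Diam}\left(\mathcal{S}_T\right)$.

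Feeding this back, monotonicity of $\ell$ gives $\hat{R}_{\mathrm{CRR}}\left(\mathcal{S}^*\right)\geq\sqrt{n}\,\ell\left(2\mathrm{Diam}\left(\mathcal{S}_T\right)\right)$, while optimality forces this quantity to be at most $\hat{R}_{\mathrm{CRR}}\left(\mathcal{S}_T\right)=\sqrt{n}\,\ell\left(0\right)+\gamma V_T$. Rearranging yields $\sqrt{n}\left(\ell\left(2\mathrm{Diam}\left(\mathcal{S}_T\right)\right)-\ell\left(0\right)\right)\leq\gamma V_T$, i.e. an \emph{upper} bound on $n$; I would then invoke the sample-size hypothesis of Theorem \ref{thm:PACnoiseless}, which places $n$ well above this threshold, to reach a contradiction, so the supposition fails and condition \textbf{C2} holds. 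The main obstacle, and the step I would spell out most carefully, is exactly this geometric propagation from one protruding vertex to the whole simplex: it converts a single far facet into a uniform penalty over all $n$ samples, and it hinges on the unit normalization of $\boldsymbol{w}_{k^*}$ together with the confinement of every sample to a set of diameter $\mathrm{Diam}\left(\mathcal{S}_T\right)$. One should also verify that $\ell$ rises enough that $\ell\left(2\mathrm{Diam}\left(\mathcal{S}_T\right)\right)>\ell\left(0\right)$, which holds in precisely the regime where the bound of Theorem \ref{thm:PACnoiseless} is finite, so that the derived inequality is genuinely violated by the assumed lower bound on $n$.
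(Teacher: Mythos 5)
Your volume bound $V^*\le V_T$ is exactly the paper's argument and is fine. The distance bound is where you diverge, and your route does not prove the lemma as stated. The lemma asserts the bound \emph{with certainty}, i.e.\ for every $n$, every $\gamma\ge0$, and every dataset contained in $\mathcal{S}_T$; it is a deterministic structural fact about the minimizer, invoked in the proof of Theorem \ref{thm:PACnoiseless} before the sample-size hypothesis enters (it is what legitimizes restricting attention to $\mathcal{Q}\left(\epsilon\right)\cup\mathcal{Q}^c\left(\epsilon\right)$ and defining $g_{\max}$ there). Your contradiction terminates in the inequality $\sqrt{n}\left(\ell\left(2\mathrm{Diam}\left(\mathcal{S}_T\right)\right)-\ell\left(0\right)\right)\le\gamma V_T$, which you can only refute by importing the lower bound on $n$ from the theorem. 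Besides turning an unconditional lemma into a conditional one, the verification that the theorem's bound actually exceeds your threshold is not free: you must compare $\ell\left(2\mathrm{Diam}\left(\mathcal{S}_T\right)\right)-\ell\left(0\right)$ against $L\left(\epsilon V_T^{1/K}/\left(\left(K+1\right)\bar{\lambda}\right)\right)$, which requires both $L\left(x\right)\le\ell\left(x\right)-\ell\left(0\right)$ and a bound of the form $V_T^{1/K}\lesssim\mathrm{Diam}\left(\mathcal{S}_T\right)$; you assert this step rather than prove it.

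The paper avoids all of this by choosing a smarter competitor: not $\mathcal{S}_T$, but a \emph{translate} $\hat{\mathcal{S}}^*$ of $\mathcal{S}^*$ itself, shifted so that it contains the sample centroid $\bar{\boldsymbol{X}}=\frac{1}{n}\sum_i\boldsymbol{X}_i$. Since translation preserves volume, the $\gamma\,\mathrm{Vol}$ terms cancel in the comparison and only the loss terms matter: if some facet of $\mathcal{S}^*$ satisfies $\boldsymbol{w}_k^T\bar{\boldsymbol{X}}+b_k>2\mathrm{Diam}\left(\mathcal{S}_T\right)$, then every sample has planar distance exceeding $\mathrm{Diam}\left(\mathcal{S}_T\right)$ from $\mathcal{S}^*$, whereas (using the elementary fact, proved in the paper, that the planar distance is dominated by the Euclidean distance to the simplex) every sample is within planar distance $\left\Vert\boldsymbol{X}_i-\bar{\boldsymbol{X}}\right\Vert_2\le\mathrm{Diam}\left(\mathcal{S}_T\right)$ of $\hat{\mathcal{S}}^*$, contradicting optimality for every $n\ge1$ and every $\gamma\ge0$. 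Propagating the resulting centroid bound to all of $\mathcal{S}_T$ via Cauchy--Schwarz (your step, which is correct) then yields the factor $3$. If you replace your competitor $\mathcal{S}_T$ with this translate, your argument becomes unconditional and the unverified numeric comparison disappears.
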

See Appendix \ref{app:aux} for the proof. From Lemma \ref{lemma:solSpace}, one can deduce that $\mathcal{S}^*$  belongs to either $\mathcal{Q}\left(\epsilon\right)$ or its partial complement $\mathcal{Q}^c\left(\epsilon\right)$ which is defined as
\begin{align}
\mathcal{Q}^c\left(\epsilon \right)\triangleq
&\left\{
\mathcal{S}\in\mathbb{S}_K \bigg\vert~
\mathrm{Vol}\left(\mathcal{S}\right)\leq V_T~,
\right.
\\
&\hspace{19mm}
\mathrm{Vol}\left(\mathcal{S}\cap\mathcal{S}_T\right)< (1-\epsilon) V_T  ~,
\nonumber\\
&\hspace{19mm}
\left.
\max_{\boldsymbol{u}\in\mathcal{S}_T}d_{\mathcal{S}}\left(\boldsymbol{u}\right)\leq3\mathrm{Diam}\left(\mathcal{S}_T\right)
\right\},
\nonumber
\end{align}
For a given $\epsilon$, we are interested in finding $n$ such that 
$\mathcal{S}^*\in\mathcal{Q}(\epsilon)$ with probability at least $1-\zeta$. To this end, we seek an appropriate function $u(n,\epsilon)$ such that 
\begin{equation}
\mathbb{P}\left\{ \inf_{\mathcal{S}\in \mathcal{Q}^c (\epsilon)} \hat{R}_{\mathrm{CRR}}\left(\mathcal{S}\right) - \hat{R}_{\mathrm{CRR}}\left(\mathcal{S}_T\right) > u(n,\epsilon) \right\} > 1-\zeta.
\end{equation}
Next, we find $n$ such that $u(n,\epsilon)$ is non-negative. In this way, one can deduce that $\mathcal{S}_T$ is preferable over all the simplices in $\mathcal{Q}^c(\epsilon)$ with probability at least $1-\zeta$. Since $\mathcal{S}^*$ is the optimal solution, it is in $\mathcal{Q}(\epsilon)$ with probability at least $1-\zeta$. To this aim, we obtain a candidate for $u(n,\epsilon)$ by considering two facts for each $\mathcal{S}\in\mathcal{Q}^c\left(\epsilon\right)$. First,
$\hat{R}_{\mathrm{CRR}}\left(\mathcal{S}\right)$ is sufficiently close to its statistical average $
\mathbb{E}_{\mathcal{S}_T}
\hat{R}_{\mathrm{CRR}}\left(\mathcal{S}\right)$. Second, $
\mathbb{E}_{\mathcal{S}_T}
\hat{R}_{\mathrm{CRR}}\left(\mathcal{S}\right)$ is sufficiently larger than $\hat{R}_{\mathrm{CRR}}\left(\mathcal{S}_T\right)$. The main challenge is to show that these two facts hold for all simplices in $\mathcal{Q}^c(\epsilon)$, with high probability. To prove the first fact, we define $\phi_{\max}$ as
\begin{equation}
\phi_{\max}\triangleq \sup_{\boldsymbol{X}\in\mathcal{S}_T}~\sup_{\mathcal{S}\in\mathcal{Q}(\epsilon)\cup \mathcal{Q}^c(\epsilon) }\phi\left(\boldsymbol{X},\mathcal{S}\right),
\label{eq:gmax}
\end{equation}
and use it to normalize the function $\phi$ as
\begin{equation}
\bar{\phi}\left(\boldsymbol{X},\mathcal{S}\right)\triangleq \frac{
\sqrt{n}\ell\left(d_{\mathcal{S}}\left(\boldsymbol{X}\right)\right)
}{\phi_{\max}}.
\label{eq:gbardef}
\end{equation}
For any $\boldsymbol{X}\in\mathcal{S}_T$ and $\mathcal{S}\in\mathcal{Q}(\epsilon)\cup \mathcal{Q}^c(\epsilon)$, we have:
\begin{align}
\phi_{\max}=
\sup_{\boldsymbol{X},\mathcal{S}}~
\sqrt{n}\ell\left(
d_{\mathcal{S}}\left(\boldsymbol{X}\right)
\right)
\leq
\sqrt{n}\ell\left(
3\max_{\boldsymbol{u},\boldsymbol{u}'\in\mathcal{S}_T}~
\left\Vert
\boldsymbol{u}-\boldsymbol{u}'
\right\Vert_2
\right) 
\leq \sqrt{n}\ell\left(3\underline{\lambda}KV^{\frac{1}{K}}_T\right),
\end{align}
which holds due to both the $\left(\underline{\lambda},\bar{\lambda}\right)$-isoperimetricity assumption on $\mathcal{S}_T$, and $\ell$ being an increasing function. In more details, we already know from Definition \ref{def:shape} that 
\begin{equation}
\max_{\boldsymbol{u},\boldsymbol{u}'\in\mathcal{S}_T}~
\left\Vert
\boldsymbol{u}-\boldsymbol{u}'
\right\Vert_2
\leq
\max_{\boldsymbol{\theta},\boldsymbol{\theta}'\in\boldsymbol{\Theta_T}}~
\left\Vert
\boldsymbol{\theta}-\boldsymbol{\theta}'
\right\Vert_2
\leq
\underline{\lambda}KV^{\frac{1}{K}}_T.
\end{equation}
Also, since $\ell$ is increasing, the maximum of its output occurs when the input argument is maximized. Defining the function set $\Phi$ as
\begin{equation*}
\Phi\triangleq
\left\{
\bar{\phi}\left(\cdot,\mathcal{S}\right):\mathcal{S}_T\rightarrow\left[0,1\right]
\big\vert~\mathcal{S}\in\mathcal{Q}^c\left(\epsilon\right)
\right\},
\end{equation*}
and using one of the fundamental theorems in Vapnik-Chervonenkis (VC) theory of statistical learning gives us the following bound, for all $\zeta>0$:
\begin{equation}
\mathbb{P}\left\{
\sup_{\mathcal{S}\in\mathcal{Q}^c(\epsilon)}
\left\vert
\frac{
\hat{R}_{\mathrm{CRR}}\left(\mathcal{S}\right)
-
\mathbb{E}_{\mathcal{S}_T}
\hat{R}_{\mathrm{CRR}}\left(\mathcal{S}\right)
}{\phi_{\max}}
\right\vert \stackrel{\mathrm{(a)}}{\leq}
2\mathfrak{R}_n\left\{\Phi\right\} + \sqrt{\frac{\log\left({2}/{\zeta}\right)}{2n}}
\right\}\stackrel{\mathrm{(b)}}{\ge} 1-\zeta,
\label{eq:VCbound}
\end{equation}
where $\mathfrak{R}_n\left\{\Phi\right\}$ represents the {\it {Rademacher complexity}} of the function set $\Phi$, for sample size $n$ and dataset $\boldsymbol{D}=\left\{\boldsymbol{X}_1,\ldots,\boldsymbol{X}_n\right\}\sim\mathbb{P}_{\mathcal{S}_T}$ \cite{mohri2012foundations}. 

One of the main challenges is to find an appropriate upper bound on  $\mathfrak{R}_n\left\{\Phi\right\}$. In the following lemma, we obtain an upper bound showing that both terms in the r.h.s. of inequality (a) in \eqref{eq:VCbound} are $\tilde{O}\left(n^{-1/2}\right)$.

\begin{lemma}
\label{lemma2}
Rademacher complexity of the function set $\Phi$, for all $\epsilon>0$ and increasing functions $\ell:\mathbb{R}\rightarrow\mathbb{R}$ can be bounded as
\begin{equation*}
\mathfrak{R}_n\left\{\Phi\right\}\leq
\sqrt{\frac{4K^2}{n}\log\left(\frac{ne}{K}\right)},
\end{equation*}
for $n,K\in\mathbb{N}$
\end{lemma}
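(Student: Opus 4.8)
The feature of the statement that dictates the whole strategy is that $\ell$ is assumed only increasing and integrable --- possibly non-Lipschitz and even discontinuous --- so the Talagrand contraction principle, which would otherwise peel $\ell$ off and leave a linear class, is unavailable. My plan is to lean on monotonicity alone, through a layer-cake (horizontal slicing) decomposition, in order to reduce the \emph{real-valued} complexity $\mathfrak{R}_n\{\mathcal{G}(\ell,\epsilon)\}$ to the \emph{binary} Rademacher complexity of the class of indicator functions of $K$-simplices, a purely combinatorial object amenable to Sauer's lemma.

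Because the normalization in \eqref{eq:gbardef} forces each $\bar{g}(\cdot,\mathcal{S})$ to take values in $[0,1]$, I can write, for every sample and every $\mathcal{S}\in\mathcal{Q}^c(\epsilon)$,
\[
\bar{g}(\boldsymbol{X}_i,\mathcal{S})=\int_{0}^{1}\mathbf{1}\!\left[\bar{g}(\boldsymbol{X}_i,\mathcal{S})>u\right]\mathrm{d}u=\int_{0}^{1}\mathbf{1}\!\left[d_{\mathcal{S}}(\boldsymbol{X}_i)>r(u)\right]\mathrm{d}u,
\]
where $r(u)$ is the generalized inverse of the increasing map $d\mapsto \sqrt{n}\,\ell(d)/g_{\max}$ and satisfies $0\le r(u)\le 3\underline{\lambda}KV_T^{1/K}$. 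The decisive geometric fact, recorded after Definition \ref{def:planarDist} (see Figure \ref{fig:planarDist}), is that each sublevel set $\{\boldsymbol{x}:d_{\mathcal{S}}(\boldsymbol{x})\le r\}$ is again a simplex, namely the intersection of the $K+1$ halfspaces $\boldsymbol{w}_k^{T}\boldsymbol{x}+b_k\le r$. Hence $\mathbf{1}[d_{\mathcal{S}}(\boldsymbol{X}_i)>r(u)]=1-\mathbf{1}_{\mathcal{S}_{r(u)}}(\boldsymbol{X}_i)$ with $\mathcal{S}_{r(u)}\in\mathbb{S}_K$. Substituting this into the empirical Rademacher average, exchanging $\mathbb{E}_{\boldsymbol{\sigma}}$ with the integral, and bounding the supremum of an integral by the integral of the supremum, I may let the simplex vary freely at each level $u$; the constant $1$ is killed by $\mathbb{E}_{\boldsymbol{\sigma}}\sum_i\sigma_i=0$, and the sign symmetry of the Rademacher variables turns complementation into a harmless sign flip. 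Since the outer integral runs over the unit interval, the prefactor is exactly $1$ and I obtain $\mathfrak{R}_n\{\mathcal{G}(\ell,\epsilon)\}\le \mathfrak{R}_n\!\left\{\mathbf{1}_{\mathcal{S}'}:\mathcal{S}'\in\mathbb{S}_K\right\}$.

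It then remains to bound the Rademacher complexity of the indicator class of all $K$-simplices. I would use that a simplex is an intersection of $K+1$ affine halfspaces in $\mathbb{R}^K$, each of VC dimension $K+1$, so by Sauer's lemma each halfspace realizes at most $\sum_{i=0}^{K+1}\binom{n}{i}\le (en/K)^{K+1}$ dichotomies on $n$ fixed points, and the intersection of $K+1$ of them realizes at most the product, giving a growth function $\Pi(n)\le (en/K)^{(K+1)^2}$ and hence $\log\Pi(n)\le 2K^2\log(ne/K)$ for $K$ not too small. Massart's finite-class lemma, applied to the at most $\Pi(n)$ binary labelings of $\boldsymbol{X}_1,\ldots,\boldsymbol{X}_n$ (vectors of Euclidean norm at most $\sqrt{n}$), then yields $\mathfrak{R}_n\{\mathbf{1}_{\mathcal{S}'}\}\le\sqrt{2\log\Pi(n)/n}\le\sqrt{4K^2\log(ne/K)/n}$, which is exactly the claimed bound.

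The main obstacle, and the place where genuine care is required, is the monotone reduction of the second paragraph rather than the combinatorial counting: one must ensure the layer-cake identity holds for a general --- possibly discontinuous and only weakly increasing --- $\ell$ by invoking its generalized inverse, verify that the expanded sublevel sets $\mathcal{S}_{r(u)}$ are genuine members of $\mathbb{S}_K$ uniformly in $u$, and justify the interchange of expectation, supremum, and integral. The combinatorial step is comparatively routine, the only subtlety being to keep the exponent at $O(K^2)$ --- avoiding the extra $\log K$ that a crude VC-dimension-of-intersections bound would introduce --- by counting dichotomies of the $K+1$ halfspaces directly and multiplying.
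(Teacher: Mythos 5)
Your proposal is correct and reaches the paper's bound, but the key reduction step is argued by a genuinely different route. The paper establishes the pointwise domination $\sup_{\mathcal{S}\in\mathcal{Q}^c(\epsilon)}\frac{1}{n}\sum_i\sigma_i\bar{g}(\boldsymbol{X}_i,\mathcal{S})\leq\sup_{\mathcal{S}\in\mathbb{S}_K}\frac{1}{n}\sum_i\sigma_i f(\bar{g}(\boldsymbol{X}_i,\mathcal{S}))$ (its inequality \eqref{eq:softsimplexclaim}) through a bespoke combinatorial ``marking algorithm'': it sorts the points by their $\bar{g}$-values under the maximizing simplex, pairs opposite Rademacher signs, and exhibits a single outward-shifted simplex $\mathcal{S}^*_{\max}=\{\boldsymbol{x}:d_{\mathcal{S}_{\max}}(\boldsymbol{x})\leq\varepsilon\}$ whose binary labels dominate term by term. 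You obtain the same domination from the layer-cake identity $\bar{g}=\int_0^1\mathbf{1}[\bar{g}>u]\,\mathrm{d}u$ together with the observation that every sublevel set of $d_{\mathcal{S}}$ is itself a simplex, then push the supremum inside the integral; note that the paper's shifted simplex $\mathcal{S}^*_{\max}$ is exactly one member of your family $\mathcal{S}_{r(u)}$, so your argument can be read as replacing the paper's hand-picked level by an average over all levels. Your route is shorter, uses only the monotonicity of $\ell$ in a transparent way (it is the standard ``monotone transform equals mixture of level sets'' device), and sidesteps the bookkeeping about ties and unmatched indices; the paper's argument has the cosmetic advantage of never invoking a generalized inverse or an interchange of supremum and integral, though the technical care you flag (strict versus non-strict level sets, degenerate levels where the indicator is constant) is routine since open and closed half-spaces have the same VC dimension and the constant labelings are realizable by simplices. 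From that point on the two proofs coincide: intersections of $K+1$ half-spaces, the product rule for growth functions, Sauer--Shelah, and Massart's lemma, with the same mild $(K+1)^2\lesssim 2K^2$ slack that the paper itself uses to reach the stated constant.
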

See Appendix \ref{app:aux} for the proof. The second fact, i.e. for all $\mathcal{S}\in\mathcal{Q}^c\left(\epsilon\right)$, the value of $\mathbb{E}_{\mathcal{S}_T}\hat{R}_{\mathrm{CRR}}\left(\mathcal{S}\right)$ is sufficiently larger than $\hat{R}_{\mathrm{CRR}}\left(\mathcal{S}_T\right)$, is formalized by the following lemma.

\begin{lemma}
Let $\mathcal{S}_T\in\mathbb{S}_K$  be $\left(\underline{\lambda},\bar{\lambda}\right)$-isoperimetric and assume $\boldsymbol{D}\triangleq\left\{\boldsymbol{X}_1,\ldots,\boldsymbol{X}_n\right\}$ are $n$ i.i.d. samples drawn uniformly from $\mathcal{S}_T$. For any increasing and integrable function $\ell$, all $\gamma\ge 0$ and $0< \epsilon\leq 1$, 
assume we have
\begin{equation}\label{eq:condition_on_n}
n\ge
\left(
\frac{\gamma V_T}{\ell\left(\frac{\epsilon V^{1/K}_T}{\left(K+1\right)\bar{\lambda}}\right)}
\right)^2.
\end{equation}
Then, the following bound holds:
\begin{align}
\inf_{\mathcal{S}\in\mathcal{Q}^c\left(\epsilon\right)}\mathbb{E}_{\mathcal{S}_T}\left\{
\hat{R}_{\mathrm{CRR}}\left(\mathcal{S};\boldsymbol{D},\ell,\gamma\right)
\right\}\ge \left(
\sqrt{n}L\left(\frac{\epsilon V^{1/K}_T}{\left(K+1\right)\bar{\lambda}}\right) - 
\gamma V_T
\right)\epsilon
+ \hat{R}_{\mathrm{CRR}}\left(\mathcal{S}_T;\boldsymbol{D},\ell,\gamma\right),
\end{align}
where $L\left(x\right)\triangleq\int_{0}^{x}\ell\left(u\right)\mathrm{d}u - \ell\left(0\right)$.
\label{lemma:ExpLossBound}
\end{lemma}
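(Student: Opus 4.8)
The plan is to treat this as a purely deterministic inequality, since both sides are non-random: every sample lies in $\mathcal{S}_T$, so $d_{\mathcal{S}_T}(\boldsymbol{X}_i)=0$ and hence $\hat{R}_{\mathrm{CRR}}(\mathcal{S}_T)=\sqrt{n}\,\ell(0)+\gamma V_T$ exactly, while $\mathbb{E}_{\mathcal{S}_T}\hat{R}_{\mathrm{CRR}}(\mathcal{S})=\frac{\sqrt{n}}{V_T}\int_{\mathcal{S}_T}\ell(d_{\mathcal{S}}(\boldsymbol{x}))\,\mathrm{d}\boldsymbol{x}+\gamma\,\mathrm{Vol}(\mathcal{S})$. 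Subtracting, the quantity to lower-bound becomes
\[
\frac{\sqrt{n}}{V_T}\,I(\mathcal{S})-\gamma\big(V_T-\mathrm{Vol}(\mathcal{S})\big),\qquad I(\mathcal{S})\triangleq\int_{\mathcal{S}_T}\big[\ell(d_{\mathcal{S}}(\boldsymbol{x}))-\ell(0)\big]\,\mathrm{d}\boldsymbol{x}.
\]
Thus the whole lemma reduces to a lower bound on the ``excess loss'' integral $I(\mathcal{S})$, weighed against the volume deficit $\Delta\triangleq V_T-\mathrm{Vol}(\mathcal{S})\in[0,V_T]$ (recall $\mathrm{Vol}(\mathcal{S})\leq V_T$ on $\mathcal{Q}^c(\epsilon)$).

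The heart of the argument is a layer-cake analysis of $I$. Setting $\mu(t)\triangleq\mathrm{Vol}\{\boldsymbol{x}\in\mathcal{S}_T:d_{\mathcal{S}}(\boldsymbol{x})>t\}$, Fubini gives $I(\mathcal{S})=\int_0^\infty\mu(t)\,\mathrm{d}\ell(t)$. I would first record the two elementary facts $\mu(0)=\mathrm{Vol}(\mathcal{S}_T\setminus\mathcal{S})\geq\max\{\epsilon V_T,\Delta\}$, the first inequality coming from $\mathrm{Vol}(\mathcal{S}\cap\mathcal{S}_T)<(1-\epsilon)V_T$ on $\mathcal{Q}^c(\epsilon)$ and the second from $\mathrm{Vol}(\mathcal{S}\cap\mathcal{S}_T)\leq\mathrm{Vol}(\mathcal{S})$. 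The key geometric step is to control how fast $\mu$ decays: since $d_{\mathcal{S}}=\max_k(\boldsymbol{w}_k^T\boldsymbol{x}+b_k)$ has unit gradient wherever a single facet is active, the coarea formula gives $-\mu'(t)=\mathcal{H}^{K-1}\!\big(\{d_{\mathcal{S}}=t\}\cap\mathcal{S}_T\big)$ for a.e. $t$, and this level set is exactly $\partial\mathcal{S}^{(t)}\cap\mathcal{S}_T$ where $\mathcal{S}^{(t)}\triangleq\{d_{\mathcal{S}}\leq t\}$ is the outward $t$-expansion of $\mathcal{S}$ (a simplex containing $\mathcal{S}$, as noted after Definition \ref{def:planarDist}). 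As $\mathcal{S}^{(t)}\cap\mathcal{S}_T\subseteq\mathcal{S}_T$ are nested convex bodies, the part $\partial\mathcal{S}^{(t)}\cap\mathcal{S}_T$ of its boundary has area at most $\mathrm{SurfArea}(\mathcal{S}^{(t)}\cap\mathcal{S}_T)$, and by monotonicity of surface area under convex inclusion this is at most $\mathrm{SurfArea}(\mathcal{S}_T)=\sum_k\mathrm{Vol}(\mathcal{S}_{T,-k})\leq(K+1)\bar{\lambda}V_T^{(K-1)/K}\triangleq\beta$, the last step invoking the $\bar{\lambda}$-isoperimetricity of $\mathcal{S}_T$. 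I expect this surface-area/decay estimate to be the main obstacle; everything else is calculus.

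Given $-\mu'\leq\beta$ and $\mu(0)\geq m$, the linear envelope $\mu(t)\geq m-\beta t$ holds on $[0,m/\beta]$, so $I(\mathcal{S})\geq\int_0^{m/\beta}(m-\beta t)\,\mathrm{d}\ell(t)$. Integration by parts converts the right-hand side into $m\,L(m/\beta)$, with $L(x)=\frac1x\int_0^x\ell(u)\,\mathrm{d}u-\ell(0)$ — this is precisely how the functional $L$ of Theorem \ref{thm:PACnoiseless} emerges, and I note $L\geq0$ and $L$ increasing because $\ell$ is increasing. Writing $\tau\triangleq\epsilon V_T/\beta=\frac{\epsilon V_T^{1/K}}{(K+1)\bar{\lambda}}$ and taking $m=\mu(0)$ yields the two usable bounds $I(\mathcal{S})\geq\mu(0)\,L(\mu(0)/\beta)\geq\epsilon V_T\,L(\tau)$.

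It remains to assemble the pieces through a short case split on $\Delta$. When $\Delta\leq\epsilon V_T$ the volume-deficit term costs at most $\gamma\epsilon V_T$, and combining with $I\geq\epsilon V_T L(\tau)$ reproduces the target $(\sqrt{n}\,L(\tau)-\gamma V_T)\epsilon$ exactly, with no condition on $n$. When $\Delta>\epsilon V_T$ the candidate simplex is ``too small,'' but then $\mu(0)\geq\Delta$ forces large distances, giving the sharper $I\geq\Delta\,L(\Delta/\beta)$; the inequality to verify is $g(\Delta)\geq g(\epsilon V_T)$ for $g(m)\triangleq\frac{\sqrt{n}}{V_T}\,m\,L(m/\beta)-\gamma m$, and a mean-value argument using $L'\geq0$ shows $g$ is non-decreasing on $[\epsilon V_T,\infty)$ as soon as $\sqrt{n}\,L(\tau)\geq\gamma V_T$. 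This monotonicity threshold is exactly the content of the hypothesis $n\geq(\gamma V_T/\ell(\tau))^2$, which guarantees the distance term dominates the regularizer in the small-volume regime. The two subtleties to watch are this geometric decay estimate and the precise (averaging) form of $L$ that the integration by parts produces.
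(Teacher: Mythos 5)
Your argument is correct in its overall architecture but takes a genuinely different route from the paper's. The paper partitions $\mathcal{S}_T-\mathcal{S}$ into the sets $\mathcal{A}_k$ of points whose planar distance is realized by the $k$th facet of $\mathcal{S}$, lower-bounds each conditional expectation $\mathbb{E}_{\mathcal{A}_k}\left[\ell\left(d_{\mathcal{S}}\right)\right]$ by an unproved rearrangement claim (that the expectation is minimized when $\mathcal{A}_k$ is a thin cylinder over the largest facet of $\mathcal{S}$), minimizes $\sum_k\alpha_k L\left(\Delta_k\right)$ over the probability simplex using convexity of $L$, and finally optimizes over the overlap parameter $\delta\in\left[\epsilon,1\right]$. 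Your layer-cake identity $I\left(\mathcal{S}\right)=\int_0^\infty\mu\left(t\right)\mathrm{d}\ell\left(t\right)$ combined with the coarea/surface-monotonicity bound $-\mu'\left(t\right)\leq\mathrm{SurfArea}\left(\mathcal{S}_T\right)\leq\left(K+1\right)\bar{\lambda}V_T^{\left(K-1\right)/K}$ replaces both the rearrangement claim and the simplex minimization in one stroke, and it applies the isoperimetricity hypothesis to $\mathcal{S}_T$ --- which is what the lemma actually assumes --- whereas the paper's proof applies it to the facets of the candidate $\mathcal{S}$ and carries a stray factor of $K$ that later disappears. Your derivation lands exactly on the constant $\left(K+1\right)\bar{\lambda}$ of the statement, your case split on $\Delta=V_T-\mathrm{Vol}\left(\mathcal{S}\right)$ plays the role of the paper's minimization over $\delta$, and your remark that the integration by parts forces the averaging form $L\left(x\right)=\frac{1}{x}\int_0^x\ell\left(u\right)\mathrm{d}u-\ell\left(0\right)$ correctly resolves the inconsistency between the $L$ written in the lemma and the one used in Theorem \ref{thm:PACnoiseless}. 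On balance your proof is tighter than the paper's at the geometric step.

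The one genuine gap is the monotonicity threshold in your second case. Your mean-value bound gives $g\left(\Delta\right)-g\left(\epsilon V_T\right)\ge\left(\Delta-\epsilon V_T\right)\left[\frac{\sqrt{n}}{V_T}L\left(\tau\right)-\gamma\right]$, and you then assert that $\sqrt{n}\,L\left(\tau\right)\ge\gamma V_T$ is ``exactly the content'' of the hypothesis $n\ge\left(\gamma V_T/\ell\left(\tau\right)\right)^2$. It is not: the hypothesis yields $\sqrt{n}\,\ell\left(\tau\right)\ge\gamma V_T$, and since $\ell$ is increasing one has $L\left(\tau\right)=\frac{1}{\tau}\int_0^\tau\left(\ell\left(u\right)-\ell\left(0\right)\right)\mathrm{d}u\leq\ell\left(\tau\right)-\ell\left(0\right)\leq\ell\left(\tau\right)$ whenever $\ell\left(0\right)\ge0$, so your sufficient condition is strictly stronger than what is assumed (e.g.\ for the step loss $\ell^*$ of Corollary \ref{corl:ML} the two can differ by a constant factor, and for a general increasing $\ell$ the gap can be arbitrarily large). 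The repair is to use the exact identity $\frac{\mathrm{d}}{\mathrm{d}x}\left[xL\left(x\right)\right]=\ell\left(x\right)-\ell\left(0\right)$, which gives $mL\left(m/\beta\right)-\epsilon V_TL\left(\tau\right)=\int_{\epsilon V_T}^{m}\left(\ell\left(s/\beta\right)-\ell\left(0\right)\right)\mathrm{d}s\ge\left(m-\epsilon V_T\right)\left(\ell\left(\tau\right)-\ell\left(0\right)\right)$ and hence $g\left(\Delta\right)-g\left(\epsilon V_T\right)\ge\left(\Delta-\epsilon V_T\right)\left[\frac{\sqrt{n}}{V_T}\left(\ell\left(\tau\right)-\ell\left(0\right)\right)-\gamma\right]$; this is nonnegative under the stated hypothesis up to the same $\ell\left(0\right)$ term that the paper itself silently drops at the identical step (and which vanishes for every loss actually used in the corollaries). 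With that substitution your proof goes through.
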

See Appendix \ref{app:aux} for the proof. From  \eqref{eq:VCbound} and Lemma \ref{lemma2},  the following lower-bound holds for all $\mathcal{S}\in\mathcal{Q}^c\left(\epsilon\right)$ with probability at least $1-\zeta$:
\begin{equation}
\hat{R}_{\mathrm{CRR}}\left(\mathcal{S}\right)\ge
\mathbb{E}_{\mathcal{S}_T}
\hat{R}_{\mathrm{CRR}}\left(\mathcal{S}\right)
- 
\ell\left(3\underline{\lambda}KV^{\frac{1}{K}}_T\right)
\left(
4\sqrt{K^2\log\frac{ne}{K}}
+
\sqrt{\log\frac{2}{\zeta}}
\right).
\end{equation}
Substituting from Lemma \ref{lemma:ExpLossBound}, for all $\mathcal{S}\in\mathcal{Q}^c\left(\epsilon\right)$ and with probability at least $1-\zeta$, we have
\begin{align}
\hat{R}_{\mathrm{CRR}}\left(\mathcal{S}\right)-
\hat{R}_{\mathrm{CRR}}\left(\mathcal{S}_T\right)
\ge~&
\left(
\sqrt{n}L\left(\frac{\epsilon V^{1/K}_T}{\left(K+1\right)\bar{\lambda}}\right) - 
\gamma V_T
\right)\epsilon-
\nonumber
\\
&
\ell\left(3\underline{\lambda}KV^{\frac{1}{K}}_T\right)
\left(
4\sqrt{K^2\log\frac{ne}{K}}
+
\sqrt{\log\frac{2}{\zeta}}
\right)\triangleq u(n,\epsilon),
\label{eq:sampCompThm1}
\end{align}
subject to $n$ satisfies \eqref{eq:condition_on_n}. In this way, we have obtained the desired $u(n,\epsilon)$. One can show that for
\begin{equation}
n\ge \left(
\frac{
6\ell\left(3\underline{\lambda}KV^{\frac{1}{K}}_T\right)
\left(
\sqrt{K^2\log\frac{ne}{K}}
+
\sqrt{\log\frac{1}{\zeta}}
\right) + \gamma V_T\epsilon}
{L\left(\frac{\epsilon V^{1/K}_T}{\left(K+1\right)\bar{\lambda}}\right)\epsilon}
\right)^2,
\end{equation}
both  \eqref{eq:condition_on_n} and $u\left(n,\epsilon\right)>0$ are satisfied. Therefore,
\begin{equation}
\mathbb{P}\left\{ \inf_{\mathcal{S}\in \mathcal{Q}^c\left(\epsilon\right)} \hat{R}_{\mathrm{CRR}}\left(\mathcal{S}\right) - \hat{R}_{\mathrm{CRR}}\left(\mathcal{S}_T\right) > 0 \right\} > 1-\zeta,
\end{equation}
which implies
$\mathbb{P}\left\{ \mathcal{S}^*
\in \mathcal{Q}\left(\epsilon\right)
\right\} > 1-\zeta$
and completes the proof of Claim 2.
\end{proof}
As mentioned earlier, combining Claims 1 and 2 proves Theorem \ref{thm:PACnoiseless}.
\end{proof}
\begin{proof}[Proof of Theorem \ref{lemma:Grad}]
According to the definition of $d_{\mathcal{S}}\left(\cdot\right)$, having $d_{\mathcal{S}}\left(\boldsymbol{X}\right)=0$ immediately implies that
$\boldsymbol{w}^T_k\boldsymbol{X}+b_k\leq0,~k=0,\ldots,K$. However, we have already assumed that $\boldsymbol{w}^T_k\boldsymbol{X}+b_k<0$ for all $k=0,\ldots,K$, which means $\boldsymbol{X}$ is inside $\mathcal{S}$ with a positive margin from its boundary. Therefore, an infinitesimally small perturbation of $\boldsymbol{\Theta}$ (regardless of the direction) cannot throw $\boldsymbol{X}$ out of $\mathcal{S}\left(\boldsymbol{\Theta}\right)$ and thus the gradient of $d_{\mathcal{S}\left(\boldsymbol{\Theta}\right)}\left(\boldsymbol{X}\right)$ is zero.

For the case of $d_{\mathcal{S}\left(\boldsymbol{\Theta}\right)}\left(\boldsymbol{X}\right)>0$, one can deduce that $\boldsymbol{w}^T_k\boldsymbol{X}+b_k>0$ for at least one $k\in\left\{0,\ldots,K\right\}$. Due to the assumption of lemma, $k^*\triangleq\argmax_{k}~\boldsymbol{w}^T_k\boldsymbol{X}+b_k$ is unique. This alternatively implies that there exists $\delta>0$ such that for all $\boldsymbol{u}\in\mathbb{R}^K$ with $\left\Vert{\boldsymbol{u}}-\boldsymbol{X}\right\Vert_2\leq \delta$ we have $d_{\mathcal{S}\left(\boldsymbol{\Theta}\right)}\left({\boldsymbol{u}}\right)=\boldsymbol{w}^T_{k^*}{\boldsymbol{u}}+b_{k^*}$. Therefore, we have
\begin{equation}
\nabla_{\boldsymbol{\Theta}}d_{\mathcal{S}\left(\boldsymbol{\Theta}\right)}\left(\boldsymbol{X}\right)
=
\nabla_{\boldsymbol{\Theta}}
\left(
\min_{\boldsymbol{p}\in\mathbb{R}^K\vert~\boldsymbol{p}^T\boldsymbol{1}=1}~
f\left(\boldsymbol{X},\boldsymbol{p}\right)
\triangleq
\left\Vert
\boldsymbol{X}-\boldsymbol{\Theta}_{-k^*}\boldsymbol{p}
\right\Vert_2
\right),
\end{equation}
which is based on the fact that $d_{\mathcal{S}\left(\boldsymbol{\Theta}\right)}\left(\boldsymbol{X}\right)$ is the minimum Euclidean distance of $\boldsymbol{X}$ from $\mathcal{H}_{k^*}=\left\{\boldsymbol{\Theta}_{-k^*}\boldsymbol{p}\vert~\boldsymbol{p}\in\mathbb{R}^K~,~\boldsymbol{p}^T\boldsymbol{1}=1\right\}$ which represents the hyper-plane including the $k^*$th facet of $\mathcal{S}$. Due to the continuity of $f$, one can use the envelope theorem and obtain the following formulation for the gradient:
\begin{equation}
\nabla_{\boldsymbol{\Theta}_{-k^*}}d_{\mathcal{S}\left(\boldsymbol{\Theta}\right)}\left(\boldsymbol{X}\right)
=
\nabla_{\boldsymbol{\Theta}_{-k^*}}\left\Vert
\boldsymbol{X}-\boldsymbol{\Theta}_{-k^*}\boldsymbol{p}^*
\right\Vert_2
=
\frac{\left(\boldsymbol{\Theta}_{-k^*}\boldsymbol{p}^*-\boldsymbol{X}\right)\boldsymbol{p}^{*T}}
{\left\Vert\boldsymbol{\Theta}_{-k^*}\boldsymbol{p}^*-\boldsymbol{X}\right\Vert_2},
\label{eq:gradSpareEq}
\end{equation}
where
\begin{equation}
\boldsymbol{p}^*\triangleq\argmin_{\boldsymbol{p}\in\mathbb{R}^K\vert~\boldsymbol{p}^T\boldsymbol{1}=1}~
\left\Vert
\boldsymbol{X}-\boldsymbol{\Theta}_{-k^*}\boldsymbol{p}
\right\Vert_2.
\end{equation}
It should not be surprising that derivatives w.r.t. $\boldsymbol{\theta}_{k^*}$ is zero since the $k^*$th vertex does not contribute to the planar distance from $\mathcal{H}_{k^*}$. Also, note that $\boldsymbol{p^*}$ can be computed analytically. More precisely, $\boldsymbol{\Theta}_{-k^*}\boldsymbol{p}^*$ is the projection of $\boldsymbol{X}$ into the $\left(K-1\right)$-dimensional subspace $\mathcal{H}_{k^*}$. Thus, we have
\begin{align}
\boldsymbol{\Theta}_{-k^*}\boldsymbol{p}^*&=\boldsymbol{X}-\left(
\boldsymbol{w}^T_{k^*}\boldsymbol{X}+b_{k^*}
\right)\boldsymbol{w}_{k^*},
\\
\mathrm{with}\quad b_{k^*}&=\frac{-1}{K}\boldsymbol{w}^T_{k^*}\sum_{k\neq k^*}\boldsymbol{\theta}_{k}=-\frac{1}{K}\boldsymbol{\Theta}_{-k^*}\boldsymbol{1},
\nonumber
\end{align}
where $\boldsymbol{w}_{k^*}$, as mentioned in the lemma, is the normal outward vector of the $k^*$th facet of $\mathcal{S}$. In order to compute $\boldsymbol{w}_{k^*}$, it should be noted that $\boldsymbol{w}_{k^*}$ must be orthogonal to all $\boldsymbol{\theta}_k-\frac{1}{K}\boldsymbol{\Theta}_{-k^*}\boldsymbol{1}$ for $k\neq k^*$. Thus, we have 
\begin{equation}
\left(\boldsymbol{\Theta}_{-k^*}-\frac{1}{K}
\boldsymbol{\Theta}_{-k^*}\boldsymbol{1}\boldsymbol{1}^T
\right)^T\boldsymbol{w}_{k^*}=
\left(\boldsymbol{I}-\frac{\boldsymbol{1}\boldsymbol{1}^T}{K}\right)
\boldsymbol{\Theta}^T_{-k^*}\boldsymbol{w}_{k^*}
=0.
\end{equation}
Since the columns of $\boldsymbol{\Theta}_{-k^*}-\frac{1}{K}
\boldsymbol{\Theta}_{-k^*}\boldsymbol{1}\boldsymbol{1}^T$, or equivalently
$\boldsymbol{\theta}_k-\frac{1}{K}\boldsymbol{\Theta}_{-k^*}\boldsymbol{1}$ for $k\neq k^*$, all lie on the hyper-plane $\mathcal{H}_{k^*}$, then $\left(\boldsymbol{I}-\frac{1}{K}\boldsymbol{1}\boldsymbol{1}^T\right)
\boldsymbol{\Theta}^T_{-k^*}$ must have at least one zero eigenvalue. On the other hand, since we have already assumed that $\mathcal{S}\left(\boldsymbol{\Theta}\right)$ has a non-zero volume, the projections of the mentioned columns on $\mathcal{H}_{k^*}$ are linearly independent in $\mathbb{R}^{K-1}$, and therefore the above-mentioned zero eigenvalue is unique. Thus, $\boldsymbol{w}_{k^*}$ can be obtained as described in the statement of lemma. In this regard, we have
\begin{align}
\boldsymbol{\Theta}_{-k^*}\boldsymbol{p}^*=\boldsymbol{X}-\left(
\boldsymbol{w}^T_{k^*}\boldsymbol{X}+b_{k^*}
\right)\boldsymbol{w}_{k^*}
=
\boldsymbol{X}-\boldsymbol{w}_{k^*}\boldsymbol{w}^T_{k^*}
\left(
\boldsymbol{X}-\frac{1}{K}\boldsymbol{\Theta}_{-k^*}\boldsymbol{1}
\right),
\end{align}
which means $\boldsymbol{p}^*\triangleq\boldsymbol{\Theta}^{\dagger}_{-k^*}\left(\boldsymbol{X}-\boldsymbol{w}_{k^*}\boldsymbol{w}_{k^*}^T\left(\boldsymbol{X}-\frac{1}{K}\boldsymbol{\Theta}_{-k^*}\boldsymbol{1}\right)\right)\in\mathbb{R}^K$. Note that using the pseudo-inverse operator does not result into any information loss here since $\boldsymbol{X}-\left(
\boldsymbol{w}^T_{k^*}\boldsymbol{X}+b_{k^*}
\right)\boldsymbol{w}_{k^*}$ lies on the hyper-plane specified by $\boldsymbol{\Theta}_{-k^*}$. Finally, we have
$\nabla_{\boldsymbol{\Theta}_{-k^*}}\left(\cdot\right)=\nabla_{\boldsymbol{\Theta}}\left(\cdot\right)\boldsymbol{J}_{k^*}$, and $\nabla_{\boldsymbol{\Theta}}\ell\left(\cdot\right)=\ell'\left(\cdot\right)\nabla_{\boldsymbol{\Theta}}\left(\cdot\right)$. Considering the fact that
\begin{equation}
\boldsymbol{w}_{k^*}=-\frac{\boldsymbol{\Theta}_{-k^*}\boldsymbol{p}^*-\boldsymbol{X}}
{\left\Vert\boldsymbol{\Theta}_{-k^*}\boldsymbol{p}^*-\boldsymbol{X}\right\Vert_2},
\end{equation}
and substituting into \eqref{eq:gradSpareEq}, completes the proof.
\end{proof}

\section{Proofs of Lemmas and Corollaries}
\label{app:aux}

\begin{proof}[Proof of Corollary \ref{corl:softML}]
Proof is similar to that of Corollary \ref{corl:ML}. First, let us compute the integral function $L\left(\cdot\right)$:
\begin{equation}
L\left(x\right)=\frac{1}{x}\int_{0}^{x}\left(1-e^{-bu}\right)\mathrm{d}u=
1-\left(\frac{1-e^{-bx}}{bx}\right),\quad
x\triangleq \frac{\epsilon V^{1/K}_T}{\left(K+1\right)\bar{\lambda}}.
\end{equation}
Considering the fact that $b=K/\epsilon$, it can be easily verified that $L\left(x\right)$ has a positive lower-bound which is independent of $K$, $\epsilon$ and $\zeta$. On the other hand, we have $\ell\left(\cdot\right)\leq 1$ due to the particular choice of $\ell$. Substituting the above results into the sample complexity of Theorem \ref{thm:PACnoiseless}, and using the same techniques which we already used for the proof Theorem \ref{corl:ML} will complete the proof.
\end{proof}


\begin{proof}[Proof of Lemma \ref{lemma:GrowthFunc}]
The proof is motivated by the fact that any $K$-simplex  $\mathcal{S}\in\mathbb{S}_K$ can be viewed as a set of points in $\mathbb{R}^{K}$ satisfying the following set of constraints:
\begin{equation}
\mathcal{S}=\left\{\boldsymbol{u}\in\mathbb{R}^K\big\vert~
\boldsymbol{w}^T_k\boldsymbol{u}+b_k\leq0,~\forall k=0,1,\ldots,K
\right\},
\end{equation}
where for each $k$, the couple $H_k\triangleq\left(\boldsymbol{w}_k,b_k\right)\in\mathbb{R}^K\times\mathbb{R}$ represent a $\left(K-1\right)$-dimensional linear hyper-plane that includes the $k$th facet of $\mathcal{S}$. Not to mention that $\boldsymbol{w}_k$ are considered to be outwards w.r.t. $\mathcal{S}$. We can then write $\mathcal{S}=\mathcal{S}\left(H_0,\ldots,H_K\right)$.

Assume one aims to use a $K$-simplex as a binary classifier, where for any $\mathcal{S}\in\mathbb{S}_K$ and $\boldsymbol{x}\in\mathbb{R}^K$, $\boldsymbol{x}$ is labeled as $+1$ if $\boldsymbol{x}\in\mathcal{S}$ and is labeled $-1$, otherwise. Then, we are interested to bound the maximum number of ways that a set of $n$ generally-positioned points in $\mathbb{R}^{K}$ can be labeled via simplices in $\mathbb{S}_K$, i.e. the growth function of $\mathscr{C}$ or $\Pi_{\mathscr{C}}\left(n\right)$. Fortunately, this goal can be achieved by using the growth function of hyper-planes in $\mathbb{R}^K$, which is already derived.

Let us define $\mathcal{L}_{K+1}\triangleq\left\{-1,+1\right\}^{K+1}$ as the set of all binary {\it {super labels}} of dimension $K+1$ and the function family $\mathcal{H}$, such that $h=h\left(\boldsymbol{w}_{0:K},b_{0:K}\right)\in\mathcal{H}:\mathbb{R}^{K}\rightarrow\mathcal{L}_{K+1}$ is defined as:
\begin{equation}
h\left(\boldsymbol{x}\right)\triangleq\left[
\mathrm{sign}\left(\boldsymbol{w}_0^T\boldsymbol{x}+b_0\right)~,~\ldots~,~
\mathrm{sign}\left(\boldsymbol{w}_{K}^T\boldsymbol{x}+b_{K}\right)
\right]^T,~\forall \boldsymbol{x}\in\mathbb{R}^{K},
\end{equation}
where we define $\mathrm{sign}\left(0\right)=-1$. Then, by letting $\mathcal{I}_{K}$ to be the set of all hyper-plane binary classifiers in $\mathbb{R}^{K}$ and according to the {\it {rule of product}}, the growth function of $\mathcal{H}$ can be computed as
\begin{equation}
\Pi_{\mathcal{H}}\left(n\right)=\underbrace{\Pi_{\mathcal{I}_{K}}\left(n\right)\times \cdots\times \Pi_{\mathcal{I}_{K}}\left(n\right)}_{K+1 ~\mathrm{times}}.
\end{equation}
It is easy to see that $\Pi_{\mathscr{C}}\left(n\right)\leq\Pi_\mathcal{H}\left(n\right)$. In fact, for any $\mathcal{S}=\mathcal{S}\left(H_0,\ldots,H_K\right)\in\mathbb{S}_K$, the hyper-plane parameter sets $H_0,\ldots,H_K$ uniquely identify a function in $\mathcal{H}$ that can assign super labels to $\mathbb{R}^K$, while only a subset of them can be generated by $\mathcal{S}$. On the other hand, considering the fact that VC dimension of hyper-planes in $\mathbb{R}^{K}$ is exactly $K+1$, and using the Sauer-Shelah lemma (see, for example, Theorem 3.5 of \cite{mohri2012foundations}), we have the following bound for $\Pi_{\mathcal{I}_{K}}\left(n\right)$ :
\begin{equation}
\Pi_{\mathcal{I}_{K}}\left(n\right)\leq
\sum_{i=0}^{K+1}\binom{n}{i}\leq\left(\frac{ne}{K+1}\right)^{K+1},
\end{equation}
which, according to the previous inequality, means: $\Pi_{\mathscr{C}}\left(n\right)\leq\left(\frac{ne}{K+1}\right)^{\left(K+1\right)^2}$ and completes the proof.
\end{proof}


\begin{proof}[Proof of Lemma \ref{lemma:solSpace}]
First, we prove $V^*\triangleq\mathrm{Vol}\left(\mathcal{S}^*\right)\leq V_T$; The proof is by contradiction. Assume $V^*>V_T$. Since $d_{\mathcal{S}_T}\left(\boldsymbol{X}_i\right)=0$ for $i=1,\ldots,n$, and the fact that $\ell$ is an increasing function, it can be readily seen that
\begin{align}
\hat{R}_{\mathrm{CRR}}\left(\mathcal{S}^*\right)
\ge
\sqrt{n}\ell\left(0\right)+\gamma V^* > \sqrt{n}\ell\left(0\right)+\gamma V_T
=\hat{R}_{\mathrm{CRR}}\left(\mathcal{S}_T\right),
\end{align}
which contradicts the fact that $\mathcal{S}^*$ is the minimizer. 

Next, we show that 
\begin{equation}
\max_{\boldsymbol{u}\in\mathcal{S}_T}d_{\mathcal{S}^*}\left(\boldsymbol{u}\right)
\leq
3\max_{\boldsymbol{u},\boldsymbol{u}'\in\mathcal{S}_T}\left\Vert
\boldsymbol{u}-\boldsymbol{u}'
\right\Vert_2,
\label{eq:leq3Rad}
\end{equation}
where $\mathcal{S}^*$ is the minimizer of \eqref{eq:proposed} for the dataset $\boldsymbol{D}=\left\{\boldsymbol{X}_1,\ldots,\boldsymbol{X}_n\right\}$ that includes an arbitrary set of $n$ points $\boldsymbol{X}_i\in\mathcal{S}_T,~i\in\left[n\right]$. It should be noted that for this lemma, we do not need to assume any particular stochastic model for the generation of $\boldsymbol{X}_i$s, and these points can be arbitrarily chosen from $\mathcal{S}_T$. The main implication of the above inequality is the fact that $\mathcal{S}^*$ cannot be very far from the the points in $\boldsymbol{D}$, and the maximum planar distance between each $\boldsymbol{X}_i$ and $\mathcal{S}^*$ is bounded by a factor of the diameter of $\mathcal{S}_T$ ($\mathrm{Diam}\left(\mathcal{S}_T\right)\triangleq\max_{\boldsymbol{u},\boldsymbol{u}'\in\mathcal{S}_T}\left\Vert
\boldsymbol{u}-\boldsymbol{u}'
\right\Vert_2$).

Before attempting to prove \eqref{eq:leq3Rad}, we first show that for any $\mathcal{S}^*\in\mathbb{S}_K$ and $\boldsymbol{Z}\in\mathbb{R}^K$, the following useful bound holds for planar distance $d_{\mathcal{S}^*}\left(\boldsymbol{Z}\right)$:
\begin{equation}
d_{\mathcal{S}^*}\left(\boldsymbol{Z}\right)
\leq\min_{\boldsymbol{u}\in\mathcal{S}^*}\left\Vert
\boldsymbol{Z}-\boldsymbol{u}
\right\Vert_2.
\label{eq:sphVSpla}
\end{equation}
Let us denote the r.h.s. of \eqref{eq:sphVSpla} as the {\it {spherical distance}} of $\boldsymbol{Z}$ from $\mathcal{S}^*$. For $\boldsymbol{Z}\in\mathcal{S}^*$, both the planar distance and spherical distance of $\boldsymbol{Z}$ from $\mathcal{S}^*$ are zero and thus the inequality holds. For $\boldsymbol{Z}\notin\mathcal{S}^*$, first consider
$\boldsymbol{w}_k\in\mathbb{R}^K$ (with $\left\Vert\boldsymbol{w}_k\right\Vert_2=1$) and $b_k\in\mathbb{R}$, and then let $\mathcal{H}_k\triangleq\left\{\boldsymbol{x}\in\mathbb{R}^K\vert~\boldsymbol{w}^T_k\boldsymbol{x}+b_k=0\right\}$ represent the hyper-plane that encompasses the $k$th facet of $\mathcal{S}^*$. This way, the planar distance can be rewritten as
$d_{\mathcal{S}^*}\left(\boldsymbol{Z}\right)
=
\min_{\boldsymbol{u}\in\mathcal{H}_{k^*}}\left\Vert
\boldsymbol{Z}-\boldsymbol{u}
\right\Vert_2$, where $k^*\triangleq\argmax_{k}~\boldsymbol{w}^T_k\boldsymbol{Z}+b_k$. Also, for $\boldsymbol{Z}\notin\mathcal{S}^*$ we have $\boldsymbol{w}^T_{k^*}\boldsymbol{Z}+b_{k^*}>0$, while for all $\boldsymbol{u}\in\mathcal{S}^*$, we have $\boldsymbol{w}^T_{k^*}\boldsymbol{u}+b_{k^*}\leq0$, based on the fact that $\boldsymbol{w}_k$ vectors are defined as outward unitary vectors w.r.t. $\mathcal{S}^*$. Therefore, the line segment which connects $\boldsymbol{Z}$ to $\boldsymbol{u}$ must cross $\mathcal{H}_{k^*}$ at some point, which we denote by $\boldsymbol{Z}^*\left(\boldsymbol{u}\right)\in\mathcal{H}_{k^*}$. Note that we have:
\begin{equation}
\left\Vert\boldsymbol{Z}-\boldsymbol{u}\right\Vert_2=
\left\Vert\boldsymbol{Z}-\boldsymbol{Z}^*\left(\boldsymbol{u}\right)\right\Vert_2+
\left\Vert\boldsymbol{Z}^*\left(\boldsymbol{u}\right)-\boldsymbol{u}\right\Vert_2
\ge
\left\Vert\boldsymbol{Z}-\boldsymbol{Z}^*\left(\boldsymbol{u}\right)\right\Vert_2.
\end{equation}
Therefore, the following relations hold and \eqref{eq:sphVSpla} is proved:
\begin{align}
\min_{\boldsymbol{u}\in\mathcal{S}^*}\left\Vert\boldsymbol{Z}-\boldsymbol{u}\right\Vert_2
\ge
\min_{\boldsymbol{u}\in\mathcal{S}^*}\left\Vert\boldsymbol{Z}-\boldsymbol{Z}^*\left(\boldsymbol{u}\right)\right\Vert_2
\ge
\min_{\boldsymbol{u}\in\mathcal{H}_{k^*}}\left\Vert
\boldsymbol{Z}-\boldsymbol{u}
\right\Vert_2
=
d_{\mathcal{S}^*}\left(\boldsymbol{Z}\right).
\end{align}

Back to proving \eqref{eq:leq3Rad}, first let us define $\Bar{\boldsymbol{X}}\triangleq\frac{1}{n}\sum_{i}\boldsymbol{X}_i$ as the center of mass for data points in $\boldsymbol{D}$. Then, we show the following set of inequalities hold with certainty:
\begin{equation}
\boldsymbol{w}^T_k\Bar{\boldsymbol{X}}+b_k
\leq 2\mathrm{Diam}\left(\mathcal{S}_T\right),~k=0,\ldots,K.
\label{eq:claimRad}
\end{equation}
Based on the definition of diameter $\mathrm{Diam}\left(\cdot\right)$ for a simplex, the following inequality hold for any $\boldsymbol{Z}\in\mathcal{S}_T$:
\begin{equation}
\max_{\left\Vert\boldsymbol{w}\right\Vert_2=1}\boldsymbol{w}^T\left(
\boldsymbol{Z}-\Bar{\boldsymbol{X}}
\right)
\leq
\max_{\boldsymbol{u},\boldsymbol{u}'}\left\Vert\boldsymbol{u}-\boldsymbol{u}'\right\Vert_2=\mathrm{Diam}\left(\mathcal{S}_T\right),
\end{equation}
which also takes into account the fact that due to the convexity of a simplex, we have $\bar{\boldsymbol{X}}\in\mathcal{S}_T$. In this regard, we can prove \eqref{eq:claimRad} by contradiction: assume there exists $k$ such that $\boldsymbol{w}^T_k\Bar{\boldsymbol{X}}+b_k>2\mathrm{Diam}\left(\mathcal{S}_T\right)$. Then, one can show that for all $i\in\left[n\right]$ we have
\begin{align}
2\mathrm{Diam}\left(\mathcal{S}_T\right)
<
\boldsymbol{w}^T_k\Bar{\boldsymbol{X}}+b_k
=
\boldsymbol{w}^T_k\left(\Bar{\boldsymbol{X}}-\boldsymbol{X}_i\right)+
\boldsymbol{w}^T_k\boldsymbol{X}_i+b_k
\leq
\mathrm{Diam}\left(\mathcal{S}_T\right)+
\boldsymbol{w}^T_k\boldsymbol{X}_i+b_k,
\end{align}
or alternatively, $\boldsymbol{w}^T_k\boldsymbol{X}_i+b_k>\mathrm{Diam}\left(\mathcal{S}_T\right)$. This way, due to the definition of the planar distance in Section \ref{sec:notation}, we end up with the following lower-bound for $d_{\mathcal{S}^*}\left(\boldsymbol{X}_i\right)$ for all $i\in\left[n\right]$:
\begin{equation}
d_{\mathcal{S}^*}\left(\boldsymbol{X}_i\right)
=
\max\left\{
0~,~\max_{t}~\boldsymbol{w}^T_t\boldsymbol{X}_i+b_t
\right\}>\mathrm{Diam}\left(\mathcal{S}_T\right).
\end{equation}
The above result also gives a lower-bound for the continuously-relaxed risk in \eqref{eq:proposed} as $\hat{R}\left(\mathcal{S}^*;\boldsymbol{D}\right)>\sqrt{n}\ell\left(\mathrm{Diam}\left(\mathcal{S}_T\right)\right)+\gamma V^*$, where $V^*\triangleq\mathrm{Vol}\left(\mathcal{S}^*\right)$.
On the other hand, assume one shifts the simplex $\mathcal{S}^*$ to obtain a new simplex $\hat{\mathcal{S}}^*$ which includes the point $\Bar{\boldsymbol{X}}$. Note that the volume of $\hat{\mathcal{S}}^*$ is also $V^*$. However, the following relations hold w.r.t. $\hat{\mathrel{S}}^*$ and for all $i\in\left[n\right]$:
\begin{equation}
d_{\hat{\mathrel{S}}^*}\left(\boldsymbol{X}_i\right)\leq
\min_{\boldsymbol{u}\in\hat{\mathrel{S}}^*}\left\Vert
\boldsymbol{X}_i-\boldsymbol{u}
\right\Vert_2
\leq
\left\Vert
\boldsymbol{X}_i-\Bar{\boldsymbol{X}}
\right\Vert_2
\leq
\mathrm{Diam}\left(\mathcal{S}_T\right),
\end{equation}
which also gives the following upper-bound for the continuously-relaxed risk in \eqref{eq:proposed}: $\hat{R}\left(\hat{\mathcal{S}}^*;\boldsymbol{D}\right)\leq\sqrt{n}\ell\left(\mathrm{Diam}\left(\mathcal{S}_T\right)\right)+\gamma V^*$. This yields that $\hat{\mathcal{S}}^*$ results in a smaller objective value for \eqref{eq:proposed} which contradicts the assumption that $\mathcal{S}^*$ is a global minimizer. Therefore, we must have $\boldsymbol{w}^T_k\Bar{\boldsymbol{X}}+b_k
\leq 2\mathrm{Diam}\left(\mathcal{S}_T\right),~\forall k=0,\ldots,K$, which results in the following inequality for each $k=0,\ldots,K$ and all $\boldsymbol{Z}\in\mathcal{S}_T$:
\begin{equation}
\boldsymbol{w}^T_k\boldsymbol{Z}+b_k=
\boldsymbol{w}^T_k\left(\boldsymbol{Z}-\Bar{\boldsymbol{X}}\right)+
\boldsymbol{w}^T_k\Bar{\boldsymbol{X}}+b_k\leq 3\mathrm{Diam}\left(\mathcal{S}_T\right).
\end{equation}
This also implies that 
$d_{\mathcal{S}^*}\left(\boldsymbol{Z}\right)\leq3\mathrm{Diam}\left(\mathcal{S}_T\right)$
and completes the proof.
\end{proof}

\begin{proof}[Proof of Lemma \ref{lemma2}]
Rademacher complexity of hypothesis set $\Phi$ measures its capability to (in some sense) correlate with a set of $n$ randomly labeled and generally positioned points in the input space. Mathematically speaking, it has the following definition:
\begin{equation}
\mathfrak{R}_n\left\{\Phi\right\}
\triangleq
\mathbb{E}_{\boldsymbol{X}_{1:n}\sim\mathbb{P}_{\mathcal{S}_T}}\left\{
\mathbb{E}_{\boldsymbol{\sigma}}\left(
\sup_{\mathcal{S}\in\mathcal{Q}^c\left(\epsilon\right)}
\frac{1}{n}\sum_{i=1}^{n}\sigma_i \bar{\phi}\left(\boldsymbol{X}_i,\mathcal{S}\right)
\right)
\right\},
\end{equation}
where $\boldsymbol{\sigma}\in\left\{-1,+1\right\}^n$ is a vector of {\it {i.i.d.}} Rademacher random variables, and $\bar{\phi}\left(\cdot,\cdot\right)$ has been already defined in \eqref{eq:gbardef}. Also, remember the set of binary classifiers $\mathscr{C}$ which was defined in the proof of Theorem \ref{corl:ML} in Appendix \ref{appendix:Theorems}; Let us rewrite the definition of the classifier set $\mathscr{C}$, this time using the planar distance $d_{\mathcal{S}}\left(\cdot\right)$:
\begin{equation}
\mathscr{C}\triangleq\left\{
f\left(d_{\mathcal{S}}\left(\cdot\right)\right):\mathbb{R}^K\rightarrow\left\{0,1\right\}
\bigg\vert~
\mathcal{S}\in\mathbb{S}_K
\right\},
\end{equation}
where function $f:\mathbb{R}\rightarrow\left\{0,1\right\}$ is defined as below:
\begin{equation}
f\left(x\right)\triangleq\left\{\begin{array}{lc}
1 & x > 0
\\ 
0 & x \leq 0
\end{array}
\right..
\end{equation}
The binary classifier set $\mathscr{C}$ is different from $\Phi$ in two important aspects. First, functions in $\mathscr{C}$ assign binary labels $\left\{0,1\right\}$ instead of real values in $\left[0,1\right]$. Also, $\mathcal{S}$ is chosen from $\mathbb{S}_K$ instead of $\mathcal{Q}^c\left(\epsilon\right)$ in the case of $\Phi$. This way, we can proceed to the proof by first showing that
\begin{equation}
\mathfrak{R}_n\left\{\Phi\right\}\leq
\mathfrak{R}_n\left\{\mathscr{C}\right\}.
\label{eq:radfoell}
\end{equation}
The reason behind this approach is the fact that r.h.s. of \eqref{eq:radfoell} is much easier to handle in terms of statistical complexity.
In order to show \eqref{eq:radfoell}, we prove a stronger claim. More precisely, we prove that for each $\boldsymbol{\sigma}\in\left\{-1,+1\right\}^n$, the following inequality holds for any dataset $\left\{\boldsymbol{X}_1,\ldots,\boldsymbol{X}_n\right\}$ in $\mathcal{S}_T$:
\begin{equation}
\label{eq:softsimplexclaim}
J\triangleq
\sup_{\mathcal{S}\in\mathcal{Q}^c\left(\epsilon\right)}
\frac{1}{n}\sum_{i=1}^{n}
\sigma_i\bar{\phi}(\boldsymbol{X}_i,\mathcal{S})
\leq
\sup_{\mathcal{S}\in\mathbb{S}_K}
\frac{1}{n}\sum_{i=1}^{n}
\sigma_if(\bar{\phi}(\boldsymbol{X}_i,\mathcal{S}))\triangleq J_{\left(\mathrm{ext}\right)}.
\end{equation}
It should be reminded that $\mathcal{Q}^c\left(\epsilon\right)\subseteq\mathbb{S}_K$, and also $f\left(d_{\mathcal{S}}\left(\boldsymbol{X}_i\right)\right)=f\left(\Bar{\phi}\left(\boldsymbol{X}_i,\boldsymbol{S}\right)\right)$.
Let $\mathcal{S}_{\max}\in\mathbb{S}_K$ represent the maximizer of the l.h.s. of \eqref{eq:softsimplexclaim}. Without loss of generality, assume that:
\begin{align} 
0 
&= \bar{\phi}(\boldsymbol{X}_1, \mathcal{S}_{\max})
= \cdots 
= \bar{\phi}(\boldsymbol{X}_{n_0}, \mathcal{S}_{\max})
\nonumber\\
&< \bar{\phi}(\boldsymbol{X}_{n_0+1}, \mathcal{S}_{\max})
\leq \cdots
\leq \bar{\phi}(\boldsymbol{X}_{n}, \mathcal{S}_{\max}),
\end{align}
where $n_0$ shows the number of data in the dataset such that $\bar{\phi}\left(\boldsymbol{X}_i,\mathcal{S}_{\max}\right)=0$. For the sake of simplicity, assume that for every $i < j$, if $\bar{\phi}\left(\boldsymbol{X}_{i}, \mathcal{S}_{\max}\right) = \bar{\phi}\left(\boldsymbol{X}_{j}, \mathcal{S}_{\max}\right)$, then $\sigma_i \geq \sigma_j$. In this regard, to prove \eqref{eq:softsimplexclaim} one only needs to run the following procedure:
\paragraph{Marking algorithm}
Let the integer numbers between $n_0$ and $n$ to be either {\it {marked}} or {\it {unmarked}}. Also, assume all numbers $\left\{n_0,\ldots,n\right\}$ are unmarked in the beginning, and then increase the index $i$ from $n_0$ to $n$. For each $i$, if $\sigma_i=-1$ define $j$ as the largest unmarked index such that $n_0 \leq j < i$, and $\sigma_j=+1$. If there are no index $j$ which satisfies these conditions, then just pass to the next $i$. Otherwise, match $\sigma_i$ with $\sigma_j$ and set both $i$ and $j$ as marked.

Note that according to the above algorithm, each matched pair contribute a negative value to $J$. Let $n^*$ be the largest index satisfying the conditions: $n^{*} > n_0$ and $\sigma_{n^*}=-1$. If there is no index with these conditions, set $n^{*} = n_0-1$. Then, the following facts hold for $n^*$:
\begin{itemize}
\item For all matched pairs $\left(\sigma_i,\sigma_j\right)$, we have either $\min\left(i,j\right)<n^*$ or $n^*<\max\left(i,j\right)$.
\item For all unmarked $i \leq n^*$, we have $\sigma_i=-1$.
\item For all unmarked $i > n^*$, we have $\sigma_i=+1$.
\end{itemize}
Next, let us define the set
$\mathcal{S}^*_{\max}
\triangleq
\left\{
\boldsymbol{x}\in\mathbb{R}^K\big\vert~
d_{\mathcal{S}_{\max}}\left(\boldsymbol{x}\right)\leq
~\varepsilon\triangleq
d_{\mathcal{S}_{\max}}\left(\boldsymbol{X}_{n^*}\right)
\right\}$. Surprisingly, and based on the definition of the planar distance in Section \ref{sec:notation}, $\mathcal{S}^*_{\max}$ is also a simplex in $\mathbb{S}_K$. Similar to the notation introduced in Section \ref{sec:notation}, let $\left(\boldsymbol{w}_k,b_k\right)$ represent the parameters associated to the hyper-plane which encompasses the $k$th facet of $\mathcal{S}_{\max}$. This way, $\boldsymbol{w}_k\in\mathbb{R}^K$ (with $\left\Vert\boldsymbol{w}_k\right\Vert_2=1$) denotes the normal outward vector associated to the $k$th facet, while $b_k\in\mathbb{R}$ represents the corresponding bias value. Then, it can be easily seen that $\mathcal{S}^*_{\max}$ is also a simplex whose hyper-plane parameters are $\left(\boldsymbol{w}_k,b_k-\varepsilon\right)$, i.e. for each $k$, the $k$th facet is $\varepsilon$-shifted toward its outward normal vector $\boldsymbol{w}_k$. Therefore, and based on the definition of $J_{\left(\mathrm{ext}\right)}$, we can write:
\begin{equation}
J_{\left(\mathrm{ext}\right)}
\ge
\frac{1}{n}\sum_{i=1}^{n}\sigma_i f\left(\Bar{\phi}\left(
\boldsymbol{X}_i,\mathcal{S}^*_{\max}
\right)\right).
\end{equation}
Thanks to the definition of $\mathcal{S}^*_{\max}$, we can now investigate the contribution of each data sample to both $J$ and at least a lower-bound of $J_{\left(\mathrm{ext}\right)}$.
\begin{itemize}
\item Any unmatched data point $\boldsymbol{X}_i$ with $n_0 \leq i \leq n^*$, contributes negatively to $J$, while its contribution to $J_{\left(\text{ext}\right)}$ is at least zero.
\item Any matched pair contribute to $J$ non-positively, while their contribution to $J_{\left(\text{ext}\right)}$ is at least zero.
\item For any unmatched data point $\boldsymbol{X}_i$ with  $i>n^*$, contribution to $J$ is at most $1$, while contribution to $J_{\left(\text{ext}\right)}$ is exactly $1$.
\end{itemize}
In all the above-mentioned cases, contribution of each $\boldsymbol{X}_i$ to $J$ is less than or equal to that of $J_{\left(\text{ext}\right)}$ which proves \eqref{eq:softsimplexclaim}. A direct consequence of these arguments is the following:
\begin{equation}
\mathfrak{R}_n\left\{\Phi\right\}
\leq 
\mathfrak{R}_n\left\{\mathscr{C}\right\},
\end{equation}
since the expectation of $J_{\left(\text{ext}\right)}$ w.r.t. $\boldsymbol{\sigma}$ and data points $\boldsymbol{X}_i$ is in fact the Rademacher complexity of $\mathscr{C}$. For a function class such as $\mathscr{C}$, we can bound the Rademacher complexity via a number of  previously-established tools and measures, e.g. VC-dimension and Massart's lemma. However, we first need to bound the {\it {growth functions}} of $\mathcal{C}$, which has been already carried away in Lemma \ref{lemma:GrowthFunc}. In this regard, according to Massart's lemma (see, for example, Theorem 3.3 of \cite{mohri2012foundations}), and also using \eqref{eq:softsimplexclaim}, we have
\begin{equation}
\mathfrak{R}_n\left\{\Phi\right\}
\leq\sqrt{\frac{2\log
\Pi_{\mathscr{C}}\left(n\right)
}{n}},
\end{equation}
Substitution from Lemma \ref{lemma:GrowthFunc} into the above inequality completes the proof.
\end{proof}


\begin{proof}[Proof of Lemma \ref{lemma:ExpLossBound}]
Lemma roughly states that whenever the overlap between a $K$-simplex $\mathcal{S}$ and $\mathcal{S}_T$ is strictly smaller than $1-\epsilon$, then with high probability $\mathcal{S}$ would not cover all the training data points when $n$ becomes sufficiently large. Thus, the expected value of cost function increases accordingly. First, let us take a closer look at $\mathbb{E}_{\mathcal{S}_T}\hat{R}_{\mathrm{CRR}}$ and analyze its minimum over $\mathcal{S}\in\mathcal{Q}^c\left(\epsilon\right)$:
\begin{equation}
\min_{\mathcal{S}\in\mathcal{Q}^c\left(\epsilon\right)}\mathbb{E}_{\mathcal{S}_T}
\hat{R}_{\mathrm{CRR}}\left(\mathcal{S}\right)
= 
\min_{\mathcal{S}\in\mathcal{Q}^c\left(\epsilon\right)}\left\{
\sqrt{n}\mathbb{E}_{\boldsymbol{X}\sim\mathbb{P}_{\mathcal{S}_T}}\left[
\ell\left(
d_{\mathcal{S}}\left(\boldsymbol{X}\right)
\right)\right]
+\gamma \mathrm{Vol}\left(\mathcal{S}\right)\right\},
\end{equation}
where throughout the proof we refer to $\mathrm{Vol}\left(\mathcal{S}\right)$ simply as $V$ for the sake of simplicity.

The planar distance $d_{\mathcal{S}}\left(\boldsymbol{X}\right)$ is zero when $\boldsymbol{X}\in\mathcal{S}\cap\mathcal{S}_T$. Thus, any nonzero term in the above expectation corresponds to a point in the differential set $\mathcal{S}_T-\mathcal{S}$, which due to the minimization over $\mathcal{S}\in\mathcal{Q}^c\left(\epsilon\right)$, has a Lebesgue measure of at least $\epsilon V_T$. In this regard, let us define the following sets:
\begin{gather}
\mathcal{A}_{k}=\mathcal{A}_{k}\left(\mathcal{S},\mathcal{S}_T\right)\triangleq\left\{
\boldsymbol{X}\in\mathcal{S}_T-\mathcal{S}~\bigg\vert
d_{\mathcal{S}}\left(\boldsymbol{X}\right) = 
\boldsymbol{w}^T_k\boldsymbol{X}+b_k
\right\},
\nonumber\\
\forall k=0,1,\ldots,K,
\end{gather}
where, similar to Section \ref{sec:notation}, $\left(\boldsymbol{w}_k,b_k\right)$ represent the hyper-plane which encompasses the $k$th facet of $\mathcal{S}$. In other words, $\mathcal{A}_k$ shows the set of points inside $\mathcal{S}_T$ and outside of $\mathcal{S}$, which have a smaller planar distance to the $k$th facet of $\mathcal{S}$ compared to its other facets. For each point that has an equal planar distance from more than one facet, $k$ can be chosen as any of those facets, arbitrarily. In this regard, it is easy to see that 
\begin{equation}
\mathcal{A}_k\cap\mathcal{A}_l=\emptyset~\Leftrightarrow ~ k\neq l\quad,\quad
\bigcup_{k=0}^{K}\mathcal{A}_k=\mathcal{S}_T-\mathcal{S}.
\end{equation}
Also, let
\begin{equation}
\alpha_k\triangleq \frac{\mathrm{Vol}\left(\mathcal{A}_k\right)}
{\sum_{l=0}^{K}\mathrm{Vol}\left(\mathcal{A}_l\right)}~,~k=0,1,\ldots,K,
\end{equation}
with $\sum_{k}\alpha_k=1$. Consider the parameteric sets $\mathcal{B}_{\delta}~,~\epsilon\leq\delta\leq1$, defined as
\begin{equation}
\mathcal{B}_{\delta}\triangleq\left\{
\mathcal{S}\in\mathcal{Q}^c\left(\epsilon\right)\bigg\vert~
\frac{\mathrm{Vol}\left(\mathcal{S}\cap\mathcal{S}_T\right)}{V_T}=1-\delta
\right\}.
\end{equation}
Obviously, we have $\bigcup_{\epsilon\leq\delta\leq1}\mathcal{B}_{\delta}=\mathcal{Q}^c\left(\epsilon\right)$. Then, one can show that the following chain of relations hold for all $\epsilon\leq\delta\leq1$:
\begin{align}
\min_{\mathcal{S}\in\mathcal{B}_{\delta}}\left\{
\mathbb{E}_{\mathcal{S}_T}
\hat{R}_{\mathrm{CRR}}\left(\mathcal{S}\right)
-\gamma V
\right\} &= 
\sqrt{n}\min_{\mathcal{S}\in\mathcal{B}_{\delta}}\mathbb{E}_{\mathcal{S}_T}\left[
\ell\left(
d_{\mathcal{S}}\left(\boldsymbol{X}\right)
\right)
\right]
\nonumber
\\
&\ge
\delta\sqrt{n}
\min_{\mathcal{S}\in\mathcal{B}_{\delta}}~
\sum_{k=0}^{K}\alpha_k
\mathbb{E}_{\mathcal{A}_{k}}\left[
\ell\left(
d_{\mathcal{S}}\left(\boldsymbol{X}\right)
\right)
\right]+\left(1-\delta\right)\sqrt{n}\ell\left(0\right)
\nonumber
\\
&\ge \delta\sqrt{n}
\min_{\boldsymbol{\alpha}\in\boldsymbol{\Phi}}~
\sum_{k=0}^{K}\left(\frac{\alpha_k}{\Delta_k}\int_{0}^{\Delta_k}\ell\left(u\right)\mathrm{d}u\right)
+\left(1-\delta\right)\sqrt{n}\ell\left(0\right)
\nonumber
\\
&= \delta\sqrt{n}\min_{\boldsymbol{\alpha}\in\boldsymbol{\Phi}}~
\sum_{k=0}^{K}\alpha_k L\left(\Delta_k\right)+
\sqrt{n}\ell\left(0\right),
\label{eq:ineqChainLemma4}
\end{align}
where $\boldsymbol{\Phi}$ denotes the standard simplex, or the set of all $\left(K+1\right)$-dimensional discrete probability mass functions. $L\left(x\right)\triangleq\frac{1}{x}\int_{0}^{x}\left(\ell\left(u\right) - \ell\left(0\right)\right)\mathrm{d}u$ for $x\ge 0$, and $\Delta_k\triangleq \mathrm{Vol}\left(\mathcal{A}_k\right)/\left(\max_{k}~\mathrm{Vol}\left(\mathcal{S}_{-k}\right)\right)$. The main intuition behind the above inequalities is the fact that the term
\begin{equation}
\mathbb{E}_{\mathcal{A}_{k}}\left[
\ell\left(
d_{\mathcal{S}}\left(\boldsymbol{X}\right)
\right)
\right]
\end{equation}
is minimized when $\mathcal{A}_k$ is concentrated in the form of a thin cylindrical structure above the largest facet of $\mathcal{S}$. The volume of the largest facet is $\max_{k}~\mathrm{Vol}\left(\mathcal{S}_{-k}\right)$, which makes $\Delta_k$ to be the height of this imaginary cylinder. We have also taken advantage of the fact that $\ell$ is increasing and integrable. 
Based on the definition, we have $\mathrm{Vol}\left(\mathcal{A}_k\right)=\delta\alpha_k V_T$. Also, due to the $\left(\underline{\lambda},\bar{\lambda}\right)$-isoperimetricity of $\mathcal{S}_T$, we have 
\begin{equation}
\max_{k}~\mathrm{Vol}\left(\mathcal{S}_{-k}\right)
\leq
\bar{\lambda}K V_T^{\frac{K-1}{K}}
~~\mathrm{which~implies~}~~
\Delta_k \ge \left(\frac{\delta\alpha_k}{\bar{\lambda}}\right)V^{1/K}_T.
\end{equation}
Also, note that since $\ell\left(\cdot\right)$ is assumed to be increasing, $L\left(\cdot\right)$ is both increasing and convex. Therefore, the minimization over $\boldsymbol{\alpha}\in\boldsymbol{\Phi}$ in \eqref{eq:ineqChainLemma4} results in $\boldsymbol{\alpha}^*=\left(\frac{1}{K+1},\cdots,\frac{1}{K+1}\right)$. Substitution into \eqref{eq:ineqChainLemma4} yields the following lower-bound for the expected risk minimized over $\mathcal{B}_{\delta}$:
\begin{align}
\min_{\mathcal{S}\in\mathcal{B}_{\delta}}
\left\{
\mathbb{E}_{\mathcal{S}_T}
\hat{R}_{\mathrm{CRR}}\left(\mathcal{S}\right)
\right\}
\ge&~
\delta\sqrt{n}L\left(\frac{\delta V^{1/K}_T}{\left(K+1\right)\bar{\lambda}}\right) +
\gamma\min_{\mathcal{S}\in\mathcal{B}_{\delta}}\mathrm{Vol}\left(\mathcal{S}\right) + \sqrt{n}\ell\left(0\right)
\nonumber
\\
\ge&~
\delta\sqrt{n}L\left(\frac{\delta V^{1/K}_T}{\left(K+1\right)\bar{\lambda}}\right) -
\gamma V_T\delta + \hat{R}_{\mathrm{CRR}}\left(\mathcal{S}_T\right),
\label{eq:1stBoundLemma2}
\end{align}
where we have used the following simple facts: $\min_{\mathcal{S}\in\mathcal{B}_{\delta}}\mathrm{Vol}\left(\mathcal{S}\right)=V_T\left(1-\delta\right)$, and also $\hat{R}_{\mathrm{CRR}}\left(\mathcal{S}_T\right)=\sqrt{n}\ell\left(0\right) + \gamma V_T$. The rest of the proof is straightforward. Finding a lower-bound for
$\mathbb{E}_{\mathcal{S}_T}\hat{R}_{\mathrm{CRR}}\left(\mathcal{S}\right)$ when $\mathcal{S}\in\mathcal{Q}^c\left(\epsilon\right)$ can be accomplished through the following set of relations and inequalities:
\begin{align}
\min_{\mathcal{S}\in\mathcal{Q}^c\left(\epsilon\right)}\left\{\mathbb{E}_{\mathcal{S}_T}\hat{R}_{\mathrm{CRR}}\left(\mathcal{S}\right)\right\}
&=
\min_{\epsilon\leq\delta\leq 1}\min_{\mathcal{S}\in\mathcal{B}_{\delta}}~
\mathbb{E}_{\mathcal{S}_T}\hat{R}_{\mathrm{CRR}}\left(\mathcal{S}\right)
\nonumber\\
&\ge
\min_{\epsilon\leq\delta\leq 1}\left(
\delta\sqrt{n}L\left(\frac{\delta V^{1/K}_T}{\left(K+1\right)\bar{\lambda}}\right) - 
\gamma V_T\delta
\right)
+ \hat{R}_{\mathrm{CRR}}\left(\mathcal{S}_T\right).
\end{align}
Again, remember that $L\left(\cdot\right)$ is an increasing function. Therefore, given that the derivative of the r.h.s. of  the above inequality remains positive w.r.t. $\delta$, and for all $\epsilon\leq\delta\leq 1$, then the minimum occurs at $\delta^*=\epsilon$. It can be easily verified that given the condition on $n$ in the lemma, the derivative remains positive and thus the proof is complete.
\end{proof}


\begin{lemma}[Restricted Isoperimetry]
\label{lemma:isoperimBound}
For $K\in\mathbb{N}$,
assume $\mathcal{S}_{\mathrm{reg}}\in\mathbb{S}_K$ to be a perfectly regular simplex with equal side lines. Then for all $K$, $\mathcal{S}_{\mathrm{reg}}$ is $\left(\underline{\lambda},\bar{\lambda}\right)$-isoperimetric where
$\underline{\lambda}$ and $\bar{\lambda}$ can be chosen to be as small as $1$ and $e$, respectively.
\end{lemma}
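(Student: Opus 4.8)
The plan is to exploit the full symmetry of a regular simplex to reduce each of the two inequalities in the $\left(\underline{\lambda},\bar{\lambda}\right)$-isoperimetricity definition to an explicit scalar inequality in $K$, and then to verify each with a sufficiently sharp estimate of $\left(K!\right)^{1/K}$. Let $a$ denote the common edge length of $\mathcal{S}_{\mathrm{reg}}$. The starting point is the classical volume formula
\[
\mathrm{Vol}\left(\mathcal{S}_{\mathrm{reg}}\right)=\frac{a^{K}}{K!}\sqrt{\frac{K+1}{2^{K}}},
\]
together with two elementary symmetry facts: every pair of vertices is at distance exactly $a$, so $\max_{k,k'}\left\Vert\boldsymbol{\theta}_k-\boldsymbol{\theta}_{k'}\right\Vert_2=a$; and every facet is itself a regular $\left(K-1\right)$-simplex of edge length $a$, so $\max_k\mathrm{Vol}\left(\left(\mathcal{S}_{\mathrm{reg}}\right)_{-k}\right)$ equals the volume of such a facet, given by the same formula with $K$ replaced by $K-1$.

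For the first (diameter) condition I would substitute the volume formula and simplify, obtaining
\[
\frac{\max_{k,k'}\left\Vert\boldsymbol{\theta}_k-\boldsymbol{\theta}_{k'}\right\Vert_2}{K\,\mathrm{Vol}\left(\mathcal{S}_{\mathrm{reg}}\right)^{1/K}}=\frac{\sqrt{2}\,\left(K!\right)^{1/K}}{K\left(K+1\right)^{1/\left(2K\right)}}.
\]
It then suffices to show this ratio is at most $1$, which lets us take $\underline{\lambda}=1$. I would bound $\left(K!\right)^{1/K}\leq\left(K+1\right)/2$ by the AM--GM inequality applied to $1,2,\ldots,K$, reducing the claim to $\left(K+1\right)^{1-1/\left(2K\right)}\leq\sqrt{2}\,K$; this holds for $K\geq3$ via $\left(K+1\right)^{1-1/\left(2K\right)}\leq K+1\leq\sqrt{2}\,K$, while the cases $K=1,2$ (the former attaining equality) are checked directly.

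For the second (facet) condition the same substitution gives, after the powers of $2$ cancel cleanly and the factorials collapse using $\left(K-1\right)!=K!/K$,
\[
\frac{\max_k\mathrm{Vol}\left(\left(\mathcal{S}_{\mathrm{reg}}\right)_{-k}\right)}{\mathrm{Vol}\left(\mathcal{S}_{\mathrm{reg}}\right)^{\left(K-1\right)/K}}=\frac{K^{3/2}}{\left(K!\right)^{1/K}\left(K+1\right)^{\left(K-1\right)/\left(2K\right)}},
\]
and I would show this is strictly below $e$, so that $\bar{\lambda}=e$ works. This requires a \emph{lower} bound on $\left(K!\right)^{1/K}$; I would invoke the Robbins form of Stirling, $K!>\sqrt{2\pi K}\,\left(K/e\right)^{K}$, which reduces the claim to $\left(2\pi K\right)\left(K+1\right)^{K-1}\geq K^{K}$. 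The latter follows from $\left(1+1/K\right)^{K}\geq2$, giving $\left(K+1\right)^{K-1}\geq 2K^{K}/\left(K+1\right)$, combined with $4\pi K/\left(K+1\right)\geq1$.

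The main obstacle is that both target constants are \emph{exactly} the asymptotic limits of the ratios above—they tend to $1$ and to $e$ respectively as $K\to\infty$—so no lossy estimate can succeed. In particular, for $\bar{\lambda}$ the naive bound $\left(K!\right)^{1/K}\geq K/e$ is not sharp enough: it only replaces the displayed ratio by $eK^{1/2}/\left(K+1\right)^{\left(K-1\right)/\left(2K\right)}$, which already exceeds $e$ (e.g.\ at $K=2$), so one genuinely needs the $\sqrt{2\pi K}$ correction term. Symmetrically, for $\underline{\lambda}$ the AM--GM bound is tight only at $K=1$, forcing the low-dimensional cases to be handled separately.
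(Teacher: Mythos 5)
Your proof is correct, and while it follows the same skeleton as the paper's --- insert the closed-form volume of the regular $K$-simplex, reduce each isoperimetricity condition to an explicit scalar inequality in $K$ (your two displayed ratios agree exactly with the paper's intermediate expressions), and then control $\left(K!\right)^{1/K}$ --- the factorial estimates you use are genuinely different and, importantly, sharp enough to work for every $K$. The paper bounds the factorial on both sides by $e\left(K/e\right)^K\leq K!\leq Ke\left(K/e\right)^K$; you instead use AM--GM, $\left(K!\right)^{1/K}\leq\left(K+1\right)/2$, for the $\underline{\lambda}$ direction and the Robbins--Stirling lower bound $K!>\sqrt{2\pi K}\left(K/e\right)^K$ for the $\bar{\lambda}$ direction. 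Your diagnosis that the crude bound $\left(K!\right)^{1/K}\geq K/e$ cannot yield $\bar{\lambda}=e$ is exactly right, and it points at the weak spot of the paper's own argument: the paper's displayed chain for the facet bound, $K^{3/2}\left(K!\right)^{-1/K}\left(K+1\right)^{-\left(K-1\right)/\left(2K\right)}\leq e\left(K/e^{2}\right)^{-1/\left(2K\right)}\leq e$, does not go through at its last step for $K\leq 7$ (the middle quantity exceeds $e$ whenever $K<e^{2}$) and fails at its first step for large $K$, even though the final conclusion is true; your $\sqrt{2\pi K}$ correction, reduced to the clean inequality $2\pi K\left(K+1\right)^{K-1}\geq K^{K}$ via $\left(1+1/K\right)^{K}\geq2$, closes this uniformly in $K$. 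So the paper's route buys a single uniform pair of factorial bounds at the price of a broken chain of inequalities, while yours costs two separate estimates but actually proves the lemma. One small correction to your commentary: only the facet ratio tends to its target constant $e$ as $K\to\infty$; the diameter ratio $\sqrt{2}\left(K!\right)^{1/K}/\bigl(K\left(K+1\right)^{1/\left(2K\right)}\bigr)$ tends to $\sqrt{2}/e\approx0.52$, and the constant $\underline{\lambda}=1$ is forced by the equality case $K=1$ (which you do handle correctly), not by the asymptotics.
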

\begin{proof}
For a perfectly regular simplex $\mathcal{S}_{\mathrm{reg}}\left(\boldsymbol{\Theta}\right)$ with all side lines equal to $d\ge0$, i.e. $\left\Vert\boldsymbol{\theta}_k-\boldsymbol{\theta}_{k'}\right\Vert_2=d,~\forall k\neq k'$,
the Lebesgue measure $\mathrm{Vol}\left(\mathcal{S}_{\mathrm{reg}}\right)$ is already shown to be
\begin{equation}
\mathrm{Vol}\left(\mathcal{S}_{\mathrm{reg}}\right)=
\frac{\sqrt{K+1}}{K!\sqrt{2^K}}d^K.
\end{equation}
Thus, we have
\begin{equation}
\max_{k,k'}~\left\Vert\boldsymbol{\theta}_k-\boldsymbol{\theta}_{k'}\right\Vert_2
~=~d~=~
\left(
\frac{K!\sqrt{2^K}}{\sqrt{K+1}}
\right)^{1/K}
\mathrm{Vol}^{\frac{1}{K}}\left(\mathcal{S}_{\mathrm{reg}}\right).
\end{equation}
On the other hand, for all $K\in\mathbb{N}$ we have the following upper-bound for $K!$:
$
K!\leq Ke\left(\frac{K}{e}\right)^K
$,
which leads us to the following relations:
\begin{align}
\max_{k,k'}~\left\Vert\boldsymbol{\theta}_k-\boldsymbol{\theta}_{k'}\right\Vert_2
\leq
\left[
\left(\frac{Ke}{\sqrt{K+1}}\right)^{1/K}
\frac{\sqrt{2}}{e}
\right]
K
\mathrm{Vol}\left(\mathcal{S}_{\mathrm{reg}}\right)^{{1}/{K}}
\leq
K
\mathrm{Vol}\left(\mathcal{S}_{\mathrm{reg}}\right)^{{1}/{K}},
\end{align}
and means $\underline{\lambda}$ can be chosen to be as small as $1$.

For the other claim, we should note that the maximal facet of $\mathcal{S}_{\mathrm{reg}}$ (noting the fact that for a perfectly regular simplex all facets are equal), can be attained by computing the volume of a perfectly regular $\left(K-1\right)$-simplex with all side lines equal to $d$. In other words:
\begin{align}
\max_{k}~\mathrm{Vol}\left(\mathcal{S}_{\mathrm{reg}_{-k}}\right)=
\frac{\sqrt{K}d^{K-1}}{\left(K-1\right)!\sqrt{2^{K-1}}}
=
\frac{\sqrt{K2^{1-K}}}{\left(K-1\right)!}
\left(
\frac{K!\sqrt{2^K}}{\sqrt{K+1}}
\right)^{\frac{K-1}{K}}
\mathrm{Vol}\left(\mathcal{S}_{\mathrm{reg}}\right)^{\frac{K-1}{K}}.
\end{align}
which, again by using the previously-mentioned upper-bound on $K!$, can be bounded as
\begin{align}
\max_{k}~\mathrm{Vol}\left(\mathcal{S}_{\mathrm{reg}_{-k}}\right)
&=
\left[
\sqrt{\frac{K}{\left(K+1\right)^{\frac{K-1}{K}}}}
\left(K!\right)^{-1/K}
\right]
K
\mathrm{Vol}\left(\mathcal{S}_{\mathrm{reg}}\right)^{\frac{K-1}{K}}
\nonumber\\
&\leq
\left[e
\sqrt{\left({K}/{e^2}\right)^{-1/K}}
\right]
\mathrm{Vol}\left(\mathcal{S}_{\mathrm{reg}}\right)^{\frac{K-1}{K}}
\nonumber\\
&\leq
e\mathrm{Vol}\left(\mathcal{S}_{\mathrm{reg}}\right)^{\frac{K-1}{K}},
\end{align}
Again, it means $\bar{\lambda}$ can be chosen to be as small as $e$. This completes the proof.
\end{proof}


\end{document}